\theoremstyle{plain}
\newtheorem{theorem}{Theorem}[section]
\newtheorem{lemma}[theorem]{Lemma}
\newtheorem{corollary}[theorem]{Corollary}
\theoremstyle{definition}
\newtheorem{definition}[theorem]{Definition}
\newtheorem{assumption}[theorem]{Assumption}
\theoremstyle{remark}
\newcommand{\argmin}{\mathop{\rm arg~min}\limits}
\DeclareMathOperator{\tr}{tr}
\icmltitlerunning{Accelerating Multiple Wasserstein Gradient Flow for Multi-objective Distributional Optimization}
\begin{document}

\twocolumn[
  \icmltitle{Accelerated Multiple Wasserstein Gradient Flows for Multi-objective Distributional Optimization}

  % It is OKAY to include author information, even for blind submissions: the
  % style file will automatically remove it for you unless you've provided
  % the [accepted] option to the icml2026 package.

  % List of affiliations: The first argument should be a (short) identifier you
  % will use later to specify author affiliations Academic affiliations
  % should list Department, University, City, Region, Country Industry
  % affiliations should list Company, City, Region, Country

  % You can specify symbols, otherwise they are numbered in order. Ideally, you
  % should not use this facility. Affiliations will be numbered in order of
  % appearance and this is the preferred way.
  \icmlsetsymbol{equal}{*}

  \begin{icmlauthorlist}
    \icmlauthor{Dai Hai Nguyen}{xxx}
    \icmlauthor{Dung Duc Nguyen}{zzz}
    \icmlauthor{Atsuyoshi Nakamura}{xxx}
    \icmlauthor{Hiroshi Mamitsuka}{yyy}
    %\icmlauthor{Firstname5 Lastname5}{yyy}
    %\icmlauthor{Firstname6 Lastname6}{sch,yyy,comp}
    %\icmlauthor{Firstname7 Lastname7}{comp}
    %\icmlauthor{}{sch}
    %\icmlauthor{Firstname8 Lastname8}{sch}
   % \icmlauthor{Firstname8 Lastname8}{yyy,comp}
    %\icmlauthor{}{sch}
    %\icmlauthor{}{sch}
  \end{icmlauthorlist}

  \icmlaffiliation{xxx}{Graduate School of Information Science and Technology, Hokkaido University, Japan}
  \icmlaffiliation{yyy}{Bioinformatics Center, Kyoto University, Japan}
  \icmlaffiliation{zzz}{Institute of Information Technology, Vietnam Academy of Science and Technology, Vietnam}

  \icmlcorrespondingauthor{Dai Hai Nguyen}{hai@ist.hokudai.ac.jp or haidnguyen0909@gmail.com}
  %\icmlcorrespondingauthor{Firstname2 Lastname2}{first2.last2@www.uk}

  % You may provide any keywords that you find helpful for describing your
  % paper; these are used to populate the "keywords" metadata in the PDF but
  % will not be shown in the document
  \icmlkeywords{Bayesian inference, Nesterov's acceleration, multi-objective optimization}

  \vskip 0.3in
]

% this must go after the closing bracket ] following \twocolumn[ ...

% This command actually creates the footnote in the first column listing the
% affiliations and the copyright notice. The command takes one argument, which
% is text to display at the start of the footnote. The \icmlEqualContribution
% command is standard text for equal contribution. Remove it (just {}) if you
% do not need this facility.

% Use ONE of the following lines. DO NOT remove the command.
% If you have no special notice, KEEP empty braces:
\printAffiliationsAndNotice{}  % no special notice (required even if empty)
% Or, if applicable, use the standard equal contribution text:
% \printAffiliationsAndNotice{\icmlEqualContribution}

\begin{abstract}
  We study multi-objective optimization over probability distributions in Wasserstein space. Recently, \citet{10.5555/3762387.3762523} introduced Multiple Wasserstein Gradient Descent (MWGraD) algorithm, which exploits the geometric structure of Wasserstein space to jointly optimize multiple objectives. Building on this approach, we propose an accelerated variant, A-MWGraD, inspired by Nesterov's acceleration. We analyze the continuous-time dynamics and establish convergence to weakly Pareto optimal points in probability space. Our theoretical results show that A-MWGraD achieves a convergence rate of $\mathcal{O}(1/t^2)$ for geodesically convex objectives and $\mathcal{O}(e^{-\sqrt{\beta}t})$ for $\beta$-strongly geodesically convex objectives, improving upon the $\mathcal{O}(1/t)$ rate of MWGraD in the geodesically convex setting. We further introduce a practical kernel-based discretization for A-MWGraD and demonstrate through numerical experiments that it consistently outperforms MWGraD in convergence speed and sampling efficiency on multi-target sampling tasks.
\end{abstract}

\section{Introduction}
We study multi-objective optimization over the space of probability distributions. Let $F_{1}, F_{2},\dots, F_{K}:\mathcal{P}_{2}(\mathcal{X})\to \mathbb{R}$ be a set of $K$ objective functionals ($K\geq 2$) defined on the space $\mathcal{P}_{2}(\mathcal{X})$ of probability distributions on $\mathcal{X} \subseteq \mathbb{R}^{d}$ with finite second-order moment. Our goal is to find an optimal distribution $\rho\in \mathcal{P}_{2}(\mathcal{X})$ that minimizes the vector-valued objective
\begin{equation}  
\label{definition:MODO}
\min_{\rho\in\mathcal{P}_{2}(\mathcal{X})} \textbf{F}(\rho) = \min_{\rho\in\mathcal{P}_{2}(\mathcal{X})} \left(F_{1}(\rho), F_{2}(\rho),..., F_{K}(\rho)\right).
\end{equation}
This problem, known as \emph{Multi-Objective Distributional Optimization} (MODO, \citet{10.5555/3762387.3762523}), can be regarded as a natural extension of classical multi-objective optimization (MOO, \citet{sawaragi1985theory}) from the Euclidean spaces to the probability spaces (or spaces of probability distributions). A motivating example is multi-target sampling \citep{liu2021profiling,phan2022stochastic}, where the goal is to generate samples or particles that simultaneously approximate multiple unnormalized target distributions. In this setting, each objective corresponds to the Kullback–Leibler (KL) divergence between the current distribution and one of the targets. To solve MODO, \citet{10.5555/3762387.3762523} recently proposed the \emph{Multiple Wasserstein gradient descent} (MWGraD) algorithm, which constructs a flow of distributions that gradually decreases all the objectives. At each iteration, it (i) estimates the Wasserstein gradient of each objective $F_{k}$, for $k\in \left\{ 1,2,\dots, K\right\}$, and (ii) aggregates them into a single Wassertein descent direction to update the particles. 

In Euclidean optimization, standard gradient descent (GD) is often suboptimal, whereas momentum-based methods such as Nesterov's accelerated gradient (NAG, \citet{nesterov2013gradient}) achieve improved convergence rates. Its continuous-time counterpart, known as the \emph{accelerated gradient flow} \citep{su2016differential}, has been extensively studied, including in multi-objective settings \citep{attouch2015multiibjective, sonntag2024fast}. In the context of optimization over probability spaces, \citet{taghvaei2019accelerated} introduced accelerated flows from an optimal control perspective, while \citet{wang2022accelerated} developed a unified framework of accelerated gradient flows in probability spaces under different information metrics, including Fisher-Rao, Wasserstein, Kalman-Wasserstein, and Stein metrics. Both lines of work interpret acceleration through the lens of damped Hamiltonian dynamics.

Given the success of accelerated methods, a natural question arises: \emph{Can MWGraD be similarly accelerated in the probability-space settings?} In this work, we answer this question affirmatively. Our contributions are summarized as follows. 
\begin{enumerate}
    \item  We construct a continuous-time flow, termed the \emph{MWGraD flow}, to analyze the convergence of MWGraD \citep{10.5555/3762387.3762523} in the continuous-time limit. We define the merit function 
    $\mathcal{M}(\rho)=\sup_{q\in \mathcal{P}_{2}(\mathcal{X})}\min_{k\in[K]} \left\{F_{k}(\rho)-F_{k}(q) \right\}$, where $[K]\coloneqq\left\{1,2,\dots,K \right\}$, which has been used in prior analyses of multi-objective gradient flows in Euclidean spaces \citep{attouch2015multiibjective, sonntag2024fast}. In Theorem ~\ref{thm:MWGraD_convergencerate}, we show that \textbf{MWGraD flow achieves a convergence rate of $\mathcal{M}(\rho_t)=\mathcal{O}(1/t)$}. 
    \item We design an accelerated variant, \emph{A-MWGraD}, motivated by the perspective of damped Hamiltonian dynamics for Nesterov's acceleration, and prove in Theorem~\ref{thm:A-MWGraD_convergencerate} faster convergence rates: $\mathcal{M}(\rho_t)=\mathcal{O}(1/t^{2})$ for geodesically convex objectives and $\mathcal{M}(\rho_t)=\mathcal{O}(e^{-\sqrt{\beta}t})$ for $\beta$-strongly geodesically convex objectives. To the best of our knowledge, this is the \textbf{first acceleration result for multi-objective optimization in probability spaces}.
    \item We develop a \textbf{sampling-efficient discrete-time implementation of the accelerated flow}, incorporating practical strategies for efficiently estimating Wasserstein gradients of objectives in multi-target sampling tasks. Experiments on synthetic and real-world datasets demonstrate that A-MWGraD consistently converges faster than MWGraD, validating our theoretical findings in practice.
\end{enumerate}

\section{Preliminaries}
We begin by reviewing standard gradient flows in Euclidean and probability spaces, and then briefly introduce MODO and relevant background.

\subsection{Gradient Flows on Euclidean space $\mathcal{X}$ and Their Acceleration}
Given a differentiable function $f:\mathcal{X}\rightarrow \mathbb{R}$, the Euclidean gradient flow of $f$ is
\begin{equation}
\label{eqn:EuclideanGF}
     \dot{\mathbf{x}}_{t}= -\nabla f(\mathbf{x}_{t}),
\end{equation}
where $\nabla f$ denotes the Euclidean gradient. Discretizing (\ref{eqn:EuclideanGF}) using a standard Euler integrator yields GD algorithm:
\begin{equation*}
    \mathbf{x}_{n+1}=\mathbf{x}_{n}-\eta_{n}\nabla f(\mathbf{x}_{n}),
\end{equation*}
with step size $\eta_n > 0$. GD is known to be suboptimal; other first-order methods achieve better convergence guarantees and practical performance \citep{nemirovskij1983problem, nesterov2013gradient}. A key feature of many accelerated methods is the incorporation of \emph{momentum}. For example, Nesterov's Accelerated Gradient (NAG) method \citep{nesterov2013gradient} updates according to
\begin{align*}
\begin{split}
    \mathbf{x}_{n}&=\mathbf{m}_{n-1}-\eta_{n}\nabla f(\mathbf{m}_{n-1}), \\\mathbf{m}_{n}&=\mathbf{x}_{n}+\alpha_{n}(\mathbf{x}_{n}-\mathbf{x}_{n-1}),
\end{split}
\end{align*}
where $\alpha_{n}>0$ depends on the convexity of $f$. For $L$-smooth and $\beta$-strongly convex $f$,  $\alpha_{n}=(\sqrt{L}-\sqrt{\beta})/(\sqrt{L}+\sqrt{\beta})$; otherwise, $\alpha_{n}=(n-1)/(n+2)$. 
The continuous-time limit of NAG satisfies an ordinary differential equation (ODE), known as the \emph{accelerated gradient flow} (AGF) \citep{su2016differential,shi2022understanding}
\begin{equation}
 \label{eqn:AGF}   \ddot{\mathbf{x}}_{t}+\alpha_{t}\dot{\mathbf{x}}_{t}+\nabla f(\mathbf{x}_{t})=0,
\end{equation}
where $\alpha_{t}=2\sqrt{\beta}$ for $\beta$-strongly convex $f$ and $\alpha_{t}=3/t$ for general convex $f$. Importantly, AGF (\ref{eqn:AGF}) can be formulated as a damped Hamiltonian flow \citep{maddison2018hamiltonian}
\begin{align*}
\begin{split}
\dot{\mathbf{x}}_{t}&=\nabla_{m}H(\mathbf{x}_t,\mathbf{m}_t),\\
    \dot{\mathbf{m}}_{t}&=-\alpha_{t}\mathbf{m}_{t}-\nabla_{\mathbf{x}}H(\mathbf{x}_t,\mathbf{m}_t),
\end{split}
\end{align*}
where $\mathbf{x}$ and $\mathbf{m}$ are the state and momentum variables, respectively, and $H(\mathbf{x},\mathbf{m})=f(\mathbf{x})+\frac{1}{2}\lVert \mathbf{m}\rVert^2$ is the Hamiltonian. Thus, AGF can be interpreted as introducing a linear momentum term into the Hamiltonian dynamics.

\subsection{Gradient Flows on Probability Spaces $\mathcal{P}_{2}(\mathcal{X})$ and Their Acceleration}
Optimization over the probability space $\mathcal{P}_{2}(\mathcal{X})$ follows a similar principle. Given a functional $F:\mathcal{P}_{2}(\mathcal{X})\rightarrow \mathbb{R}$, we require an analogue of the Euclidean gradient. The resulting gradient flow depends on the choice of metric on $\mathcal{P}_{2}(\mathcal{X})$.

\begin{definition} [Metric Tensor]
    Let $\rho\in \mathcal{P}_{2}(\mathcal{X})$. The tangent space at $\rho$ is $\mathcal{T}_{\rho}\mathcal{P}_{2}(\mathcal{X})=\left\{\sigma\in \mathcal{F}(\mathcal{X}):\int\sigma d\mathbf{x}=0\right\}$,
    where $\mathcal{F}(\mathcal{X})$ denotes the space of smooth functions on $\mathcal{X}$. The cotangent space at $\rho$, $\mathcal{T}^{*}_{\rho}\mathcal{P}_{2}(\mathcal{X})$, is identified with the quotient space $\mathcal{F}(\mathcal{X})/\mathbb{R}$. \\
    A metric tensor $G(\rho):\mathcal{T}_{\rho}\mathcal{P}_{2}(\mathcal{X})\rightarrow \mathcal{T}^{*}_{\rho}\mathcal{P}_{2}(\mathcal{X})$ is an invertible mapping defining an inner product on $\mathcal{T}_{\rho}\mathcal{P}_2(\mathcal{X})$:
    \begin{equation*}
        g_{\rho}(\sigma_{1}, \sigma_{2})=\int \sigma_{1}G(\rho)\sigma_{2}d\mathbf{x}=\int \Phi_{1}G(\rho)^{-1}\Phi_{2}d\mathbf{x},
    \end{equation*}
    where $\sigma_{1},\sigma_{2}\in \mathcal{T}_{\rho}\mathcal{P}_{2}(\mathcal{X})$ and $\Phi_{i}\in \mathcal{T}^{*}_{\rho}\mathcal{P}_{2}(\mathcal{X})$ is the function satisfying $\sigma_{i}=G(\rho)^{-1}\Phi_{i}$, for $i=1,2$.
\end{definition}

\begin{definition} [Gradient Flow in Probability Spaces]
The gradient flow of $F(\rho)$ under metric $G(\rho)$ is
\begin{equation*}
    \dot{\rho}_{t}=- G(\rho_{t})^{-1}\delta_{\rho} F(\rho_t),
\end{equation*}
where $\delta_{\rho} F(\rho):\mathcal{X}\rightarrow\mathbb{R}$ is the first variation of $F$, satisfying
\begin{equation*}
    F(\rho +\epsilon \sigma)=F(\rho) + \epsilon \int \delta_{\rho} F(\rho)(\mathbf{x}) \sigma(\mathbf{x})d\mathbf{x},
\end{equation*}
for all $\sigma\in \mathcal{T}_{\rho}\mathcal{P}_{2}(\mathcal{X})$.
\end{definition}
We focus on the Wasserstein-2 metric $\mathcal{W}_2$, widely used in practice \citep{nguyen2023linear,pmlr-v258-nguyen25d,nguyen2023mirror, nguyen2024moreau}. The inverse Wasserstein metric tensor is given by
\begin{equation*}
    G^{W}(\rho)^{-1}\Phi=- \nabla \cdot (\rho\nabla \Phi),
\end{equation*}
where $\nabla \cdot$ denotes the divergence operator, and the metric is 
\begin{align*}
        g^{W}_{\rho}(\sigma_{1}, \sigma_{2})&=\int \sigma_{1}G^{W}(\rho)\sigma_{2}d\mathbf{x}\\
        &=\int \Phi_{1}G^{W}(\rho)^{-1}\Phi_{2}d\mathbf{x}=\int \langle \nabla\Phi_1,\nabla\Phi_2\rangle d\rho.
\end{align*}

%For simplicity, we sometimes denote this inner product as $\langle \sigma_1, \sigma_2 \rangle_{\rho}$. 
Then, the resulting Wasserstein gradient flow is
\begin{align*}
    \dot{\rho}_{t}=-\operatorname{grad} F(\rho_t)&=-G^{W}(\rho_{t})^{-1}\delta_{\rho} F(\rho_t)\\
    &=\nabla\cdot (\rho_{t}\nabla \delta_{\rho} F(\rho_t)),
\end{align*}
where $\operatorname{grad} F(\rho)\in \mathcal{T}_{\rho}\mathcal{P}_{2}(\mathcal{X})$ is the Wasserstein gradient of $F$ at $\rho$.
See \citep{wang2022accelerated, nguyen2023mirror,nguyen2024moreau} for further details.

\textbf{Hamiltonian Flows in Probability Spaces.} Analogous to the Euclidean case, we can define the Hamiltonian
\begin{equation*}
    H(\rho, \Phi)= F(\rho)+\frac{1}{2}\int \Phi G(\rho)^{-1}\Phi d\mathbf{x},
\end{equation*}
where $\rho$ and $\Phi$ act as the state and momentum variables, respectively. Following the damped Hamiltonian interpretation of Nesterov's acceleration \citep{taghvaei2019accelerated,wang2022accelerated}, the Accelerated Information Gradient (AIG) flow is given by
\begin{align*}
\dot{\rho}_{t}&=\delta_{\Phi}H(\rho_{t}, \Phi_{t}),\\
\dot{\Phi}_{t}&+\alpha_{t}\Phi_{t}+\delta_{\rho}H(\rho_{t}, \Phi_{t})=0.
\end{align*}
Under the Wasserstein-2 metric, this reduces to the Wasserstein accelerated gradient flow (W-AIG) \citep{wang2022accelerated} 
\begin{align}
\begin{split}
\label{eqn:W-AIGFlow}
&\dot{\rho}_{t} +\nabla \cdot (\rho_{t}\nabla \Phi_{t})=0,\\
&\dot{\Phi}_{t} +\alpha_{t}\Phi_{t}+ \frac{1}{2}\lVert \Phi_{t} \rVert^{2}_2+ \delta_{\rho}F(\rho_t)=0.
\end{split}
\end{align}

%In this paper, we focus primarily on the Wasserstein metric and provide both theoretical analysis and empirical validation under this setting.
%Extensions to other metrics are left for future work.

\subsection{Multiple Wasserstein Gradient \textit{Descent} Algorithm for MODO}
The goal of MODO (\ref{definition:MODO}) is to optimize multiple objectives over probability space simultaneously. As in classical MOO, different distributions may perform better on different objectives, which motivates the notion of Pareto optimality.
\begin{definition}
\label{def:paretooptimality}
    Consider the optimization problem (\ref{definition:MODO}).
    \begin{enumerate}
        \item  A distribution $p^{*}\in \mathcal{P}_{2}(\mathcal{X})$ is Pareto optimal if there does not exist another $q\in \mathcal{P}_{2}(\mathcal{X})$ such that $F_k(q)\leq F_k(p^{*})$ for all $k\in [K]$ and $F_l(q)< F_l(p^{*})$ for at least one index $l\in[K]$. We denote the set of all Pareto optimal distributions by $P^{*}$.
        \item  A distribution $p^{*}\in \mathcal{P}_{2}(\mathcal{X})$ is weakly Pareto optimal if there does not exist another $q\in \mathcal{P}_{2}(\mathcal{X})$ such that $F_k(q)< F_k(p^{*})$ for all $k\in [K]$. We denote the set of all weakly Pareto optimal distributions by $P^{*}_w$.
    \end{enumerate}
\end{definition}
From Definition~\ref{def:paretooptimality}, it follows that $P^{*}\subseteq P^{*}_w$. However, the definition cannot directly be used in practice to check wheather a distribution is Pareto optimal. \citet{10.5555/3762387.3762523} introduced the definition of \emph{Pareto stationary distribution}, which extends the definition of Pareto stationary points in Euclidean space, as follows.

\begin{definition} 
    We denote the probability simplex as
Let $\mathcal{W}=\left\{\mathbf{w}=(w_{1},...,w_{K})^\top| \mathbf{w}\geq 0, \sum_{k=1}^{K}w_{k}=1 \right\}$.

    A distribution $\rho\in \mathcal{P}_{2}(\mathcal{X})$ is Pareto stationary or critical if 
    \begin{equation*}
        \min_{\mathbf{w}\in\mathcal{W}}\langle\operatorname{grad} \mathbf{F}(\rho)\mathbf{w}, \operatorname{grad} \mathbf{F}(\rho)\mathbf{w}\rangle_{\rho}=0,
    \end{equation*}
    where \\
    $\operatorname{grad} \mathbf{F}(\rho)(\mathbf{x})=\left[\operatorname{grad} F_{1}(\rho)(\mathbf{x}),...,\operatorname{grad} F_{K}(\rho)(\mathbf{x}) \right]$, and $\mathbf{grad} \mathbf{F}(\rho)(\mathbf{x})\mathbf{w}=\sum_{k=1}^{K}w_{k}\operatorname{grad}F_{k}(\rho)(\mathbf{x})$. We denote the set of all Pareto stationary distributions by $P^{*}_c$. 
    %Moreover, $\rho$ is called an $\epsilon$-accurate Pareto stationary distribution if 
    %\begin{equation*}
    %    \min_{\mathbf{w}\in\mathcal{W}}\langle\operatorname{grad} \mathbf{F}(\rho)\mathbf{w}, \operatorname{grad} \mathbf{F}(\rho)\mathbf{w}\rangle\leq \epsilon^{2}.
    %\end{equation*}
\end{definition}
It can be verified that, in the geodesically convex setting, the stationarity conditions are also sufficient conditions for weak Pareto optimality, and we have that $P^{*}\subseteq P^{*}_w = P^{*}_c$.

MWGraD \citep{10.5555/3762387.3762523} is designed to find a Pareto stationary distribution by iteratively constructing distributions $\rho_0,\rho_1,\dots, \rho_T$, starting from a simple initial distribution $\rho_0$ (e.g., a standard Gaussian), and simultaneously decreasing all objectives . At iteration $n$, MWGraD seeks a tangent direction $s_n$ that maximizes the minimum decrease across objectives:
\begin{align}
\label{eqn:4}
\begin{split}
    \max_{s\in T_{\rho_n}\mathcal{P}_{2}(\mathcal{X})}\min_{k\in \left[K\right]}\frac{1}{h}\left(F_{k}(\rho_n)- F_{k}(\gamma(h)) \right)\\
    \approx \max_{\mathbf{v}\in \mathcal{V}}\min_{k\in \left[K\right]}\int \langle \nabla \delta F_{k}(\rho_n), \textbf{v}\rangle d \rho_n,
\end{split}
\end{align}
where $\mathbf{v}:\mathcal{X}\rightarrow \mathcal{X}$ is a vector field belonging to the space of vector fields $\mathcal{V}$; $s$ and $\mathbf{v}$ are related through the elliptic equation $s=\nabla\cdot(q_n\mathbf{v})$. Here, $\gamma:[0,1]\rightarrow \mathcal{P}_2(\mathcal{X})$ is a curve satisfying that $\gamma(0)=\rho_n$ and $\gamma^\prime(0)=s$. To regularize the update direction (i.e., vector field $\mathbf{v}$), a regularization term is introduced to (\ref{eqn:4}) and we can solve for $s_n$ by optimizing
\begin{align}
\label{optimizationproblem}
\begin{split}
   &\max_{\mathbf{v}\in \mathcal{V}}\min_{k\in \left[K\right]}\int_{\mathcal{X}} \langle \nabla \delta_{\rho} F_{k}(\rho_n), \mathbf{v}\rangle d \rho_n- \frac{1}{2}\int_{\mathcal{X}} \lVert \mathbf{v}\rVert^{2}_{2}d \rho_n.
\end{split}
\end{align}
The optimal $\mathbf{v}_n$ to (\ref{optimizationproblem}) is 
\begin{equation}
    \label{eqn:vt}
    \mathbf{v}_n=\sum_{k=1}^{K}w_{n,k}\nabla\delta_{\rho}F_k(\rho_n),
\end{equation}
where 
\begin{equation}
        \mathbf{w}_n = \argmin_{\mathbf{w}\in\mathcal{W}}\frac{1}{2}\int\left\lVert \sum_{k=1}^{K}w_{k}\nabla\delta_{\rho}F_{k}(\rho_n) \right\rVert^{2}_{2}d\rho_n.
\label{problem:solveweights}
\end{equation}
The optimal $\mathbf{v}_n$ is used to update the current particles from $\mathbf{x}_n\sim\rho_n$ to $\mathbf{x}_{n+1}\sim\rho_{n+1}$ via $\mathbf{x}_{n+1}=\mathbf{x}_{n}-\eta_n \mathbf{v}_{n}(\mathbf{x}_n)$.

\section{Multiple Wasserstein Gradient \textit{Flow} and Its Acceleration}
%In this section, we characterize the convergence rate of MWGraD in the continuous-time limit. 
In this section, we formulate a continuous-time counterpart of MWGraD and study its theoretical theoretical properties, including convergence guarantees and an explicit characterization in the Gaussian family.

The velocity field $\mathbf{v}_n$ (\ref{eqn:vt}) specifies how particles $\mathbf{x}_n\sim \rho_n$ are transported to $\mathbf{x}_{n+1}\sim \rho_{n+1}$. Formally taking the limit $\eta_n\to 0$ yields the following probability flow, which we term the \emph{MWGraD flow}:
\begin{align}
\label{eqn:MWGraDFlow}
\begin{split}
    \dot{\rho}_{t} + \nabla \cdot (\rho_{t}\nabla\Phi_{t})&=0,\\
    \Phi_{t} + \operatorname{proj}_{\mathcal{C}(\rho_{t}),\rho_t}[0]&=0,
\end{split}
\end{align}
where $\mathcal{C}(\rho)=\operatorname{conv}\left( \left\{ \delta_{\rho}F_{k}(\rho) :k=1,2,\dots,K \right\} \right)$ denotes the convex hull of the first variations of objectives $F_{k}$ ($k\in[K]$). Here, $\operatorname{proj}_{\mathcal{K},\rho}[f]$ denotes the projection of a function $f\in \mathcal{T}^{*}_{\rho}\mathcal{P}(\mathcal{X})$ onto a closed convex set $\mathcal{K}\subset \mathcal{T}^{*}_{\rho}\mathcal{P}(\mathcal{X})$ under the Wasserstein-2 metric:
\begin{equation}
\begin{split}
    \label{eqn:projection}
    \operatorname{proj}_{\mathcal{K},\rho}[f]&=\argmin_{h\in \mathcal{K}}\int (f-h)G^{W}(\rho)^{-1}(f-h)d\mathbf{x}\\
    &=\argmin_{h\in\mathcal{K}}\int \lVert \nabla f-\nabla h \rVert^2_2  d\rho.
    \end{split}
\end{equation}

This formulation makes explicit the interpretation of MWGraD as a continuous-time gradient flow on probability space, where convex combinations of Wasserstein gradients induce a Pareto-stationary descent direction. Note that the second equation in the MWGraD flow is defined only up to an additive function of $t$, which does not affect the dynamics, since only $\nabla \Phi_t$ appears in the continuity equation.

We next characterize the MWGraD flow explicitly in the Gaussian family when the objective functions are KL divergences. In this case, the infinite-dimensional flow (\ref{eqn:MWGraDFlow}) reduces to an ODE governing the evolution of the covariance matrix.

\begin{theorem}[MWGraD flow in Gaussian family]
\label{theorem:mwgrad-gaussian}
Suppose the initial distribution $\rho_0$ and the target distributions $\pi_k$, for $k\in[K]$, are zero-mean Gaussian distributions with covariance matrices $\Sigma_0$ and $\Sigma_*^{k}$, respectively. Let $\rho$ be a zero-mean Gaussian with covariance matrix $\Sigma$, and let the $k$-th objective function be the KL divergence from $\rho$ to $\pi_k$. Define
    \begin{align*}
        F_k^\dagger(\Sigma)&:=F_k(\rho)=\mathrm{KL}(\rho||\pi_k)=\\
        &= \frac{1}{2}\left[\tr(\Sigma (\Sigma_*^{k})^{-1}) -\log \det \left(\Sigma (\Sigma_*^{k})^{-1} \right)-d\right].
    \end{align*}
Let $\Sigma_t$ satisfy
\begin{align}
\label{eqn:mwgrad_gaussian}
    \begin{split}
        \dot{\Sigma_t}&-2 (S_t \Sigma_t+\Sigma_t S_t)=0,\\
        S_t &= - \sum_{k=1}^{K}w_{t,k}\nabla_{\Sigma_t}F_k^{\dagger}(\Sigma_t),
    \end{split}
\end{align}
where $\mathbf{w}_t=\argmin_{\mathbf{w}\in\mathcal{W}}\left\lVert \sum_{k=1}^{K}w_{k}\nabla_{\Sigma_t}F_k^{\dagger}(\Sigma_t) \right\rVert_{\Sigma_t}^2$, with initial condition $\Sigma_0\succ 0$. Then $\Sigma_t\succ 0$ for any $t\geq 0$ (i.e., positive definite).
Furthermore, defining $\rho_t=\mathcal{N}(0, \Sigma_t)$ and $\Phi_t(\mathbf{x})=\mathbf{x}^{\top}S_t \mathbf{x}$, the pair $\left(\rho_t, \Phi_t \right)$ is a solution to the MWGraD flow (\ref{eqn:MWGraDFlow}) with initial condition $\rho_0$ and $\Phi_0=0$.
\end{theorem}
The proof is provided in Appendix \ref{appendix:Gaussianfamily}. Theorem \ref{theorem:mwgrad-gaussian} provides an explicit characterization of the MWGraD flow within the Gaussian family, reducing the infinite-dimensional evolution in probability space to a finite-dimensional matrix-valued ODE. Establishing existence, uniqueness and well-posedness of the MWGraD flow in more general distributional settings remains an interesting direction of future work.

%In this section, we consider MODO (1) and assume that each $F_{k}$ ($k=1,2,...,K$) is geodesically convex functional. \cite{10.5555/3762387.3762523} introduce MWGraD algorithm to address (1), which has the update rule:
%\begin{align}
%\begin{split}
%    \mathbf{x}_{n+1} &= \mathbf{x}_{n}-\eta_{n} \textbf{v}_{n}(\mathbf{x}_{n}),\\
%    \textbf{v}_{n}(\mathbf{x}) &= \sum_{k=1}^{K}w^{*}_{n,k} \nabla \frac{\delta F_{k}}{\delta p_{n}}(\mathbf{x}),\\
%    \textbf{w}^{*}_{n} &= \argmin_{\textbf{w}\in\mathcal{W}}\int \lVert \sum_{k=1}^{K}w_{k}\nabla \frac{\delta F_{k}}{\delta p_{n}} \rVert^2 p_{n}d\mathbf{x}
%\end{split}
%\end{align}
%where $\textbf{v}_{n}:\mathcal{X}\rightarrow\mathcal{X}$ denotes a vector field, which specifies how particles $\mathbf{x}_{n}\sim p_{n}$ are transported to $\mathbf{x}_{n+1}\sim p_{n+1}$. With the limit $\eta_{n}\rightarrow 0$, the continuous-time limit of MWGraD method is the following MWGraD flow (\ref{eqn:MWGraDFlow}):
\subsection{Convergence Analysis of MWGraD flow}
To characterize convergence of the MWGraD flow (\ref{eqn:MWGraDFlow}) for MODO (\ref{definition:MODO}), we introduce the following \emph{merit function}:
\begin{equation}
\label{eqn:meritfunction}
    \mathcal{M}(\rho)=\sup_{q\in\mathcal{P}_2(\mathcal{X})}\min_{k\in[K]}\left\{ F_{k}(\rho) - F_{k}(q)\right\}.
\end{equation}
This definition naturally generalizes the merit function used to analyze  convergence of function values for MOO in Euclidean spaces \citep{sonntag2024fast,tanabe2023convergence}. 
Importantly, for all $\rho\in\mathcal{P}_{2}(\mathcal{X})$, we have $\mathcal{M}(\rho)\geq0$, and $\rho$ is weakly Pareto optimal if and only if $\mathcal{M}(\rho)=0$. This follows directly from Theorem 3.1 of \cite{tanabe2023convergence}.
%In particular, while the Euclidean MOO merit function is defined for a decision variable $\mathbf{x}\in\mathcal{X}$, (\ref{eqn:meritfunction}) is defined over probability distributions $\rho\in\mathcal{P}_{2}(\mathcal{X})$. 

We establish the convergence rate of the MWGraD flow (\ref{eqn:MWGraDFlow}) by showing that $\mathcal{M}(\rho_{t})$ converges to zero under the following assumption.
\begin{assumption}
\label{assumption1}
     For $\textbf{u}\in \mathbb{R}^{K}$, define the level set: $\Omega_{\textbf{F}}(\textbf{u})=\left\{ q\in \mathcal{P}_2(\mathcal{X})| \textbf{F}(q)\leq \textbf{u}\right\}$, where $\leq$ denotes the componentwise order in $\mathbb{R}^{K}$. Assume that for every $q\in\mathcal{P}_2(\mathcal{X})$, there exists $p^{*}\in P^{*}_{w}$ such that $\textbf{F}(p^{*}) \leq \textbf{F}(q)$, and that
\begin{equation*}
    R= \sup_{\textbf{F}^{*}\in \textbf{F}(P^{*}_w \cap \Omega_{\textbf{F}}(\textbf{F}(\rho_0)))} \inf_{q\in \textbf{F}^{-1}(\left\{ \textbf{F}^{*} \right\})}\mathcal{W}^{2}_{2}(q, \rho_0) < \infty.
\end{equation*}
\end{assumption}
\noindent
\textbf{Remarks.} 
\begin{itemize}
    \item Assumption \ref{assumption1} is extended from Assumption 5.2 in \citep{tanabe2023convergence} from the Euclidean space to probability space.
    \item In the single-objective case ($K=1$), Assumption \ref{assumption1} holds if the optimization problem has at least one optimal solution. Indeed, for $K=1$, $P^{*}_w$ coincides with the set of optimal solutions, $P^{*}_w \cap \Omega_{\textbf{F}}(\textbf{F}(p_{0}))=P^{*}_w$, 
and $R=\inf_{p^{*}\in P^{*}_w}\mathcal{W}^{2}_{2}(p^{*}, p_0) < \infty $.
\end{itemize}

To facilitate the convergence analysis, we next introduce an auxiliary lemma that allows evaluating the merit function (\ref{eqn:meritfunction}) without taking the supremum over the entire space $\mathcal{P}_{2}(\mathcal{X})$.

\begin{lemma}
\label{lemma:overwholespace}
    Let $\rho_0\in\mathcal{P}_2(\mathcal{X})$ and $\rho\in \Omega_{\textbf{F}}(\textbf{F}(\rho_0))$. For $k\in[K]$, define
    \begin{equation*}
        E_k(\rho, q) =F_{k}(\rho)- F_{k}(q), E(\rho, q)=\min_{k\in[K]}E_k(\rho, q).
    \end{equation*}
    Then
    \begin{align*}
        \sup_{\textbf{F}^{*}\in \textbf{F}(P^{*}_w\cap\Omega_{\textbf{F}}(\textbf{F}(\rho_{0})))}\inf_{q\in\textbf{F}^{-1}(\textbf{F}^{*})}E(\rho, q)=
        \sup_{q\in\mathcal{P}_2(\mathcal{X})} E(\rho, q).
    \end{align*}
\end{lemma}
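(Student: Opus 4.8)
The plan is to reduce both sides to a supremum of the \emph{same} real-valued quantity over two different index sets, and then show the two index sets yield the same supremum. First I would observe that $E(\rho,q)=\min_{k\in[K]}\bigl(F_{k}(\rho)-F_{k}(q)\bigr)$ depends on $q$ only through the vector $\mathbf{F}(q)$, so on each fiber $\mathbf{F}^{-1}(\{\mathbf{F}^{*}\})$ the map $q\mapsto E(\rho,q)$ is constant, equal to $\min_{k}\bigl(F_{k}(\rho)-\mathbf{F}^{*}_{k}\bigr)$. Since every $\mathbf{F}^{*}$ appearing in the left-hand supremum has a nonempty fiber by construction, the inner infimum is exactly that constant, and hence the left-hand side equals $\sup_{p^{*}\in P^{*}_{w}\cap\Omega_{\mathbf{F}}(\mathbf{F}(\rho_{0}))}E(\rho,p^{*})$, while the right-hand side is $\sup_{q\in\mathcal{P}_{2}(\mathcal{X})}E(\rho,q)$.

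The inequality ``$\le$'' is then immediate, since the left-hand supremum is taken over the subset $P^{*}_{w}\cap\Omega_{\mathbf{F}}(\mathbf{F}(\rho_{0}))\subseteq\mathcal{P}_{2}(\mathcal{X})$.

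For ``$\ge$'', I would show that for every $q\in\mathcal{P}_{2}(\mathcal{X})$ there exists $p^{*}\in P^{*}_{w}\cap\Omega_{\mathbf{F}}(\mathbf{F}(\rho_{0}))$ with $E(\rho,p^{*})\ge E(\rho,q)$; taking the supremum over $q$ afterwards gives the result. I would split on the sign of $E(\rho,q)$. If $E(\rho,q)>0$, then $F_{k}(q)<F_{k}(\rho)$ for all $k$, and since $\rho\in\Omega_{\mathbf{F}}(\mathbf{F}(\rho_{0}))$ this forces $\mathbf{F}(q)\le\mathbf{F}(\rho_{0})$, i.e.\ $q\in\Omega_{\mathbf{F}}(\mathbf{F}(\rho_{0}))$; applying Assumption~\ref{assumption1} to $q$ yields $p^{*}\in P^{*}_{w}$ with $\mathbf{F}(p^{*})\le\mathbf{F}(q)\le\mathbf{F}(\rho_{0})$, hence $p^{*}\in P^{*}_{w}\cap\Omega_{\mathbf{F}}(\mathbf{F}(\rho_{0}))$, and the componentwise inequality $F_{k}(\rho)-F_{k}(p^{*})\ge F_{k}(\rho)-F_{k}(q)$ gives $E(\rho,p^{*})\ge E(\rho,q)$. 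If instead $E(\rho,q)\le 0$, I would apply Assumption~\ref{assumption1} to $\rho$ itself (legitimate since $\rho\in\Omega_{\mathbf{F}}(\mathbf{F}(\rho_{0}))$) to obtain $p^{*}\in P^{*}_{w}$ with $\mathbf{F}(p^{*})\le\mathbf{F}(\rho)\le\mathbf{F}(\rho_{0})$, so that $E(\rho,p^{*})=\min_{k}\bigl(F_{k}(\rho)-F_{k}(p^{*})\bigr)\ge 0\ge E(\rho,q)$.

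The one genuinely delicate point — the step I expect to be the main obstacle — is ensuring the dominating weakly Pareto point $p^{*}$ actually lies in the level set $\Omega_{\mathbf{F}}(\mathbf{F}(\rho_{0}))$ and not merely in $P^{*}_{w}$; this is exactly why the argument branches on the sign of $E(\rho,q)$, invoking Assumption~\ref{assumption1} at $q$ when $q$ already sits in the level set and at $\rho$ otherwise. Apart from that, the proof is bookkeeping with the partial order on $\mathbb{R}^{K}$ together with the observation that $E(\rho,\cdot)$ factors through $\mathbf{F}$.
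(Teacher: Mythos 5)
Your proof is correct and follows essentially the same route as the paper: collapse the inner infimum because $E(\rho,\cdot)$ factors through $\mathbf{F}$, reduce the left-hand side to a supremum over $P^{*}_{w}\cap\Omega_{\mathbf{F}}(\mathbf{F}(\rho_{0}))$, and then pass to the supremum over all of $\mathcal{P}_{2}(\mathcal{X})$ using $\rho\in\Omega_{\mathbf{F}}(\mathbf{F}(\rho_{0}))$. In fact your case split on the sign of $E(\rho,q)$ and the explicit invocation of Assumption~\ref{assumption1} supply precisely the justification that the paper's two chained equalities leave implicit, so your write-up is, if anything, more complete than the original.
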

The proof is deferred to Appendix \ref{Appendix:prooflemma1}. We further impose the following assumption on the objectives $F_{k}$ (for $k\in[K]$). 
\begin{assumption}[$\beta$-strong geodesic convexity and regularity]
\label{assumption:stronggeodesicconvex}
\noindent
For each $k\in[K]$, assume that $F_{k}:\mathcal{P}_2(\mathcal{X})\rightarrow\mathbb{R}$ is $\beta$-strongly geodesically convex with respect to the Wasserstein-2 distance. Moreover, for every $p$, $q\in \mathcal{P}_{2}(\mathcal{X})$ considered in the analysis, assume that the optimal transport map $T$ from $p$ to $q$ exists. Then
\begin{align*}
\begin{split}
     F_{k}(q) &\geq F_{k}(p) + \langle \operatorname{grad}F_{k}(p), \operatorname{Exp}_{p}^{-1}(q)\rangle_{p} +\frac{\beta}{2}\mathcal{W}^2_2(p, q)\\
     &=F_k(p) + \int\langle \nabla \delta_{p}F_k(p)(\mathbf{x}), T(\mathbf{x})-\mathbf{x}\rangle dp(\mathbf{x})\\
     &+\frac{\beta}{2}\int \lVert T(\mathbf{x})-\mathbf{x} \rVert^2_2 dp(\mathbf{x}),
\end{split}
\end{align*}
where $\operatorname{Exp}_{p}$ denotes the exponential mapping, which specifies how to move $p$ along a tangent vector on $\mathcal{P}_{2}(\mathcal{X})$, $\operatorname{Exp}_{p}^{-1}$ denotes its inversion mapping, which maps a point on $\mathcal{P}_{2}(\mathcal{X})$ to a tangent vector. If $\beta=0$, $F_k$ is said to be geodesically convex with respect to the Wasserstein-2 distance. We refer to \citep{santambrogio2015optimal} for details. 
\end{assumption}

We are now ready to state the convergence result for the MWGraD flow (\ref{eqn:MWGraDFlow}).

\begin{theorem}
    \label{thm:MWGraD_convergencerate}
    Let $\rho_{t}$ be the solution of the MWGraD flow (\ref{eqn:MWGraDFlow}). Suppose that each $F_{k}$ is geodesically convex for $k\in[K]$, and that Assumption \ref{assumption1} holds. Assume additionally that, for every $q\in\mathcal{P}_2(\mathcal{X})$, the optimal transport map $T_t$ from $\rho_t$ to $q$ exists and is differentiable. Then for all $t\geq 0$,
    \begin{equation*}
    \label{MWGrad:ine}
        \mathcal{M}(\rho_{t})\leq \frac{R}{2t}=\mathcal{O}\left(\frac{1}{t}\right).
    \end{equation*}
\end{theorem}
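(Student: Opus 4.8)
The plan is to run an evolution–variational-inequality (EVI) / Lyapunov argument comparing the flow $\rho_t$ against an arbitrary reference $q\in\mathcal{P}_2(\mathcal{X})$, and then to collapse the supremum in the merit function by invoking Lemma~\ref{lemma:overwholespace} together with Assumption~\ref{assumption1}. First I would unpack the flow (\ref{eqn:MWGraDFlow}). The second equation says $\Phi_t=-\texttt{proj}_{\mathcal{C}(\rho_t),\rho_t}[0]$, so by (\ref{eqn:projection}) the function $h_t:=-\Phi_t$ is the element of $\mathcal{C}(\rho_t)$ of minimal Wasserstein norm $\int\lVert\nabla h\rVert^2 d\rho_t$; equivalently, in the notation of (\ref{eqn:vt})--(\ref{problem:solveweights}), $\nabla\Phi_t=-\mathbf{v}_t$ with $\mathbf{v}_t=\sum_{k=1}^{K}w_k^{(t)}\nabla\delta_\rho F_k[\rho_t]$ and $\mathbf{w}^{(t)}=\argmin_{\mathbf{w}\in\mathcal{W}}\int\lVert\sum_{k}w_k\nabla\delta_\rho F_k[\rho_t]\rVert^2 d\rho_t$. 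The first-order (projection) optimality condition for projecting $0$ onto the convex set $\mathcal{C}(\rho_t)$ then yields the key inequality
\begin{equation*}
\int\langle\nabla\delta_\rho F_k[\rho_t],\mathbf{v}_t\rangle\,d\rho_t\;\ge\;\int\lVert\mathbf{v}_t\rVert^2\,d\rho_t\qquad\text{for every }k\in[K].
\end{equation*}

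Next I would establish monotonicity of the objectives along the flow. Differentiating and integrating by parts with the first line of (\ref{eqn:MWGraDFlow}),
\begin{equation*}
\tfrac{d}{dt}F_k(\rho_t)=\int\delta_\rho F_k[\rho_t]\,\dot\rho_t\,d\mathbf{x}=-\int\langle\nabla\delta_\rho F_k[\rho_t],\mathbf{v}_t\rangle\,d\rho_t\le-\int\lVert\mathbf{v}_t\rVert^2\,d\rho_t\le0,
\end{equation*}
so each $F_k(\rho_t)$ is non-increasing; in particular $\rho_t\in\Omega_{\mathbf{F}}(\mathbf{F}(\rho_0))$ for all $t\ge0$, and for every fixed $q$ the map $s\mapsto E(\rho_s,q)=\min_{k}\{F_k(\rho_s)-F_k(q)\}$ is non-increasing (a minimum of non-increasing functions).

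Now I would derive the EVI and integrate it. Fix $q\in\mathcal{P}_2(\mathcal{X})$ and let $T_t$ be the optimal transport map from $\rho_t$ to $q$. By Otto calculus, differentiating the squared Wasserstein distance along the continuity equation with velocity $\nabla\Phi_t=-\mathbf{v}_t$,
\begin{equation*}
\tfrac{d}{dt}\tfrac12\mathcal{W}_2^2(\rho_t,q)=\int\langle\nabla\Phi_t(\mathbf{x}),\mathbf{x}-T_t(\mathbf{x})\rangle\,d\rho_t=\int\langle\mathbf{v}_t(\mathbf{x}),T_t(\mathbf{x})-\mathbf{x}\rangle\,d\rho_t.
\end{equation*}
Applying geodesic convexity ($\beta=0$) of each $F_k$ to the pair $(\rho_t,q)$ gives $\int\langle\nabla\delta_\rho F_k[\rho_t],T_t-\mathrm{id}\rangle\,d\rho_t\le F_k(q)-F_k(\rho_t)$; taking the $\mathbf{w}^{(t)}$-convex combination and bounding the average by the maximum,
\begin{equation*}
\tfrac{d}{dt}\tfrac12\mathcal{W}_2^2(\rho_t,q)\le\sum_{k=1}^{K}w_k^{(t)}\bigl(F_k(q)-F_k(\rho_t)\bigr)\le\max_{k\in[K]}\bigl(F_k(q)-F_k(\rho_t)\bigr)=-E(\rho_t,q).
\end{equation*}
Integrating on $[0,t]$ and using that $s\mapsto E(\rho_s,q)$ is non-increasing,
\begin{equation*}
t\,E(\rho_t,q)\le\int_0^t E(\rho_s,q)\,ds\le\tfrac12\mathcal{W}_2^2(\rho_0,q)-\tfrac12\mathcal{W}_2^2(\rho_t,q)\le\tfrac12\mathcal{W}_2^2(\rho_0,q),
\end{equation*}
hence $E(\rho_t,q)\le\mathcal{W}_2^2(\rho_0,q)/(2t)$ for every $q\in\mathcal{P}_2(\mathcal{X})$. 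Since $\rho_t\in\Omega_{\mathbf{F}}(\mathbf{F}(\rho_0))$, Lemma~\ref{lemma:overwholespace} rewrites $\mathcal{M}(\rho_t)=\sup_{\mathbf{F}^*}\inf_{q\in\mathbf{F}^{-1}(\mathbf{F}^*)}E(\rho_t,q)$, the outer supremum being over $\mathbf{F}^*\in\mathbf{F}(P^*_w\cap\Omega_{\mathbf{F}}(\mathbf{F}(\rho_0)))$; taking $\inf$ over $q\in\mathbf{F}^{-1}(\mathbf{F}^*)$, then $\sup$ over $\mathbf{F}^*$, and using the definition of $R$ in Assumption~\ref{assumption1} yields $\mathcal{M}(\rho_t)\le R/(2t)=\mathcal{O}(1/t)$.

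\textbf{Main obstacle.} Conceptually the crux is the EVI step: obtaining a clean variational inequality for a flow whose velocity is a \emph{time-varying} convex combination of Wasserstein gradients; this is handled by combining the per-objective geodesic-convexity inequalities with the fixed weights $\mathbf{w}^{(t)}$ frozen at each time, so that the resulting bound depends on $\min_k(F_k(\rho_t)-F_k(q))$ rather than on any individual objective. The technical burden is in rigorously justifying the Otto differentiation formula for $t\mapsto\mathcal{W}_2^2(\rho_t,q)$ (absolute continuity of the curve in $(\mathcal{P}_2(\mathcal{X}),\mathcal{W}_2)$, existence/uniqueness and integrability of the transport maps $T_t$, a.e.\ differentiability) and the well-posedness of the MWGraD flow (\ref{eqn:MWGraDFlow}) itself — points that are standard in this literature but worth acknowledging, e.g.\ by working with locally absolutely continuous curves and differentiating for a.e.\ $t$.
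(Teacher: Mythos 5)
Your proposal is correct and follows essentially the same route as the paper's proof: the projection-optimality inequality for $-\Phi_t$ (which is the paper's Lemma~\ref{lemma:nonincreasing2}), the Otto-calculus derivative of $\tfrac12\mathcal{W}_2^2(\rho_t,q)$, the $\mathbf{w}^{(t)}$-weighted geodesic-convexity bound giving $E(\rho_t,q)\le-\dot{\mathcal{V}}(q,t)$, integration combined with monotonicity of $E$, and the final reduction via Assumption~\ref{assumption1} and Lemma~\ref{lemma:overwholespace}. The technical caveats you flag (well-posedness and a.e.\ differentiability of $t\mapsto\mathcal{W}_2^2(\rho_t,q)$) are likewise glossed over in the paper, which cites the identity from \citet{wang2022accelerated} for that step.
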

\textit{Sketch of Proof.} (See Appendix \ref{Appendix:prooftheorem1} for details) For any $q\in\mathcal{P}_2(\mathcal{X})$, define the Lyapunov functional 
\begin{equation*}
    \mathcal{V}(q, t)=\frac{1}{2}\mathcal{W}^{2}_{2}(\rho_{t}, q)=\frac{1}{2}\int \lVert T_{t}(\mathbf{x})-\mathbf{x}\rVert^{2}_2d\rho_{t}(\mathbf{x}),
\end{equation*}
where $T_{t}$ is the optimal transport map from $\rho_{t}$ to $q$. We can show that $E_k(\rho_t, q)$, for $k\in[K]$, and $E(\rho_t, q)$ are non-increasing in time, i.e., $\dot{E}_k(\rho_t, q)\leq 0$ for all $k$. Moreover, 
\begin{align}
\label{ineqx}
\begin{split}
    E(\rho_t, q)\leq - \dot{\mathcal{V}}(q, t).%=\min_{k\in[K]}\left\{ F_k(\rho_t) - F_k(q) \right\}\\
    %&\leq \frac{\mathcal{V}(q, 0)}{t}=\frac{\mathcal{W}^{2}_{2}(\rho_{0}, q)}{2t}.
\end{split}
\end{align}
Integrating (\ref{ineqx}) over $[0,t]$ gives
\begin{align*}
\label{ineqx}
\begin{split}
    E(\rho_t, q)\leq \frac{\mathcal{V}(q, 0)}{t}= \frac{\mathcal{W}^{2}_{2}(\rho_{0}, q)}{2t}.%=\min_{k\in[K]}\left\{ F_k(\rho_t) - F_k(q) \right\}\\
    %&\leq \frac{\mathcal{V}(q, 0)}{t}=\frac{\mathcal{W}^{2}_{2}(\rho_{0}, q)}{2t}.
\end{split}
\end{align*}
By Assumption \ref{assumption1}, 
\begin{equation*}
    \sup_{\textbf{F}^{*}\in \textbf{F}(P^{*} \cap \Omega_{\textbf{F}}(\textbf{F}(p_0)))} \inf_{q\in \textbf{F}^{-1}(\left\{ \textbf{F}^{*} \right\})}E(\rho_t, q) \leq \frac{R}{2t}.
\end{equation*}
Since $E_k(\rho_t, q)$ is non-increasing, it follows that $F_k(\rho_t)-F_k(q)\leq F_k(\rho_0)-F_k(q)$ for $k\in[K]$, which means that $\rho_t\in \Omega_{\textbf{F}}(\textbf{F}(\rho_0))$. Applying Lemma \ref{lemma:overwholespace} completes the proof. %Hence, the merit function $\mathcal{M}(\rho_{t})$ decays at rate $\mathcal{O}(t^{-1})$ along the MWGraD flow (\ref{eqn:MWGraDFlow}).

\subsection{Accelerated MWGraD Flow and Its Convergence Analysis}
Inspired by AIG flow \cite{taghvaei2019accelerated,wang2022accelerated}, we introduce the \emph{accelerated MWGraD flow} (A-MWGraD) as follows:

\begin{align}
\label{eqn:A-MWGraDFlow}
\begin{split}
    &\dot{\rho}_{t} + \nabla \cdot (\rho_{t}\nabla\Phi_{t})=0,\\
    &\dot{\Phi}_t + \alpha_t\Phi_{t} +\frac{1}{2}\lVert\nabla\Phi_t \rVert^2_2 +\operatorname{proj}_{\mathcal{C}(\rho_{t}),\rho_t}[0]=0.
\end{split}
\end{align}

As in the MWGraD flow, the second equation of the A-MWGraD flow (\ref{eqn:A-MWGraDFlow}) is defined only up to an additive function of $t$, which does not affect the dynamics. Furthermore, when $K=1$, (\ref{eqn:A-MWGraDFlow}) reduces to the Wasserstein accelerated information gradient (W-AIG) flow (\ref{eqn:W-AIGFlow}). 

We next provide an explicit characterization of the A-MWGraD flow in the Gaussian family when the objective functionals are KL divergences.

%The second equation of the A-MWGraD flow (\ref{eqn:A-MWGraDFlow}) is defined up to an additive function of $t$. Furthermore, it can be verified that when $K=1$, (\ref{eqn:A-MWGraDFlow}) reduces to the W-AIG (\ref{eqn:W-AIGFlow}). Similar to the MWGraD flow, we characterize the A-MWGraD flow explicitly in Gaussian family when the objective functions are KL divergences in the following theorem.

\begin{theorem}[A-MWGraD flow in Gaussian family]
\label{theorem:a-mwgrad-gaussian}
Suppose the initial distribution $\rho_0$ and the target distributions $\pi_k$, for $k\in[K]$, are zero-mean Gaussian distributions with covariance matrices $\Sigma_0$ and $\Sigma_*^{k}$, respectively. 
Let $\left(\Sigma_t, S_t\right)$ satisfy
\begin{align}
    \begin{split}
    \label{eqn:a-mwgrad_gaussian}
        \dot{\Sigma_t}&-2 (S_t \Sigma_t+\Sigma_t S_t)=0,\\
        \dot{S}_t &+\alpha_t S_t + 2 S_t^2 + \sum_{k=1}^{K}w_{t,k}\nabla_{\Sigma_t}F_k^{\dagger}(\Sigma_t)=0,
    \end{split}
\end{align}
where $\mathbf{w}_t=\argmin_{\mathbf{w}\in\mathcal{W}}\left\lVert \sum_{k=1}^{K}w_{k}\nabla_{\Sigma_t}F_k^{\dagger}(\Sigma_t) \right\rVert_{\Sigma_t}^2$, with initial condition $\Sigma_0\succ 0$. Then $\Sigma_t\succ 0$ for any $t\geq 0$.
Furthermore, defining 
$\rho_t=\mathcal{N}(0, \Sigma_t)$ and $\Phi_t(\mathbf{x})=\mathbf{x}^{\top}S_t \mathbf{x} + C(t)$,
where 
\begin{equation*}
    C(t)=-t +\frac{1}{2}\sum_{k=1}^{K}\int_0^{t}w_{s,k}\log \det \left(\Sigma_s\left(\Sigma_*^{k} \right)^{-1} \right)ds.
\end{equation*}
Then, the pair $\left(\rho_t, \Phi_t \right)$ is a solution to the A-MWGraD flow (\ref{eqn:A-MWGraDFlow}) with initial condition $\rho_0$ and $\Phi_0=0$.
\end{theorem}
The proof is provided in Appendix \ref{appendix:Gaussianfamily}. The theorem gives an explicit construction of solutions to the A-MWGraD flow when the distributions are restricted to the Gaussian family and the objective functions are KL divergences. 

We next establish convergence rates for the A-MWGraD flow. Specifically, we show that the merit function $\mathcal{M}(\rho_t)$ converges to zero at the accelerated rate $\mathcal{O}(t^{-2})$ when the objectives are geodesically convex, and at the exponential rate $\mathcal{O}(e^{-\sqrt{\beta}t})$ when the objectives are $\beta$-strongly geodesically convex.

\begin{theorem}
    \label{thm:A-MWGraD_convergencerate}
    Let $\rho_{t}$ be the solution of the A-MWGraD flow (\ref{eqn:A-MWGraDFlow}). Suppose that Assumption \ref{assumption1} holds. Assume additionally that, for every $q\in\mathcal{P}_2(\mathcal{X})$, the optimal transport map $T_t$ from $\rho_t$ to $q$ exists and is differentiable. 

    First, assume that each $F_{k}$ is geodesically convex for $k\in[K]$, and let $\alpha_t=\alpha/t $ with $\alpha\geq 3$. Then, for all $t\geq 0$,
    \begin{equation}
    \label{A-MWGrad:ineq1}
        \mathcal{M}(\rho_{t})\leq \frac{(\alpha-1)R}{t^2} =\mathcal{O}\left(\frac{1}{t^2}\right).
    \end{equation}
    Second, assume that each $F_{k}$ is $\beta$-strongly geodesically convex for $k\in[K]$ with $\beta>0$, and let $\alpha_t=2\sqrt{\beta}$. Then, for all $t\geq 0$,
    \begin{align}
    \label{A-MWGrad:ineq2}
    \begin{split}
        &\mathcal{M}(\rho_{t})\leq e^{-\sqrt{\beta}t}\left[\min_{l\in[K]}F_l(\rho_0)  + \frac{\beta R}{2}\right]=\mathcal{O}(e^{-\sqrt{\beta}t}).
    \end{split}
    \end{align}
\end{theorem}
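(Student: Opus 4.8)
\textit{Proof proposal.} The plan is to prove both rates by the Lyapunov‑function method, transporting the classical continuous‑time analyses of Nesterov's scheme (Su–Boyd–Candès for the $\mathcal O(1/t^2)$ regime and the exponentially weighted energy for the strongly convex regime) to the Wasserstein setting. The starting point is a structural observation: if $X_t$ is a particle of the flow, so $X_0\sim\rho_0$ and $\dot X_t=\nabla\Phi_t(X_t)$ by the continuity equation, then differentiating once more and substituting the $\Phi$‑equation — the gradient of the quadratic term $\tfrac12\|\nabla\Phi_t\|^2$ cancels the Hessian term $\nabla^2\Phi_t\nabla\Phi_t$ produced by the chain rule — yields the clean second‑order ODE
\begin{equation*}
\ddot X_t+\alpha_t\dot X_t+\nabla\psi_t(X_t)=0,\qquad \psi_t:=\texttt{proj}_{\mathcal C(\rho_t),\rho_t}[0]=\textstyle\sum_{k}w_k^{(t)}\delta_\rho F_k[\rho_t],
\end{equation*}
with $w^{(t)}$ the simplex weights of (\ref{problem:solveweights}); thus $\nabla\psi_t$ is precisely the common descent direction $\mathbf v_t$ of MWGraD, and the flow is a mean‑field accelerated gradient system for that direction. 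The whole analysis will mirror the Euclidean one, with $\nabla\psi_t$ playing the role of $\nabla f$ and, for an arbitrary fixed $q\in\mathcal P_2(\mathcal X)$, the optimal transport image $T_t(\mathbf x)$ (where $T_t$ is the optimal map $\rho_t\to q$) playing the role of $x^*$.

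The one genuinely new ingredient is a \emph{multi‑objective geodesic‑convexity inequality}. Applying $\beta$‑strong geodesic convexity of each $F_k$ at $\rho_t$ along the geodesic to $q$ gives $F_k(\rho_t)-F_k(q)\le\int\langle\nabla\delta_\rho F_k[\rho_t],\mathbf x-T_t(\mathbf x)\rangle d\rho_t-\tfrac{\beta}{2}\mathcal W_2^2(\rho_t,q)$; averaging with the weights $w^{(t)}$ and using $\min_k a_k\le\sum_k w_k^{(t)}a_k$ yields
\begin{equation*}
E(\rho_t,q)\;\le\;\int\big\langle\nabla\psi_t,\,\mathbf x-T_t(\mathbf x)\big\rangle d\rho_t\;-\;\tfrac{\beta}{2}\,\mathcal W_2^2(\rho_t,q),
\end{equation*}
the exact analogue of $f(x)-f^*\le\langle\nabla f(x),x-x^*\rangle-\tfrac\beta2\|x-x^*\|^2$. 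Combined with the elementary identities $\dot F_k(\rho_t)=\int\langle\nabla\delta_\rho F_k[\rho_t],\nabla\Phi_t\rangle d\rho_t$, $\tfrac{d}{dt}\big(\tfrac12\int\|\nabla\Phi_t\|^2 d\rho_t\big)=-\alpha_t\int\|\nabla\Phi_t\|^2 d\rho_t-\int\langle\nabla\Phi_t,\nabla\psi_t\rangle d\rho_t$ (integration by parts in the continuity equation plus the $\Phi$‑equation), and the first‑variation formula $\tfrac{d}{dt}\big(\tfrac12\mathcal W_2^2(\rho_t,q)\big)=\int\langle\nabla\Phi_t,\mathbf x-T_t(\mathbf x)\rangle d\rho_t$, these are the ingredients needed to differentiate a Lyapunov functional.

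For the geodesically convex case ($\beta=0$, $\alpha_t=\alpha/t$, $\alpha\ge3$) and a zero‑momentum initialization ($\nabla\Phi_0\equiv0$), I would take the Su–Boyd–Candès–type functional
\begin{equation*}
\mathcal E(t)= t^2 E(\rho_t,q)+\tfrac{(\alpha-1)^2}{2}\int\Big\|\big(\mathbf x-T_t(\mathbf x)\big)+\tfrac{t}{\alpha-1}\nabla\Phi_t(\mathbf x)\Big\|^2 d\rho_t(\mathbf x),
\end{equation*}
differentiate using the three identities and the convexity inequality (with $\beta=0$), and verify that the terms telescope so that $\dot{\mathcal E}(t)\le 0$ for $\alpha\ge3$ (only comparison points $q$ with $E(\rho_\cdot,q)\ge0$ along the trajectory are relevant, the bound being vacuous otherwise). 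Since $\mathcal E(0)$ is a multiple of $\mathcal W_2^2(\rho_0,q)$, monotonicity gives $E(\rho_t,q)\lesssim \mathcal W_2^2(\rho_0,q)/t^2$ for every $q$; taking the infimum over $q\in\mathbf F^{-1}(\{\mathbf F^*\})$, then the supremum over $\mathbf F^*\in\mathbf F(P^*_w\cap\Omega_{\mathbf F}(\mathbf F(\rho_0)))$, invoking Assumption~\ref{assumption1}, checking $\rho_t\in\Omega_{\mathbf F}(\mathbf F(\rho_0))$ (for $K=1$ this follows from monotonicity of the Hamiltonian $F(\rho_t)+\tfrac12\int\|\nabla\Phi_t\|^2 d\rho_t$; in general it is argued as in the proof of Theorem~\ref{thm:MWGraD_convergencerate}), and applying Lemma~\ref{lemma:overwholespace} then gives $\mathcal M(\rho_t)\le(\alpha-1)R/t^2$.

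For the $\beta$‑strongly geodesically convex case ($\alpha_t=2\sqrt\beta$) the plan is identical with the exponentially weighted functional $\mathcal E(t)=e^{\sqrt\beta t}\big[E(\rho_t,q)+\tfrac{\beta}{2}\int\|(\mathbf x-T_t(\mathbf x))+\tfrac1{\sqrt\beta}\nabla\Phi_t(\mathbf x)\|^2 d\rho_t\big]$, whose derivative — now using the convexity inequality \emph{with} its $-\tfrac\beta2\mathcal W_2^2$ term — telescopes to $\dot{\mathcal E}(t)\le-\tfrac{\sqrt\beta}{2}e^{\sqrt\beta t}\int\|\nabla\Phi_t\|^2 d\rho_t\le0$; then $E(\rho_t,q)\le e^{-\sqrt\beta t}\mathcal E(0)=e^{-\sqrt\beta t}[E(\rho_0,q)+\tfrac\beta2\mathcal W_2^2(\rho_0,q)]$, and since $E(\rho_0,q)=\min_k(F_k(\rho_0)-F^*_k)\le\min_l F_l(\rho_0)$ is constant over $q\in\mathbf F^{-1}(\{\mathbf F^*\})$ (using $F_k\ge0$, as for the KL objectives motivating MODO), taking the infimum over $q$, the supremum over $\mathbf F^*$, Assumption~\ref{assumption1}, and Lemma~\ref{lemma:overwholespace} yields the claimed bound. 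The hard part will be making $\dot{\mathcal E}(t)\le0$ rigorous: differentiating the extrapolated squared‑Wasserstein term requires a second‑order calculus for $t\mapsto\tfrac12\mathcal W_2^2(\rho_t,q)$ along the flow — equivalently, controlling the time‑derivative of the moving optimal map $T_t$ — which, as in the W‑AIG analysis of \citet{wang2022accelerated}, must invoke the non‑negative curvature of $(\mathcal P_2,\mathcal W_2)$ rather than naive differentiation; a second, genuinely new difficulty beyond the single‑objective case is that $\tfrac{d}{dt}E(\rho_t,q)$ is governed by the \emph{active} objective's gradient while the flow moves along $\nabla\psi_t$, so the telescoping must reconcile these (e.g.\ via the KKT characterization of $w^{(t)}$), and the sub‑level‑set containment for $K\ge2$ under acceleration is the last delicate point.
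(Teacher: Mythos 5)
Your plan is essentially the paper's proof: the same Lyapunov method, the identical exponentially weighted functional in the strongly convex case (your $\frac{\beta}{2}\lVert(\mathbf x-T_t(\mathbf x))+\frac{1}{\sqrt\beta}\nabla\Phi_t\rVert^2$ is literally $\frac12\lVert\sqrt\beta(\mathbf x-T_t(\mathbf x))+\nabla\Phi_t\rVert^2$), the same multi-objective convexity inequality obtained by weighting with $w^{(t)}$ and using $\sum_k w^{(t)}_k E_k\ge\min_k E_k$, the same Wang-et-al.\ identities for differentiating $t\mapsto\frac12\mathcal W_2^2(\rho_t,q)$, the same Hamiltonian argument for $\rho_t\in\Omega_{\mathbf F}(\mathbf F(\rho_0))$ (the paper's Corollary~\ref{corollary} does this for all $K$, not just $K=1$, via the per-objective energy $W_k$), and the same assembly through Assumption~\ref{assumption1} and Lemma~\ref{lemma:overwholespace}. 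You also correctly spotted that the strongly convex bound silently needs $F_k(q)\ge 0$ to pass from $\min_l(F_l(\rho_0)-F_l(q))$ to $\min_l F_l(\rho_0)$.

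Two places where your plan, as written, would not go through. First, your convex-case functional $t^2E(\rho_t,q)+\frac12\int\lVert(\alpha-1)(\mathbf x-T_t(\mathbf x))+t\nabla\Phi_t\rVert^2d\rho_t$ produces the residual $(3-\alpha)\,t\,E(\rho_t,q)$ after telescoping (as in Su--Boyd--Cand\`es), and in the multi-objective setting $E(\rho_t,q)=\min_k(F_k(\rho_t)-F_k(q))$ can be negative even for $q\in P^*_w$; your parenthetical "the bound is vacuous otherwise" does not save you, because monotonicity of $\mathcal E$ must hold on all of $[0,t]$, including times where $E<0$. This is exactly why the paper uses the coefficient $2$ (not $\alpha-1$) in the extrapolated term and adds $\frac{\lambda}{2}\int\lVert\mathbf x-T_t(\mathbf x)\rVert^2d\rho_t$ with $\lambda=2(\alpha-3)$: then the $2tE_k$ term at the active index cancels $-2t\min_lE_l$ \emph{exactly}, leaving only $t(3-\alpha)\int\lVert\nabla\Phi_t\rVert^2d\rho_t\le0$ with no sign condition on $E$, and yielding the stated constant $(\alpha-1)R/t^2$ rather than your $(\alpha-1)^2R/(2t^2)$ (these agree only at $\alpha=3$). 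Second, the cross term you flag as the "genuinely new difficulty" — at the active index $k$ the derivative of $t^2E$ contributes $t^2\int\langle\nabla\Phi_t,\nabla\delta_\rho F_k[\rho_t]\rangle d\rho_t$ while the momentum equation supplies $-t^2\int\langle\nabla\Phi_t,\nabla\psi_t\rangle d\rho_t$ — cannot be closed by the KKT characterization of $w^{(t)}$ from (\ref{problem:solveweights}): KKT gives $\int\langle\nabla\psi_t,\nabla\delta_\rho F_k[\rho_t]-\nabla\psi_t\rangle d\rho_t\ge0$, i.e.\ a sign for pairings with $\nabla\psi_t$, whereas the leftover pairs $\nabla\delta_\rho F_k[\rho_t]-\nabla\psi_t$ with the momentum field $\nabla\Phi_t$, which is not proportional to $-\nabla\psi_t$ along the accelerated flow. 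The paper closes this via the variational inequality (\ref{lemma6-inequality}) in Lemma~\ref{lemma:AppendixofTheorem2}, $\int\langle\nabla\Phi_t,\nabla\delta_\rho F_k[\rho_t]+\partial_t\nabla\Phi_t+\frac12\nabla\lVert\nabla\Phi_t\rVert^2\rangle d\rho_t\le-\alpha_t\int\lVert\nabla\Phi_t\rVert^2d\rho_t$, which is the one ingredient your proposal is missing and which you would need to establish (it is equivalent to $\int\langle\nabla\Phi_t,\nabla\delta_\rho F_k[\rho_t]-\nabla\psi_t\rangle d\rho_t\le0$ for every $k$).
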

\textit{Sketch of Proof of (\ref{A-MWGrad:ineq1}).} (See Appendix \ref{Appendix:prooftheorem2} for the detailed proofs of (\ref{A-MWGrad:ineq1}) and (\ref{A-MWGrad:ineq2})) For each $k\in[K]$, define the Lyapunov functional
\begin{align*}
\begin{split}
        \mathcal{E}_{k,\lambda}(\rho_t, q) &= t^2 (F_{k}(\rho_t)-F_k(q))\\
        &+\frac{1}{2}\int \lVert 2(\mathbf{x}-T_t(\mathbf{x})) + t \nabla \Phi_t(\mathbf{x}) \rVert^2_2 d\rho_t(\mathbf{x}) \\
        &+ \frac{\lambda}{2}\int \lVert \mathbf{x}-T_t(\mathbf{x}) \rVert^2_2 d\rho_t(\mathbf{x}),
\end{split}
\end{align*}
and let
\begin{align*}
        \mathcal{E}_{\lambda}(\rho_t, q)=\min_{k\in[K]} \mathcal{E}_{k,\lambda}(\rho_t, q).
\end{align*}
With the choice $\lambda=2(\alpha - 3)$, it can be shown that $\mathcal{E}_{\lambda}(\rho_t, q)$ is non-increasing in time, i.e. $\dot{\mathcal{E}}_{\lambda}(\rho_t, q)\leq 0$. Consequently,
\begin{equation*}
\begin{split}
     E(\rho_t, q)&= \min_{k\in[K]}\left\{ F_k(\rho_t)-F_k(q) \right\} \leq \frac{\mathcal{E}_{\lambda}(\rho_t, q)}{t^2}\\
     &\leq \frac{\mathcal{E}_{\lambda}(\rho_0, q)}{t^2}=\frac{(\alpha-1)\mathcal{W}^{2}_2(\rho_0, q)}{t^2}.
\end{split}
\end{equation*}
By Assumption \ref{assumption1}, this yields
\begin{equation*}
    \sup_{\textbf{F}^{*}\in \textbf{F}(P^{*}_w \cap \Omega_{\textbf{F}}(\textbf{F}(p_0)))} \inf_{q\in \textbf{F}^{-1}(\left\{ \textbf{F}^{*} \right\})}E(\rho_t, q) \leq \frac{(\alpha-1)R}{t^2}.
\end{equation*}
Moreover, by Corollary \ref{corollary}, $F_k(\rho_t)\leq F_k(\rho_0)$ for all $k\in[K]$, which ensures that $\rho_t\in \Omega_{\textbf{F}}(\textbf{F}(\rho_0))$. Applying Lemma \ref{lemma:overwholespace} completes the proof of (\ref{A-MWGrad:ineq1}). %Hence, $\mathcal{M}(\rho_{t})$ decays at the accelerated rate $\mathcal{O}(t^{-2})$ along the A-MWGraD flow (\ref{eqn:A-MWGraDFlow}).

\begin{figure*}
\centering
\begin{subfigure}{0.49\textwidth}
\label{fig:first}
     \includegraphics[width=\textwidth]{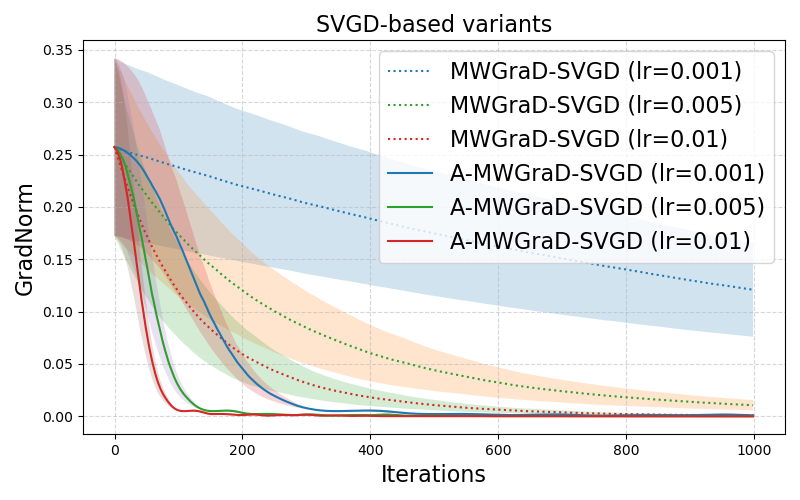}
    \caption{}
\end{subfigure}
%\hfill
\begin{subfigure}{0.49\textwidth}
 \label{fig:second}
    \includegraphics[width=\textwidth]{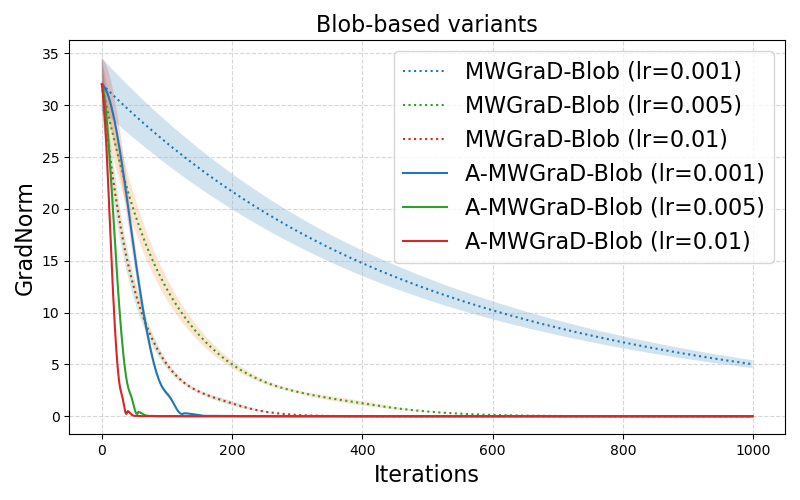}
    \caption{}
\end{subfigure}
\caption{Convergence comparison: (a) SVGD-based variants, (b) Blob-based variants. The plot show mean and standard deviation of $\operatorname{GradNorm}$ over 1000 iterations with three different step sizes of $\eta=0.001, 0.005, 0.01$. }
\label{fig:testerror}
\end{figure*}

\subsection{Particle Implementation of A-MWGraD Flow}
To design fast sampling algorithms, we reformulate the evolution of probability distributions in terms of particle dynamics. Suppose that $\mathbf{x}_{t}\sim \rho_{t}$ and $\textbf{v}_{t}=\nabla\Phi_{t}(\mathbf{x}_{t})$ denotes the position and the velocity of a particle at time $t$, respectively. Then the A-MWGraD flow (\ref{eqn:A-MWGraDFlow}) induces the following particle dynamics:
\begin{align}
\label{eqn:particle}
\begin{split}
        &\dot{\mathbf{x}}_{t} = \textbf{v}_{t},\\
        &\dot{\textbf{v}}_{t} +\alpha_{t}\textbf{v}_{t} +\sum_{k=1}^{K}w_{t,k}\nabla \delta_\rho F_{k}(\rho_t)(\mathbf{x}_t)=0.
\end{split}
\end{align}
See Appendix \ref{Appendix:derv} for the detailed derivation. 

A typical choice of $F_k(\rho)$ for sampling is the KL divergence
\begin{align*}
%\label{energyfunctional}
    F_{k}(\rho)= KL(\rho|| \pi_k)= \int f_kd\rho + \int \log \rho d\rho, 
\end{align*}
where the target distributions satisfy $\pi_k \propto \exp(-f_k)$ for $k\in[K]$. In this case, the particle dynamics become
\begin{align*}
%\label{eqn:particle}
        \dot{\mathbf{x}}_{t} = \mathbf{v}_{t},
        \dot{\mathbf{v}}_{t} +\alpha_{t}\mathbf{v}_{t} +\sum_{k=1}^{K}w_{t,k}\Delta^{(t)}_k(\mathbf{x}_t)=0,
\end{align*}
where $\Delta^{(t)}_k=\nabla f_{k} + \nabla \log \rho_t$ is the Wasserstein gradient of the KL divergence.

\noindent
\textbf{Discrete-Time Particle System.} Consider a particle systems $\left\{\mathbf{x}^{(0)}_i \right\}_{i=1}^{m}$ with initial velocities $\mathbf{v}^{(0)}_i = 0$ for $i\in[m]$. At iteration $n$, the discretized update with step size $\eta>0$ is given by
\begin{align}
\label{eqn:discrete}
\begin{split}
\mathbf{x}^{(n+1)}_i &=\mathbf{x}^{(n)}_i+\sqrt{\eta}\mathbf{v}^{(n)}_i ,\\
\mathbf{v}^{(n+1)}_i &= \alpha_{n}\mathbf{v}^{(n)}_i -\sqrt{\eta}\sum_{k=1}^{K}w_{n,k}\Delta^{(n)}_k(\mathbf{x}^{(n)}_i),
\end{split}
\end{align}
where the momentum parameter is $\alpha_n=(n-1)/(n+2)$ which corresponds to the continuous-time choice $\alpha_t=3/t$ when the objectives are geodesically convex, or $\alpha_n=(1-\sqrt{\beta\eta})/(1+\sqrt{\beta\eta})$ which corresponds to the continuous-time choice $\alpha_t=2\sqrt{\beta}$ when the objectives are $\beta$-strongly geodesically convex. However, direct computation of $\log \rho_n(\mathbf{x})$ is not feasible for empirical measures $\rho_n(\mathbf{x})$. Following \cite{10.5555/3762387.3762523}, we therefore adopt kernel-based approximations, namely Stein Variational Gradient Descent (SVGD) \citep{liu2016stein} and Blob methods \citep{carrillo2019blob}.

\begin{figure}
\begin{subfigure}{0.5\textwidth}
\includegraphics[width=\columnwidth]{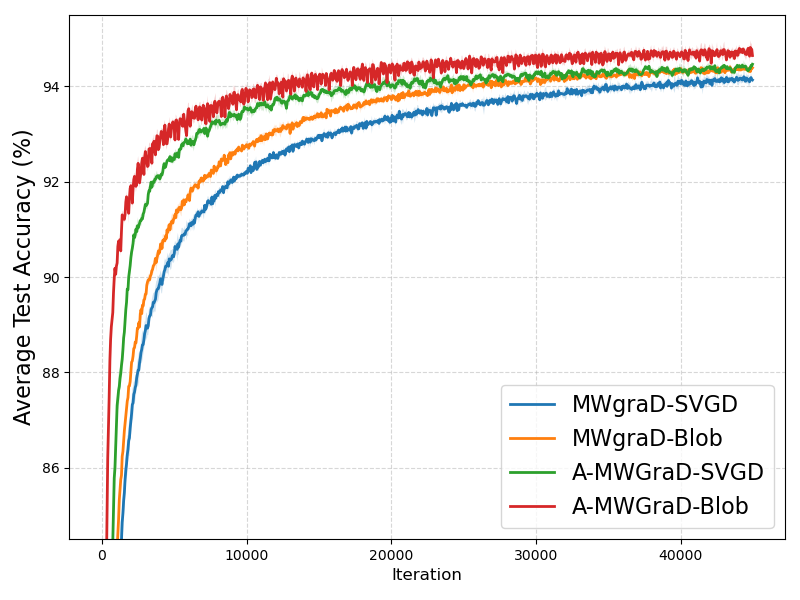}
\subcaption{Multi-MNIST}
\end{subfigure}
\begin{subfigure}{0.5\textwidth}
\includegraphics[width=\columnwidth]{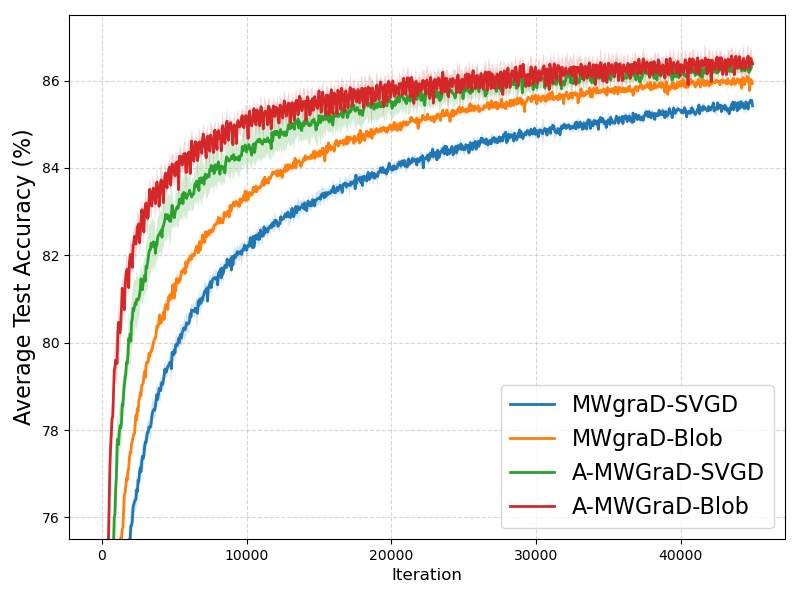}
\subcaption{Multi-Fashion}
\end{subfigure}
\begin{subfigure}{0.5\textwidth}
\includegraphics[width=\columnwidth]{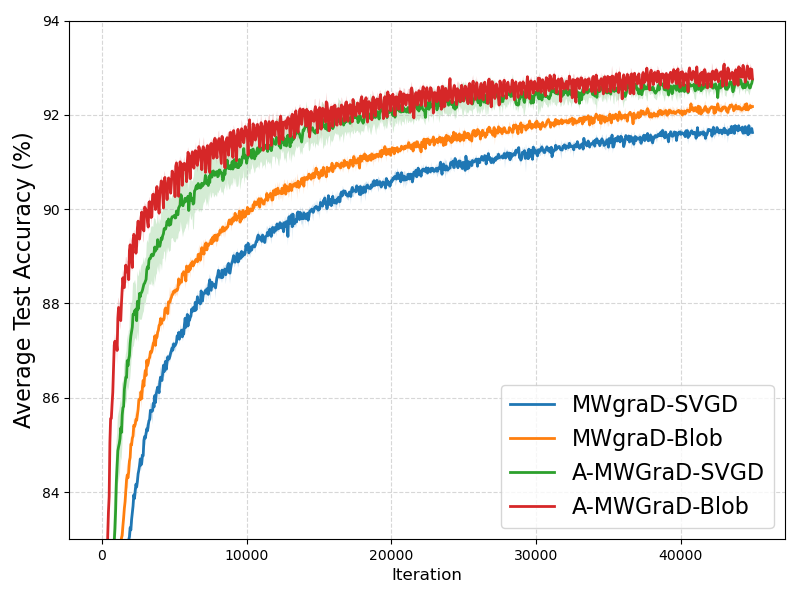}
\subcaption{Multi-Fashion-MNIST}
\end{subfigure}
\caption{Evaluation of the average test accuracies over 40000 training iterations on datasets: (a) Multi-MNIST, (b) Multi-Fashion, and (c) Multi-Fashion-MNIST.}
\label{fig:testaccuracy}
\end{figure}

\noindent
For SVGD \citep{liu2016stein}, we approximate $\Delta^{(n)}_{k}$ vy $\bar{\Delta}^{(n)}_{k}$ (for $k\in[K]$), given by
%$\Tilde{\textbf{v}}_{k}^{(t)}(\mathbf{x})$, given by
\begin{align*}
\begin{split}
   &\bar{\Delta}^{(n)}_{k}(\mathbf{x}) =\mathbb{E}_{\textbf{y}\sim \rho_n}\left[K(\mathbf{x},\textbf{y})\left(\nabla f_{k}(\textbf{y})+\nabla \log \rho_n(\textbf{y})\right) \right]\\
   &= \int_{\mathcal{X}} K(\mathbf{x}, \textbf{y})\nabla f_{k}(\textbf{y})\rho_n(\textbf{y})d\textbf{y} - \int_{\mathcal{X}} \nabla_{\textbf{y}} K(\mathbf{x},\textbf{y}) \rho_n(\textbf{y})d\textbf{y},
\end{split}
\end{align*}
\noindent
where the second equality follows from integration by parts. Consequently, the particle approximation of $\Delta^{(n)}_{k}$ is
\begin{align}
\label{approximate-svgd}
\begin{split}
    \bar{\Delta}^{(n)}_k(\mathbf{x}^{(n)}_i) &=  \sum_{j=1}^{m}K(\mathbf{x}^{(n)}_i, \mathbf{x}^{(n)}_{j})\nabla f_{k}(\mathbf{x}^{(n)}_{j}) \\
    &- \sum_{j=1}^{m}\nabla_{\mathbf{x}^{(n)}_{j}}K(\mathbf{x}^{(n)}_{i}, \mathbf{x}^{(n)}_{j}).
\end{split}
\end{align}
\noindent
For Blob methods \citep{carrillo2019blob},  we approximate $\Delta^{(n)}_{k}$ as follows
\begin{align}
\label{approximate-blob}
\begin{split}
    &\bar{\Delta}^{(n)}_k(\mathbf{x}^{(n)}_i)  =  \nabla f_{k}(\mathbf{x}^{(n)}_i) \\
    &- \sum_{j=1}^{m}\nabla_{\mathbf{x}^{(n)}_{j}} K(\mathbf{x}^{(n)}_i, \mathbf{x}^{(n)}_{j})/ \sum_{l=1}^{m}K(\mathbf{x}^{(n)}_{j}, \mathbf{x}^{(n)}_{l})\\
    &- \sum_{j=1}^{m}\nabla_{\mathbf{x}^{(n)}_{j}}K(\mathbf{x}^{(n)}_i, \mathbf{x}^{(n)}_{j})/\sum_{l=1}^{m}K(\mathbf{x}^{(n)}_i, \mathbf{x}^{(n)}_{j}).
\end{split}
\end{align}
We refer the reader to Proposition 3.12 of \citep{carrillo2019blob} for the detailed derivation. The overall algorithms of MWGraD and A-MWGraD are summarized in Appendix \ref{appendix:algorithm}.

The weights $\mathbf{w}_n$ in \eqref{eqn:discrete} can be approximated by minimizing the squared norm of the combined Wasserstein descent direction:
\begin{equation}
\label{eqn:solveapproximatew}
\mathbf{w}_n=\argmin_{\mathbf{w}\in\mathcal{W}}\frac{1}{m}\sum_{i=1}^{m}\left\lVert\sum_{k=1}^{K}w_k \bar{\Delta}^{(n)}_k (\mathbf{x}^{(n)}_{i})\right\rVert^{2}_2.
\end{equation}

\section{Numerical Experiments}\label{sec:exps}
 In this section, we present numerical experiments demonstrating the acceleration effect of A-MWGraD flow (\ref{eqn:A-MWGraDFlow}). %The code will be released upon acceptance.

\subsection{Experiments on toy examples}
We consider the task of sampling from multiple target distributions, where each target is a mixture of two Gaussians: $\pi_{k}(\mathbf{x})=\gamma_{k1}\mathcal{N}(\mathbf{x}|\mu_{k1}, \Sigma_{k1}) + \gamma_{k2}\mathcal{N}(\mathbf{x}|\mu_{k2}, \Sigma_{k2})$, $k=1,2,3,4$, with mixture weights $\gamma_{k1}=0.7$, $\gamma_{k2}=0.3$ for all $k$. The component means are given by $\mu_{11}=[4, -4]^\top$, $\mu_{12}=[0.1, 0.2]^\top$, $\mu_{21}=[-4, 4]^\top$, $\mu_{22}=[-0.1, 0.3]^\top$, $\mu_{31}=[-4, -4]^\top$, $\mu_{32}=[0.4, -0.4]^\top$, $\mu_{41}=[4, 4]^\top$, $\mu_{42}=[-0.2, 0.3]^\top$, and all covariance matrices $\Sigma_{kj}$ are set to the identity matrix of size $2\times 2$, for $k=1,2,3,4$ and $j=1,2$. 

We represent $\rho$ using 50 particles, initially sampled from a standard Gaussian. The particles are updated using MOO-SVGD \citep{liu2021profiling}, MWGraD variants \citep{10.5555/3762387.3762523} (MWGraD-SVGD and MWGraD-Blob), and the proposed accelerated variants (A-MWGraD-SVGD and A-MWGraD-Blob). Figure~\ref{fig:vis} in Appendix~\ref{Appendix:addedresults} shows the target distributions sharing a common high-density region around the origin. To approximate $\bar{\Delta}^{(n)}_k(\mathbf{x})$ (see (\ref{approximate-svgd}) and (\ref{approximate-blob})), we use a Gaussian kernel with fixed bandwidth 1. For the A-MWGraD variants, we set  $\alpha_n=(n-1)/(n+2)$. To compare MWGraD and A-MWGraD variants, we compute the approximate squared norm of the combined Wasserstein gradients over the particles:
\begin{equation}
\label{m-squared-W2-norm}
    \operatorname{GradNorm}=\frac{1}{m}\sum_{i=1}^{m}\left\lVert \sum_{k=1}^{K}w_{n,k} \bar{\Delta}^{(n)}_k(\mathbf{x}^{(n)}_i)\right \rVert^2_2,
\end{equation}
where $\mathbf{w}_n$ is obtained by solving \eqref{eqn:solveapproximatew}.
It can be verified that (\ref{m-squared-W2-norm}) provides an approximation of the squared norm of the velocity $\mathbf{v}^{(n)}$ in (\ref{eqn:vt}). As (\ref{m-squared-W2-norm}) approaches zero, the distribution $\rho_n$ approaches a Pareto stationary distribution. In this experiment, we perform five independent trials and record the values of $\operatorname{GradNorm}$. Figure~\ref{fig:testerror} report the mean and standard deviation of this quantity over 1000 iterations of particle updates for both the SVGD-based and Blob-based variants. We evaluate three different step sizes of $\eta=0.001, 0.005, 0.01$. We observe that, across all step sizes, the A-MWGraD variants consistently achieve low values of  $\operatorname{GradNorm}$ faster than their MWGraD counterparts.

Furthermore, Figure~\ref{fig:vis} in Appendix~\ref{Appendix:addedresults} visualizes the evolution of the 50 particles updated by different methods across iterations using a fixed step size $0.001$. We see that particles by MOO-SVGD spread across all modes of the target distributions after 2000 iterations, while particles updated by MWGraD and A-MWGraD variants concentrate on the shared high-density region. Notably, the A-MWGraD variants converge significantly faster than MWGraD variants: for example, MWGraD-Blob requires approximately 500 iterations to cover the high-density region, while A-MWGraD-Blob achieves this within only 50 iterations. These results visually and quantitatively confirms the acceleration effect of A-MWGraD in this toy setting.

\begin{table*}[t]
  \centering
  \begin{tabular}{l c c c c c c}
  \hline
    %\textbf{Datasets} & Tasks  & MOO-SVGD  & MWGraD-SVGD & MWGraD-Blob & A-MWGraD-SVGD & A-MWGraD-Blob \\
    %& & -SVGD &  -SVGD & -Blob & -SVGD & -Blob\\
    \multirow{2}{*}{\textbf{Datasets}} & Tasks  & MOO  & MWGraD & MWGraD & A-MWGraD & A-MWGraD \\
    & & -SVGD &  -SVGD & -Blob & -SVGD & -Blob\\
    \hline
    %\multirow{2}{*}{Multi-Fashion+MNIST} &  \#1 & 94.4$\pm$0.6 & 94.8$\pm0.4$ & 96.2$\pm$0.3 & 95.7$\pm$0.4 & \textbf{96.7}$\pm$0.5 & 95.9$\pm$0.4\\
     Multi-Fashion &  \#1 & 94.8$\pm0.4$ & 94.7$\pm$0.3 & 94.1$\pm$0.5 & \textbf{96.4}$\pm$0.4 & 96.1$\pm$0.5\\
    +MNIST&  \#2 & 85.6$\pm$0.2 & 88.9$\pm$0.6 & \textbf{90.5}$\pm$0.4 & 90.3$\pm$0.3 & \textbf{90.7}$\pm$0.4\\
    %& Avg & x & x & x & x & x & x\\
    \hline
    \multirow{2}{*}{Multi-MNIST} &  \#1  & 93.1$\pm$0.3 & 95.3$\pm$0.7  & 94.9$\pm$0.2 & 95.3$\pm$0.5 & \textbf{95.6}$\pm$0.4\\ 
    &  \#2 & 91.2$\pm$0.2 & 92.9$\pm$0.5 & 93.6$\pm$0.5 & 93.4$\pm$0.4 & \textbf{94.2}$\pm$0.4\\ 
    %& Avg & x & x & x & x & x & x\\
    \hline
    \multirow{2}{*}{Multi-Fashion} &  \#1 & 83.8$\pm$0.8 & 85.9$\pm$0.6  & 85.8$\pm$0.3 & 85.1$\pm$0.4 & \textbf{86.3}$\pm$0.5\\ 
    &  \#2  & 83.1$\pm$0.3 & 85.6$\pm$0.5 & 86.3$\pm$0.5 & \textbf{87.4}$\pm$0.6 & 86.5$\pm$0.7\\ 
    \hline
    %& Avg & x & x & x & x & x & x\\
  \end{tabular}
  \caption{Experimental results on Multi-Fashion+MNIST, Multi-MNIST, and Multi-Fashion. Ensemble accuracy (higher is better) averaged over three independent runs with different initializations.}
  \label{tab:results}
\end{table*}

\subsection{Experiments on Bayesian multi-task learning}

We follow the experimental setup of \citep{phan2022stochastic,10.5555/3762387.3762523} to evaluate the acceleration effect of A-MWGraD on real-world datasets.\\

\noindent
\textbf{Bayesian Multi-task Learning}. We consider a Bayesian multi-task learning problem with $K$ prediction tasks and training dataset $\mathbb{D}$. For each task $k\in[K]$, the model parameters are denoted by $\theta^{k}=[\mathbf{x}, \mathbf{z}^{k}]$, where $\mathbf{x}$ represent the shared component across tasks and $\mathbf{z}^{k}$ represents the task-specific component. 
Following \citep{phan2022stochastic}, we maintain a set of $m$ models (particles) $\theta_{i}=[\theta^{k}_{i}]_{k=1}^{K}$ for $i\in[m]$, where $\theta^{k}_{i}=[\mathbf{x}_{i},\textbf{z}_{i}^{k}]$. 
At each iteration, given the task-specific components $\textbf{z}_{i}^{k}$ for $i\in[m],k\in[K]$, we sample the shared component from the multiple target posteriors $p(\mathbf{x}|\mathbf{z}^{k}, \mathbb{D})$, $k\in [K]$ using either MWGraD or A-MWGraD variants. 
Then, for each task $k$, the task-specific components $[\textbf{z}_{i}^{k}]_{i=1}^{m}$ are updated by sampling from the posterior $p(\textbf{z}^{k}|\mathbf{x}, \mathbb{D})$. This procedure corresponds to standard Bayesian particle-based inference and can be implemented using SVGD, Blob, or neural-network samplers \cite{10.5555/3762387.3762523,phan2022stochastic}. For sampling the shared component, we apply the proposed accelerated variants A-MWGraD-SVGD and A-MWGraD-Blob, and consider the following baselines:

\begin{itemize}
    \item MGDA \citep{desideri2012multiple}, a classical multi-objective optimization method in Euclidean space applied to Bayesian multi-task learning by training separable models.
    \item MOO-SVGD \citep{liu2021profiling}, which incorporates multi-objective optimization into the SVGD framework to encourage particle diversity across targets.
    \item MT-SGD \citep{phan2022stochastic}, an SVGD-based multi-target sampling method designed to generate diverse particles in high-density regions shared across targets.
    \item MWGraD variants \citep{10.5555/3762387.3762523}, which perform multi-objective optimization directly in probability spaces without acceleration.
\end{itemize}

In our implementation, MWGraD with the SVGD gradient approximation coincides with MT-SGD; therefore we report results only for MWGraD-SVGD. Furthermore, previous studies \citep{10.5555/3762387.3762523} show that MOO-SVGD and MWGraD variants consistently outperform MGDA in this setting, so MGDA is omitted from the comparison.

%including MOO-SVGD, MT-SGD, MWGraD variants (MWGraD-SVGD and MWGraD-Blob), our proposed A-MWGraD variants (A-MWGraD-SVGD and A-MWGraD-Blob). MT-SGD \citep{phan2022stochastic} is an SVGD-based multi-target sampling method that generates diverse particles in the joint high-density region shared by all targets. In our implementation, MWGraD with SVGD gradient approximation (MWGraD-SVGD) corresponds to MT-SGD, 

\noindent
\textbf{Datasets and Evaluation Metric}. We evaluate the methods on three benchmark datasets: Multi-Fashion-MNIST \citep{sabour2017dynamic}, Multi-MNIST, and Multi-Fashion \citep{phan2022stochastic}. Each dataset contains 120{,}000 training and 20{,}000 testing images generated by overlaying samples from MNIST and FashionMNIST, resulting in a two-task classification problem where each image is associated with two labels. We compare the proposed methods against MOO-SVGD and MWGraD variants. All results are reported using ensemble predictions from five particle models.\\

\noindent
\textbf{Results}. Following the toy example, we evaluate the acceleration effect of A-MWGraD by comparing MWGraD variants (MWGraD-SVGD and MWGraD-Blob) with their accelerated counterparts (A-MWGraD-SVGD and A-MWGraD-Blob) in terms of the average test accuracy over training iterations. 
Figure~\ref{fig:testaccuracy} reports the mean accucracy (of two task-wise accuracies) averaged over five independent trials in 40000 iterations for the three considered datasets. We observe that, on most datasets, A-MWGraD variants reach higher test accuracy faster than MWGraD, demonstrating the acceleration effect of A-MWGraD for these tasks. Table \ref{tab:results} reports the test ensemble accuracy of compared methods for each task after 40000 iterations. We observe that variants of A-MWGraD %consistently outperform other methods. 
often improve convergence compared to unaccelerated ones while achieving competitive or superior performance in several experiments.
For instance, A-MWGraD-SVGD and A-MWGraD-Blob achieve the best performance of 96.4\% and 90.2\% for task 1 and 2, repsectively, on Multi-Fashion+MNIST. On Multi-MNIST, A-MWGraD-Blob achieves the best performance on both tasks (95.6\% and 94.2\%). These results show that A-MWGraD variants outperform MWGraD variants not only in convergence speed but also sampling effectiveness on multi-target sampling tasks.

\section{Conclusions}
\label{sec:conclusion}
We studied multi-objective optimization over probability spaces (MODO) and focused on the MWGraD algorithm \cite{pmlr-v258-nguyen25d}, designed  to solve this problem.
We proposed A-MWGraD, an accelerated variant of MWGraD, based on damped Hamiltonian dynamics underlying Nesterov's acceleration \cite{nesterov2013gradient}. We further conducted both theoretical and empirical analyses to demonstrate its acceleration effect relative to the original method. Despite these advances, several limitations remain: (i) the discrete-time convergence rates of A-MWGraD remain unestablished, and (ii) our convergence analysis assumes exact computation of Wasserstein gradients of all objectives, which is generally infeasible in practice. Our future work will address these issues and explore broader applications of MODO and A-MWGraD beyond multi-task learning.

%We introduced a novel WGF-based approach for VI that operates on variational parameter domains, unlike previous methods, which focus on latent variable domains. This approach makes significant contributions to related fields. Our developed algorithms were empirically validated on the synthetic and real-world datasets, demonstrating their effectiveness. However, our current work is limited to diagonal Gaussian distributions for the algorithmic development. There are two main reasons. First, a single Gaussian may not adequately capture the complexity of the posterior (e.g. rotated Gaussian), but a mixture of diagonal Gaussian can provide a better approximation. Second, the proposed updates (\ref{eqn:mirrorWGD}) might violate the parameter constraints, such as ensuring that the covariance matrix must be positive definite. To address this issue, we opted for the simpler case of diagonal covariance and employed the mirror descent for constrained optimization. We plan to address  full covariance Gaussians in future work.
\newpage
\subsection*{Acknowledgements}
This work is supported by International Collaborative Research Program of Institute for Chemical Research of Kyoto University [grant number: \#2026-20], MEXT KAKENHI [grant number: 26K21294, 25H01144, 26K21756, 24H00685], ICR's iJURC Joint Research Projects in 2025 and 2026. %JSPS KAKENHI [grant number: JP24H00685].
%This work was supported in part by the International Collaborative Research Program of Institute for Chemical Research, Kyoto University (grant \#2025-29), MEXT KAKENHI [grant number: 26K21294] (to D.H.N.), MEXT KAKENHI [grant numbers: 25H01144, 26K21756] (to H.M.), ICR's iJURC Joint Research Projects in 2025 and 2026 (to H. M.) and JSPS KAKENHI [grant number: JP24H00685] (to A. N.).

\section*{Impact Statement}

This paper presents work whose goal is to advance the field of Bayesian Machine Learning. There are many potential social consequences to collecting user data, even privately, none of which we feel must be specified here.

% In the unusual situation where you want a paper to appear in the
% references without citing it in the main text, use \nocite
\nocite{langley00}

\bibliography{paper}
\bibliographystyle{icml2026}

%%%%%%%%%%%%%%%%%%%%%%%%%%%%%%%%%%%%%%%%%%%%%%%%%%%%%%%%%%%%%%%%%%%%%%%%%%%%%%%
%%%%%%%%%%%%%%%%%%%%%%%%%%%%%%%%%%%%%%%%%%%%%%%%%%%%%%%%%%%%%%%%%%%%%%%%%%%%%%%
% APPENDIX
%%%%%%%%%%%%%%%%%%%%%%%%%%%%%%%%%%%%%%%%%%%%%%%%%%%%%%%%%%%%%%%%%%%%%%%%%%%%%%%
%%%%%%%%%%%%%%%%%%%%%%%%%%%%%%%%%%%%%%%%%%%%%%%%%%%%%%%%%%%%%%%%%%%%%%%%%%%%%%%
\newpage
\appendix
\onecolumn

\section{Detailed Proof of Lemma \ref{lemma:overwholespace}}
\label{Appendix:prooflemma1}
\begin{proof}
Recall that
\begin{equation*}
    E(\rho, q) = \min_{k\in[K]}\left\{ F_k(\rho)- F_k(q) \right\}.
\end{equation*}
Then
\begin{align*}
    \sup_{\textbf{F}^{*}\in \textbf{F}(P^{*}_w\cap\Omega_{\textbf{F}}(\textbf{F}(\rho_{0})))}\inf_{q\in\textbf{F}^{-1}(\textbf{F}^{*})}E(\rho, q)&= \sup_{\textbf{F}^{*}\in \textbf{F}(P^{*}_w\cap\Omega_{\textbf{F}}(\textbf{F}(\rho_{0})))} \min_{k\in[K]}\left\{ F_k(\rho)- F_{k}^{*}\right\}\\
    &= \sup_{q\in P^{*}_w\cap\Omega_{\textbf{F}}(\textbf{F}(\rho_{0}))} \min_{k\in[K]}\left\{ F_k(\rho)- F_{k}(q)\right\}.
\end{align*}
Since $\rho\in \Omega_{\textbf{F}}(\textbf{F}(\rho_0))$, we have $F_k(\rho)\leq F_k(\rho_0)$ for $k\in[K]$. Hence,
\begin{align*}
    \sup_{q\in P^{*}_w\cap\Omega_{\textbf{F}}(\textbf{F}(\rho_{0}))} \min_{k\in[K]}\left\{ F_k(\rho)- F_{k}(q)\right\}&=\sup_{q\in \Omega_{\textbf{F}}(\textbf{F}(\rho_{0}))} \min_{k\in[K]}\left\{ F_k(\rho)- F_{k}(q)\right\}\\
    &= \sup_{q\in \mathcal{P}_2(\mathcal{X})} \min_{k\in[K]}\left\{ F_k(\rho)- F_{k}(q)\right\},
\end{align*}
which completes the proof.
\end{proof}

\section{Detailed Proof of Theorem \ref{thm:MWGraD_convergencerate}}
\label{Appendix:prooftheorem1}
In this section, we provide the detailed proof of Theorem \ref{thm:MWGraD_convergencerate}. We first recall the following auxiliary result from \cite{sonntag2024fast}.
\begin{lemma}
\label{lemma:deriv_h}
    Let $\left\{h_{k}\right\}_{k=1}^{K}$ be continuously differentiable functions $h_{k}:\mathbb{R}_{\geq 0}\rightarrow \mathbb{R}$, and define $h(t)=\min_{k\in [K]}h_{k}(t)$. Then:\\
    \begin{itemize}
        \item[(i)] $h$ is differentiable for almost every $t\ge 0$.\\
        \item[(ii)] for almost every $t\geq 0$, there exists $k\in [K]$ such that $h(t)=h_{k}(t)$ and $\dot{h}(t)=\dot{h}_{k}(t)$.
    \end{itemize}
\end{lemma}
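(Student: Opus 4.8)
\textit{Proof proposal.} The plan is to reduce the whole statement to the behavior of the pairwise differences $g_{ij}:=h_i-h_j$, which are $C^1$, together with a local argument that identifies $h$ near a ``generic'' time $t$ with the minimum over only the currently active functions. First I would record that $h=\min_k h_k$ is locally Lipschitz, being a pointwise minimum of finitely many $C^1$ (hence locally Lipschitz) functions; by the one–dimensional differentiation theorem for Lipschitz functions it is therefore differentiable at almost every $t$, with exceptional null set $B_0$. This already gives (i). For (ii), for each ordered pair $i\neq j$ I would introduce $N_{ij}:=\{t\ge 0: h_i(t)=h_j(t),\ \dot h_i(t)\neq\dot h_j(t)\}$ and argue it is countable, hence null: at any $t\in N_{ij}$ the $C^1$ function $g_{ij}$ satisfies $g_{ij}(t)=0$ and $g_{ij}'(t)\neq 0$, so $g_{ij}$ is strictly monotone on a neighborhood of $t$, making $t$ isolated in the zero set $\{g_{ij}=0\}$; a set of isolated points of $\mathbb{R}$ is countable. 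I then set $B:=B_0\cup\{0\}\cup\bigcup_{i\neq j}N_{ij}$, which is null.

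Next I would work at a fixed good time $t\notin B$. Let $A(t):=\{k: h_k(t)=h(t)\}$, which is nonempty. For any $k,l\in A(t)$ we have $h_k(t)=h_l(t)$, so $t\notin N_{kl}$ forces $\dot h_k(t)=\dot h_l(t)$; call this common value $c$. It remains to show $\dot h(t)=c$. Pick any $k\in A(t)$: since $h\le h_k$ everywhere with equality at the interior point $t$, the function $h_k-h$ is nonnegative and vanishes at $t$, hence has a local minimum there; it is differentiable at $t$ (as $h_k\in C^1$ and $t\notin B_0$), so the first-order condition yields $\dot h_k(t)-\dot h(t)=0$, i.e.\ $\dot h(t)=\dot h_k(t)=c$. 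This $k$ then witnesses $h(t)=h_k(t)$ and $\dot h(t)=\dot h_k(t)$, proving (ii) for all $t\notin B$.

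If one prefers to avoid invoking Rademacher/Lebesgue differentiation, I would instead establish differentiability of $h$ at each $t\notin\{0\}\cup\bigcup_{i\neq j}N_{ij}$ directly: continuity forces the inactive functions to stay bounded away from $h(t)$ near $t$, so $h(s)=\min_{k\in A(t)}h_k(s)$ on a neighborhood of $t$; each active $h_k$ has the expansion $h_k(s)=h(t)+c(s-t)+o(s-t)$, and the minimum of finitely many functions with the same first-order part inherits that part, so $h(s)=h(t)+c(s-t)+o(s-t)$, giving $\dot h(t)=c$. Either route delivers both (i) and (ii). The step I expect to require the most care is precisely this ``restrict to the active set'' reduction, since it is where the finiteness of $K$ and the continuity of the $h_k$ are essential; everything else is bookkeeping with null sets.
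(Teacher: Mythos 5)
The paper does not actually prove this lemma: it is stated as an auxiliary result recalled from \citet{sonntag2024fast}, so there is no in-paper proof to compare against. Your argument is correct and self-contained, and it is essentially the standard proof of this fact: local Lipschitzness of a finite minimum of $C^1$ functions plus the one-dimensional differentiation theorem gives (i), and Fermat's rule applied to the nonnegative function $h_k-h$ at an active index $k$ gives (ii). One small observation: once (i) is in hand, the Fermat step alone already yields $\dot h(t)=\dot h_k(t)$ for \emph{every} $k$ in the active set $A(t)$, so the countability of the crossing sets $N_{ij}$ is not needed on that route; it only becomes essential in your alternative, Rademacher-free argument, where you localize $h$ to $\min_{k\in A(t)}h_k$ and use that all active first-order expansions share the slope $c$. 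Both routes are sound; the only step worth writing carefully is the one you flag yourself, namely that continuity keeps the inactive $h_j$ strictly above the active ones on a neighborhood of $t$, which is where finiteness of $K$ enters.
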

Let $\left\{\rho_{t}\right\}_{t\geq 0}$ be the solution of the MWGraD flow (\ref{eqn:MWGraDFlow}), and let $q\in \mathcal{P}_2(\mathcal{X})$ be an arbitrary reference distribution (e.g., $ q\in P^{*}_w$).

We have the following lemma.
\begin{lemma}
\label{lemma:nonincreasing2}
    $E(\rho_t, q)$ is non-increasing along the MWGraD flow (\ref{eqn:MWGraDFlow}), i.e.,
    \begin{equation*}
        \dot{E}(\rho_t, q) \leq 0.
    \end{equation*}
\end{lemma}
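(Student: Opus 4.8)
\textit{Proof proposal.} The plan is to prove the stronger statement that each $E_k(\rho_t,q)=F_k(\rho_t)-F_k(q)$ is non-increasing in $t$, and then transfer this to the pointwise minimum $E(\rho_t,q)$ via Lemma~\ref{lemma:deriv_h}. Since $q$ is fixed, I would first differentiate $E_k$ along the flow using the chain rule for functionals together with the definition of the first variation, obtaining
\[
\dot E_k(\rho_t,q)=\frac{d}{dt}F_k(\rho_t)=\int \delta_\rho F_k[\rho_t]\,\dot\rho_t\,d\mathbf{x}.
\]
Substituting the continuity equation $\dot\rho_t=-\nabla\cdot(\rho_t\nabla\Phi_t)$ and integrating by parts (no boundary terms) yields
\[
\dot E_k(\rho_t,q)=\int \langle \nabla\delta_\rho F_k[\rho_t],\nabla\Phi_t\rangle\,d\rho_t .
\]

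Next I would exploit the second equation of the MWGraD flow \eqref{eqn:MWGraDFlow}, which states $\Phi_t=-g_t$ where $g_t:=\texttt{proj}_{\mathcal{C}(\rho_t),\rho_t}[0]$; that is, $g_t$ is the point of the closed convex set $\mathcal{C}(\rho_t)\subset T^*_{\rho_t}\mathcal{P}(\mathcal{X})$ closest to $0$ in the (well-defined on the quotient) Wasserstein inner product $\langle f,h\rangle_{\rho_t}:=\int\langle\nabla f,\nabla h\rangle\,d\rho_t$. The obtuse-angle characterization of the metric projection onto a convex set, applied with base point $0$, gives $\langle 0-g_t,\,h-g_t\rangle_{\rho_t}\le 0$ for every $h\in\mathcal{C}(\rho_t)$, equivalently $\langle g_t,h\rangle_{\rho_t}\ge \lVert g_t\rVert_{\rho_t}^2\ge 0$. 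Since $\delta_\rho F_k[\rho_t]$ is one of the generators of $\mathcal{C}(\rho_t)=\texttt{conv}\{\delta_\rho F_j[\rho_t]:j\in[K]\}$, it belongs to $\mathcal{C}(\rho_t)$, so choosing $h=\delta_\rho F_k[\rho_t]$ gives
\[
\dot E_k(\rho_t,q)=-\langle \delta_\rho F_k[\rho_t],\,g_t\rangle_{\rho_t}\le -\lVert g_t\rVert_{\rho_t}^2\le 0 ,
\]
and this holds for every $k\in[K]$. Finally, by Lemma~\ref{lemma:deriv_h}, for almost every $t\ge 0$ the function $E(\cdot,q)=\min_k E_k(\cdot,q)$ is differentiable at $t$ with $\dot E(\rho_t,q)=\dot E_k(\rho_t,q)$ for some active index $k$, hence $\dot E(\rho_t,q)\le 0$, which is the claim.

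The core of the argument is short: it is just the variational inequality for projection onto a convex set, combined with the fact that the generators of the convex hull lie in it. The main obstacle is therefore analytic bookkeeping rather than a substantive idea. Specifically, one must justify that $t\mapsto\rho_t$ is differentiable and that the chain rule $\frac{d}{dt}F_k(\rho_t)=\int\delta_\rho F_k[\rho_t]\dot\rho_t\,d\mathbf{x}$ and the integration by parts are legitimate under the regularity/decay assumed on $F_k$, $\rho_t$, $\Phi_t$, and one must handle the a.e.-differentiability of the $\min$ through Lemma~\ref{lemma:deriv_h}. I would remark that although the optimal weights $\mathbf{w}^{(t)}$ (hence $g_t$ and $\Phi_t$) may fail to be smooth in $t$ because the convex hull $\mathcal{C}(\rho_t)$ moves, this is harmless: $g_t$ enters the computation of $\dot E_k(\rho_t,q)$ only algebraically at each fixed $t$, so only differentiability of $t\mapsto\rho_t$ is actually needed.
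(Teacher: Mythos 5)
Your proposal is correct and follows essentially the same route as the paper's proof: the same computation of $\dot E_k$ via the continuity equation and integration by parts, the same variational (obtuse-angle) inequality for the projection of $0$ onto $\mathcal{C}(\rho_t)$ tested against $h=\delta_\rho F_k[\rho_t]$, yielding $\dot E_k(\rho_t,q)\le -\int\lVert\nabla\Phi_t\rVert^2\,d\rho_t\le 0$, and the same use of Lemma~\ref{lemma:deriv_h} to pass to the minimum. The only difference is cosmetic: the paper phrases the projection inequality as first-order optimality of the convex functional $\mathcal{H}(h)=\int\langle\nabla h,\nabla h\rangle\,d\rho_t$ over $\mathcal{C}(\rho_t)$, which is equivalent to your obtuse-angle characterization.
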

\begin{proof}
    For each objective,
    \begin{equation*}
        E_{k}(\rho_t, q)= F_{k}(\rho_{t}) - F_{k}(q).
    \end{equation*}
    Differentiating yields
    \begin{equation}
    \label{eqn:appendixA1}
        \dot{E}_{k}(\rho_t, q)= \int \langle \nabla \Phi_{t}, \nabla \delta_\rho F_k(\rho_t)\rangle d \rho_{t}.
    \end{equation}
    By definitions of the MWGraD flow (\ref{eqn:MWGraDFlow}) and the projection operator (\ref{eqn:projection}),
    \begin{equation}
    \label{eqn:h_star}
        \Phi_{t} = -h^{*}=- \argmin_{h\in \mathcal{C}(\rho_{t})}\int \langle \nabla h, \nabla h\rangle d\rho_{t}=-\argmin_{h\in\mathcal{C}(\rho_{t})} \mathcal{H}(h),
    \end{equation}
    where $\mathcal{H}(h)=\int \langle \nabla h, \nabla h\rangle d \rho_{t}$.
    
    The functional $\mathcal{H}(h)$ is convex with respect to $h$, and its first variation is 
    \begin{equation*}
       \delta_h\mathcal{H}(h) = - \nabla \cdot (\rho_{t}\nabla h).
    \end{equation*}
    Optimality implies 
    \begin{equation*}
        \int  \delta_h\mathcal{H}(h^*)( h-h^{*})   d\mathbf{x}\geq 0,
    \end{equation*}
    which is equivalent to
     \begin{equation*}
        -\int  \nabla \cdot (\rho_{t}\nabla h^{*}) (h-h^{*})   d\mathbf{x}= \int  \langle \nabla h - \nabla h^{*}, \nabla h^{*} \rangle   d\rho_t\geq 0.
    \end{equation*}
    Choosing $h=\delta_{\rho}F_k(\rho_t)\in \mathcal{C}(p_{t})$ and using $h^{*}=-\Phi_{t}$ in (\ref{eqn:h_star}) gives
    \begin{equation}
    \label{eqn:appendixA2}
        \int \langle \nabla \Phi_{t}, \nabla \delta_\rho F_k(\rho_t) \rangle d \rho_{t} \leq - \int \lVert \nabla \Phi_{t}\rVert^{2}_2d \rho_{t}.
    \end{equation}
    Combining (\ref{eqn:appendixA1}) and (\ref{eqn:appendixA2}) and using Lemma \ref{lemma:deriv_h} yields 
    \begin{equation*}
        \dot{E}(\rho_t, q) \leq -\int \lVert \nabla \Phi_{t}\rVert^{2}_2d \rho_{t}\leq 0,
    \end{equation*}
    which concludes the proof.
\end{proof}

Now we are ready to provide the proof of Theorem \ref{thm:MWGraD_convergencerate}.
\begin{proof}
    Recall the definition of Lyapunov functional
    \begin{equation*}
        \mathcal{V}(q,t)=\frac{1}{2}\mathcal{W}^{2}_{2}(\rho_{t}, q)=\frac{1}{2}\int \lVert T_{t}(\mathbf{x})-\mathbf{x}\rVert^{2}_2d\rho_{t}(\mathbf{x}),
    \end{equation*}
    where $T_{t}$ is the optimal transport mapping from $\rho_{t}$ to $q$. 
    
    Differentiating $\mathcal{V}(q,t)$ with respect to $t$ gives
    \begin{equation}
    \label{eqn:AppendixB3}
        \dot{\mathcal{V}}(q, t) = \int \langle T_{t}(\mathbf{x})-\mathbf{x}, \dot{T}_{t}(\mathbf{x})\rangle d\rho_{t}(\mathbf{x}) + \int \langle \nabla\Phi_{t}(\mathbf{x}), \left[\nabla T_{t}-\textbf{I}_d \right](T_{t}(\mathbf{x})-\mathbf{x}) \rangle d\rho_{t}(\mathbf{x}),
    \end{equation}
   where $\textbf{I}_d\in \mathbb{R}^{d\times d}$ denotes the identity matrix.

   Using the identity from \citep{wang2022accelerated}, we obtain 
    \begin{equation}
    \label{eqn:AppendixB4}
        \int \langle T_{t}(\mathbf{x})-\mathbf{x}, \dot{T}_{t}(\mathbf{x})\rangle d\rho_{t}(\mathbf{x}) + \int \langle \nabla\Phi_{t}(\mathbf{x}), \nabla T_{t}(\mathbf{x}) (T_{t}(\mathbf{x})-\mathbf{x}) \rangle d\rho_{t}(\mathbf{x}) = 0.
    \end{equation}
    Using (\ref{eqn:AppendixB3}) and (\ref{eqn:AppendixB4}) gives
    \begin{equation*}
        \dot{\mathcal{V}}(q,t) = \int \langle \nabla\Phi_{t}(\mathbf{x}), \mathbf{x}-T_{t}(\mathbf{x}) \rangle d\rho_{t}(\mathbf{x}).
    \end{equation*}
    By geodesic convexity of each $F_{k}$ ($k\in[K]$), it follows that
    \begin{equation}
    \label{eqn:AppendixB5}
        F_{k}(\rho_{t}) - F_{k}(q)\leq \int \langle \nabla \delta_\rho F_{k}(\rho_t)(\mathbf{x}), \mathbf{x}-T_{t}(\mathbf{x})\rangle d\rho_{t}(\mathbf{x}).
    \end{equation}
    Multiplying each inequality (\ref{eqn:AppendixB5}) by $w_{t,k}$ and summing over $k$ yields
    \begin{equation*}
        \sum_{k=1}^{K}w_{t,k} (F_{k}(\rho_{t})-F_{k}(q))\leq - \int \langle \nabla \Phi_{t}(\mathbf{x}), \mathbf{x}-T_{t}(\mathbf{x})\rangle d\rho_{t}(\mathbf{x}) = - \dot{\mathcal{V}}(q, t).
    \end{equation*}
    Hence
    \begin{equation}
    \label{eqn:AppendixB6}
        E(\rho_t, q)=\min_{k\in[K]}\left\{ F_{k}(\rho_{t})- F_{k}(q)\right\}\leq -  \dot{\mathcal{V}}(q, t).
    \end{equation}
    Integrating (\ref{eqn:AppendixB6}) over $[0,t]$ gives
    \begin{equation*}
        \int_{0}^{t}E(\rho_s, q)ds \leq - (\mathcal{V}(q,t)-\mathcal{V}(q, 0))\leq \mathcal{V}(q, 0),
    \end{equation*}
    where we used the fact that $\mathcal{V}(q, t)\geq 0$.\\
    By Lemma \ref{lemma:nonincreasing2}, $E(\rho_t, q)$ is non-increasing, so we have
    \begin{equation*}
        E(\rho_t, q)\leq \frac{\mathcal{V}(q, 0)}{t}=\frac{1}{2t}\mathcal{W}^{2}_{2}(\rho_{0}, q).
    \end{equation*}
    By Assumption \ref{assumption1}, we further obtain
    \begin{equation*}
        \sup_{\textbf{F}^{*}\in \textbf{F}(P^{*}_w\cap\Omega_{\textbf{F}}(\textbf{F}(p_{0})))}\inf_{q\in \textbf{F}^{-1}(\left\{ \textbf{F}^{*}\right\})}E(\rho_t, q)\leq \frac{R}{2t}.
    \end{equation*}
    
    Finally, from Lemma \ref{lemma:nonincreasing2},  $E_k(\rho_t, q)=F_{k}(\rho_{t})-F_{k}(q)$ is non-increasing, which means that $F_{k}(\rho_{t})\leq F_{k}(\rho_0)$ for $k\in[K]$. Thus $\rho_t\in \Omega_{\textbf{F}}(\textbf{F}(\rho_0))$. Applying Lemma \ref{lemma:overwholespace} gives
    \begin{align*}
        \mathcal{M}(\rho_t)=\sup_{q\in \mathcal{P}_2(\mathcal{X})} E(\rho_t, q)\leq\frac{R}{2t}.
    \end{align*}

\end{proof}

\section{Detailed Proof of Theorem \ref{thm:A-MWGraD_convergencerate}}
\label{Appendix:prooftheorem2}
In this section, we present the detailed proof of Theorem \ref{thm:A-MWGraD_convergencerate}. 

Let $\left\{ \rho_{t}\right\}_{t\geq 0}$ be a solution of the A-MWGraD flow (\ref{eqn:A-MWGraDFlow}). For each $k\in [K]$, define the following global energy 
\begin{equation*}
    W_{k}(t)=F_{k}(\rho_t) + \frac{1}{2}\int \lVert \nabla \Phi_{t}\rVert^{2}_2d\rho_t.
\end{equation*}
We have the following lemma.
\begin{lemma}
\label{lemma:AppendixofTheorem2}
    For all $k\in[K]$, it follows that
    \begin{equation*}
        \dot{W}_{k}(t) \leq - \alpha_{t}\int \lVert \nabla \Phi_{t}\rVert^{2}d \rho_t\leq 0.
    \end{equation*}
\end{lemma}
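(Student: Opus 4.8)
\textbf{Proof proposal for Lemma~\ref{lemma:AppendixofTheorem2}.}

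The plan is to differentiate $W_k(t) = F_k(\rho_t) + \frac{1}{2}\int \lVert \nabla\Phi_t\rVert^2 d\rho_t$ directly along the A-MWGraD flow (\ref{eqn:A-MWGraDFlow}) and show the time derivative is controlled by the damping term. First I would compute $\frac{d}{dt}F_k(\rho_t)$ using the continuity equation $\dot\rho_t + \nabla\cdot(\rho_t\nabla\Phi_t)=0$ and the definition of the first variation: this gives $\frac{d}{dt}F_k(\rho_t) = \int \delta_\rho F_k[\rho_t]\,\dot\rho_t\,d\mathbf{x} = \int \langle \nabla\delta_\rho F_k[\rho_t], \nabla\Phi_t\rangle\,d\rho_t$ after integration by parts. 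Next I would compute $\frac{d}{dt}\left(\frac{1}{2}\int \lVert\nabla\Phi_t\rVert^2 d\rho_t\right)$, which has two contributions: one from the evolution of $\rho_t$ (again via the continuity equation, producing a term involving $\nabla\lVert\nabla\Phi_t\rVert^2$) and one from the evolution of $\Phi_t$, namely $\int \langle\nabla\Phi_t, \nabla\dot\Phi_t\rangle\,d\rho_t$. Substituting the second equation of (\ref{eqn:A-MWGraDFlow}), $\dot\Phi_t = -\alpha_t\Phi_t - \frac{1}{2}\lVert\nabla\Phi_t\rVert^2 - \texttt{proj}_{\mathcal{C}(\rho_t),\rho_t}[0]$, and taking gradients then pairing against $\nabla\Phi_t$, I expect the $\frac{1}{2}\lVert\nabla\Phi_t\rVert^2$ term to cancel exactly against the transport-induced term (this is the standard Hamiltonian-energy cancellation used in \citet{wang2022accelerated}), leaving $-\alpha_t\int\langle\nabla\Phi_t,\nabla\Phi_t\rangle d\rho_t$ from the damping plus a cross term $-\int\langle\nabla\Phi_t, \nabla\texttt{proj}_{\mathcal{C}(\rho_t),\rho_t}[0]\rangle d\rho_t$.

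Combining, $\dot W_k(t) = -\alpha_t\int\lVert\nabla\Phi_t\rVert^2 d\rho_t + \int\langle\nabla\Phi_t, \nabla\delta_\rho F_k[\rho_t] - \nabla\texttt{proj}_{\mathcal{C}(\rho_t),\rho_t}[0]\rangle d\rho_t$. The key step is to bound the cross term by zero. Writing $h^* = \texttt{proj}_{\mathcal{C}(\rho_t),\rho_t}[0] = -\Phi_t$ as in (\ref{eqn:h_star}), the variational inequality characterizing the projection onto the convex set $\mathcal{C}(\rho_t)$ gives, exactly as in the proof of Lemma~\ref{lemma:nonincreasing2} (equation (\ref{eqn:appendixA2})), that $\int\langle\nabla\Phi_t, \nabla\delta_\rho F_k[\rho_t]\rangle d\rho_t \leq -\int\lVert\nabla\Phi_t\rVert^2 d\rho_t$ for every $k$, since $\delta_\rho F_k[\rho_t]\in\mathcal{C}(\rho_t)$. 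Meanwhile $\nabla\texttt{proj}_{\mathcal{C}(\rho_t),\rho_t}[0] = -\nabla\Phi_t$, so $-\int\langle\nabla\Phi_t,\nabla\texttt{proj}_{\mathcal{C}(\rho_t),\rho_t}[0]\rangle d\rho_t = \int\lVert\nabla\Phi_t\rVert^2 d\rho_t$. Adding these, the cross term is $\leq -\int\lVert\nabla\Phi_t\rVert^2 d\rho_t + \int\lVert\nabla\Phi_t\rVert^2 d\rho_t = 0$. Hence $\dot W_k(t)\leq -\alpha_t\int\lVert\nabla\Phi_t\rVert^2 d\rho_t$, and since $\alpha_t\geq 0$, this is $\leq 0$.

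The main obstacle I anticipate is the bookkeeping in the second contribution: carefully justifying the integration-by-parts identity that makes the $\frac{1}{2}\lVert\nabla\Phi_t\rVert^2$ self-interaction term cancel with the term coming from differentiating $\rho_t$ inside $\int\lVert\nabla\Phi_t\rVert^2 d\rho_t$. Concretely, one writes $\frac{d}{dt}\int \lVert\nabla\Phi_t\rVert^2 d\rho_t = \int \lVert\nabla\Phi_t\rVert^2\,\dot\rho_t\,d\mathbf{x} + 2\int\langle\nabla\Phi_t,\nabla\dot\Phi_t\rangle d\rho_t$; the first piece equals $\int\langle\nabla\lVert\nabla\Phi_t\rVert^2,\nabla\Phi_t\rangle d\rho_t$ after using the continuity equation and integrating by parts, and this must be matched against the $-\frac{1}{2}\cdot 2\int\langle\nabla\Phi_t,\nabla\lVert\nabla\Phi_t\rVert^2\rangle d\rho_t$ piece of $2\int\langle\nabla\Phi_t,\nabla\dot\Phi_t\rangle d\rho_t$. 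These are exactly equal and opposite, so they cancel — but making the smoothness/decay hypotheses under which all these integrations by parts are valid precise is the delicate part; I would simply invoke the corresponding computation in \citet{wang2022accelerated} for the single-objective Hamiltonian flow, which is structurally identical here with $\delta_\rho F[\rho_t]$ replaced by $\texttt{proj}_{\mathcal{C}(\rho_t),\rho_t}[0]$.
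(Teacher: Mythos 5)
Your computation of $\dot W_k(t)$ is correct and matches the paper's: the transport contribution $\tfrac12\int\lVert\nabla\Phi_t\rVert^2\dot\rho_t\,d\mathbf{x}$ does cancel exactly against the $-\tfrac12\nabla\lVert\nabla\Phi_t\rVert^2$ piece of $\nabla\dot\Phi_t$, leaving
$\dot W_k(t)=-\alpha_t\int\lVert\nabla\Phi_t\rVert^2 d\rho_t+\int\langle\nabla\Phi_t,\nabla\delta_\rho F_k[\rho_t]-\nabla g_t^*\rangle d\rho_t$ with $g_t^*=\texttt{proj}_{\mathcal{C}(\rho_t),\rho_t}[0]$. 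The gap is in how you kill the cross term. You invoke $g_t^*=-\Phi_t$ ``as in (\ref{eqn:h_star})'', but that identity is the \emph{defining relation} of the non-accelerated flow (\ref{eqn:MWGraDFlow}); along the accelerated flow (\ref{eqn:A-MWGraDFlow}) the momentum $\Phi_t$ is an independent dynamical variable satisfying $\dot\Phi_t+\alpha_t\Phi_t+\tfrac12\lVert\nabla\Phi_t\rVert^2+g_t^*=0$, so $g_t^*\neq-\Phi_t$ in general (if it were, the second equation of (\ref{eqn:A-MWGraDFlow}) would force $\dot\Phi_t=(1-\alpha_t)\Phi_t-\tfrac12\lVert\nabla\Phi_t\rVert^2$, an extra constraint not implied by the dynamics). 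Consequently both of the inequalities you add are unjustified here: $\int\langle\nabla\Phi_t,\nabla\delta_\rho F_k[\rho_t]\rangle d\rho_t\le-\int\lVert\nabla\Phi_t\rVert^2 d\rho_t$ is (\ref{eqn:appendixA2}), which was proved only for the MWGraD flow where $\Phi_t$ itself is the negative projection, and $-\int\langle\nabla\Phi_t,\nabla g_t^*\rangle d\rho_t=\int\lVert\nabla\Phi_t\rVert^2 d\rho_t$ rests on the same false identification. The projection optimality of $g_t^*$ only relates $\nabla\delta_\rho F_k[\rho_t]$ to $\nabla g_t^*$ (namely $\int\langle\nabla\delta_\rho F_k[\rho_t],\nabla g_t^*\rangle d\rho_t\ge\int\lVert\nabla g_t^*\rVert^2 d\rho_t$), not to the momentum direction $\nabla\Phi_t$, so it cannot by itself bound $\int\langle\nabla\Phi_t,\nabla\delta_\rho F_k[\rho_t]-\nabla g_t^*\rangle d\rho_t$ by zero.

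The paper closes this step differently: it rewrites $-\alpha_t\Phi_t=\dot\Phi_t+\tfrac12\lVert\nabla\Phi_t\rVert^2+g_t^*$ and treats $-\alpha_t\Phi_t$ as the projection of $0$ onto the \emph{translated} set $\mathcal{C}(\rho_t)+\dot\Phi_t+\tfrac12\lVert\nabla\Phi_t\rVert^2$, then applies the variational inequality with comparison point $h=\delta_\rho F_k[\rho_t]+\dot\Phi_t+\tfrac12\lVert\nabla\Phi_t\rVert^2$ to bound the entire bracket $\int\langle\nabla\Phi_t,\nabla\delta_\rho F_k[\rho_t]+\partial_t\nabla\Phi_t+\tfrac12\nabla\lVert\nabla\Phi_t\rVert^2\rangle d\rho_t$ by $-\alpha_t\int\lVert\nabla\Phi_t\rVert^2 d\rho_t$ in one shot, yielding (\ref{lemma6-inequality}). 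That is the argument your proof must substitute for the false identity; as written, your derivation of $\dot W_k(t)\le-\alpha_t\int\lVert\nabla\Phi_t\rVert^2 d\rho_t$ does not go through.
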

\begin{proof}
    Differentiating $W_{k}(t)$ with respect to $t$ gives
    \begin{align*}
    \begin{split}
        \dot{W}_{k}(t)&=\dot{F}_{k}(\rho_t) + \frac{1}{2}\int \partial_{t}\lVert \nabla \Phi_{t}\rVert^{2}_2 d\rho_t  + \frac{1}{2}\int \lVert \nabla \Phi_{t}\rVert^{2}_2\dot{\rho}_td\mathbf{x}\\
        &= \int \langle \nabla\Phi_{t}, \nabla \delta_\rho F_{k}(\rho_t) \rangle d \rho_t+ \int \langle \nabla\Phi_{t}, \partial_{t}\nabla\Phi_{t} \rangle d \rho_{t} + \int \langle \nabla\Phi_t, \frac{1}{2}\nabla\lVert \nabla\Phi_t \rVert^{2}_2 \rangle d\rho_t\\
        &= \int \langle \nabla\Phi_t,  \nabla \delta_\rho F_{k}(\rho_t) + \partial_{t}\nabla\Phi_{t} + \frac{1}{2}\nabla\lVert \nabla\Phi_t \rVert^{2}_2\rangle d \rho_t,
    \end{split}
    \end{align*}
    where we used the following identity $\dot{\rho}_t = - \nabla \cdot (\rho_t \nabla \Phi_t)$ and applied integration by parts.

Definition of A-MWGraD flow (\ref{eqn:A-MWGraDFlow}) gives
\begin{align*}
\begin{split}
    -\alpha_t \Phi_t &=  \dot{\Phi}_t +\frac{1}{2}\lVert \nabla \Phi_t\rVert^2_2 + \operatorname{proj}_{\mathcal{C}(\rho_t),\rho_t}[0]= \operatorname{proj}_{\mathcal{C}(\rho_t)+\dot{\Phi}_t +\frac{1}{2}\lVert \nabla \Phi_t\rVert^2_2, \rho_t}[0]\\
    &= \argmin_{h\in \mathcal{C}(\rho_t)+\dot{\Phi}_t +\frac{1}{2}\lVert \nabla \Phi_t\rVert^2_2} \mathcal{G}(h),
\end{split}    
\end{align*}
where $\mathcal{G}(h)=\int \langle \nabla h, \nabla h\rangle d \rho_{t}$ is the convex function with respect to $h$.
Optimality gives
\begin{equation*}
    \int \delta_{h} G(h^*)(h-h^{*})d\mathbf{x}=\int \langle \nabla h^{*}, \nabla h - \nabla h^{*} \rangle d\rho_t\geq 0.
\end{equation*}
Choosing $h=\delta_\rho F_{k}(\rho_t) + \dot{\Phi}_t + \frac{1}{2}\lVert\nabla \Phi_t \rVert^2_2$ and using $h^{*}=-\alpha_t \Phi_t$, we have
\begin{equation}
\label{lemma6-inequality}
    \int \langle \nabla\Phi_t,  \nabla \delta_\rho F_{k}(\rho_t) + \partial_{t}\nabla\Phi_{t} + \frac{1}{2}\nabla\lVert \nabla\Phi_t \rVert^{2}_2\rangle d \rho_t  \leq -\alpha_t \int \lVert \nabla\Phi_t \rVert^2_2 d\rho_t,
\end{equation}
which concludes the proof.
\end{proof}

Due to the effect of damping term $-\alpha_t \Phi_t$, it is not guaranteed that $F_k(\rho_t)$ is non-increasing along the A-MWGraD flow (\ref{eqn:A-MWGraDFlow}). However, the following corollary guarantees that $F_k(\rho_t)$ is upper bounded by the initial functional value $F_k(\rho_0)$ given $\nabla\Phi_0 = 0$.
\begin{corollary}
    \label{corollary}
    Let $\left\{\rho_t \right\}_{t\geq 0}$ be a solution of the A-MWGraD flow (\ref{eqn:A-MWGraDFlow}) with $\nabla \Phi_0=0$. For $k\in [K]$, it holds that $F_{k}(\rho_t)\leq F_k(\rho_0)$, i.e., $\rho_t\in \Omega_{\textbf{F}}(\textbf{F}(\rho_0))$, for all $t\geq 0$.
\end{corollary}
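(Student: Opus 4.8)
The plan is to read the claim straight off Lemma~\ref{lemma:AppendixofTheorem2}. Recall the global energy $W_k(t) = F_k(\rho_t) + \frac{1}{2}\int \lVert \nabla \Phi_t \rVert^2\, d\rho_t$ introduced just before that lemma. Lemma~\ref{lemma:AppendixofTheorem2} shows $\dot{W}_k(t) \leq -\alpha_t \int \lVert \nabla \Phi_t \rVert^2\, d\rho_t$ for every $k\in[K]$. The first step is to observe that $\alpha_t \geq 0$ for both admissible damping schedules considered in Theorem~\ref{thm:A-MWGraD_convergencerate} (namely $\alpha_t = \alpha/t$ with $\alpha \geq 3$, and $\alpha_t = 2\sqrt{\beta}$ with $\beta > 0$), so the right-hand side is nonpositive and $W_k$ is non-increasing along the flow; integrating over $[0,t]$ gives $W_k(t) \leq W_k(0)$.

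Next I would evaluate $W_k(0)$ using the hypothesis $\nabla \Phi_0 = 0$: the kinetic term $\frac{1}{2}\int \lVert \nabla \Phi_0 \rVert^2\, d\rho_0$ vanishes, hence $W_k(0) = F_k(\rho_0)$. Finally, since the kinetic energy $\frac{1}{2}\int \lVert \nabla \Phi_t \rVert^2\, d\rho_t$ is nonnegative for all $t$, we get $F_k(\rho_t) \leq W_k(t) \leq W_k(0) = F_k(\rho_0)$, which holds for every $k\in[K]$. The inclusion $\rho_t \in \Omega_{\textbf{F}}(\textbf{F}(\rho_0))$ is then immediate from the definition of the level set $\Omega_{\textbf{F}}(\textbf{u}) = \{ q : \textbf{F}(q) \leq \textbf{u} \}$ in Assumption~\ref{assumption1}.

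There is essentially no hard step here: the corollary is a direct consequence of the monotonicity established in Lemma~\ref{lemma:AppendixofTheorem2} together with the sign of the damping coefficient and the nonnegativity of the kinetic energy. The only point requiring a word of care is the passage from "$\dot{W}_k \leq 0$ a.e." to "$W_k$ non-increasing", which needs $t \mapsto W_k(t)$ to be locally absolutely continuous so that $W_k(t) - W_k(0) = \int_0^t \dot{W}_k(s)\, ds$; this regularity is inherited from the assumed smoothness of the flow $(\rho_t, \Phi_t)$ used throughout the paper, so it does not constitute a genuine obstacle.
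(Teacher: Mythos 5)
Your proposal is correct and matches the paper's own argument: both deduce from Lemma~\ref{lemma:AppendixofTheorem2} that $W_k$ is non-increasing (since $\alpha_t\geq 0$), use $\nabla\Phi_0=0$ to get $W_k(0)=F_k(\rho_0)$, and drop the nonnegative kinetic term to conclude $F_k(\rho_t)\leq W_k(t)\leq F_k(\rho_0)$. The extra remark on local absolute continuity is a reasonable bit of care that the paper leaves implicit.
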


\begin{proof}
    From Lemma \ref{lemma:AppendixofTheorem2}, it follows that
    \begin{equation*}
        F_{k}(\rho_0) = W_{k}(\rho_0) \geq F_{k}(\rho_t) + \frac{1}{2}\int \lVert \nabla \Phi_{t}\rVert^{2}_2d\rho_t\geq F_k(\rho_t).
    \end{equation*}
\end{proof}

%Now we are ready to present the proof of 

%We consider the following two cases for the objective functions $F_{k}$ ($k\in[K]$): geodesically convex and $\beta$-strongly geodesically convex. 
Now we are ready to present the proof of Theorem \ref{thm:A-MWGraD_convergencerate}. We consider two cases for objectives $F_{k}$ ($k\in[K]$): geodesically convex and $\beta$-strongly geodesically convex. 

Consider $F_{k}$ is geodesically convex for all $k\in[K]$ and define
\begin{align*}
        \mathcal{E}_{k,\lambda}(\rho_t, q) = t^2 (F_{k}(\rho_t)-F_k(q))&+\frac{1}{2}\int \lVert 2(\mathbf{x}-T_t(\mathbf{x})) + t \nabla \Phi_t(\mathbf{x}) \rVert^2_2 d\rho_t(\mathbf{x}) \\
        &+ \frac{\lambda}{2}\int \lVert \mathbf{x}-T_t(\mathbf{x}) \rVert^2_2 d\rho_t(\mathbf{x}),
\end{align*}
and 
\begin{align*}
        \mathcal{E}_{\lambda}(\rho_t, q)=\min_{k\in[K]}\mathcal{E}_{k,\lambda}(\rho_t, q).
\end{align*}
We investigate the non-increasing of $\mathcal{E}_{\lambda}(\rho_t, q)$ through the following lemma.
\begin{lemma}
\label{lemma2theorem3}
    Suppose $F_{k}$ is geodesically convex for $k\in[K]$. Let $\lambda=2(\alpha-3)$ and $\alpha\geq 3$. Then, it follows that
    \begin{equation}
    \label{lemma:convexcase}
        \dot{\mathcal{E}}_{k,\lambda}(\rho_t, q) \leq 2t (F_k(\rho_t) - F_k (q)) - 2t \min_{l\in[K]}\left(F_l(\rho_t) - F_l (q) \right) + t(3-\alpha)\int \lVert\nabla\Phi_t \rVert^2_2 d\rho_t.
    \end{equation}
\end{lemma}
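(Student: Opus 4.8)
The goal is to differentiate $\mathcal{E}_{k,\lambda}(\rho_t, q)$ along the A-MWGraD flow and bound the result. I would break the Lyapunov functional into its three summands and differentiate each in turn. The first summand $t^2(F_k(\rho_t) - F_k(q))$ contributes $2t(F_k(\rho_t) - F_k(q)) + t^2 \dot F_k(\rho_t)$, where by the continuity equation $\dot\rho_t = -\nabla\cdot(\rho_t\nabla\Phi_t)$ and integration by parts we get $\dot F_k(\rho_t) = \int \langle \nabla\Phi_t, \nabla\delta_\rho F_k[\rho_t]\rangle d\rho_t$. The third summand is $\lambda$ times $\mathcal{V}(q,t)$ from Theorem~\ref{thm:MWGraD_convergencerate}, whose derivative we already know: $\dot{\mathcal{V}}(q,t) = \int \langle \nabla\Phi_t(\mathbf{x}), \mathbf{x} - T_t(\mathbf{x})\rangle d\rho_t(\mathbf{x})$.

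\textbf{The cross term.} The main work is differentiating the middle summand $\frac{1}{2}\int \lVert 2(\mathbf{x}-T_t(\mathbf{x})) + t\nabla\Phi_t(\mathbf{x})\rVert^2 d\rho_t(\mathbf{x})$. I would expand the square into $2\int\lVert \mathbf{x}-T_t(\mathbf{x})\rVert^2 d\rho_t + 2t\int \langle \mathbf{x}-T_t(\mathbf{x}), \nabla\Phi_t\rangle d\rho_t + \frac{t^2}{2}\int\lVert\nabla\Phi_t\rVert^2 d\rho_t$, and differentiate each piece. For the first piece I reuse the derivative of $\mathcal{V}$; for the third piece I differentiate $\frac{t^2}{2}W_k(t)$-style using the computation inside Lemma~\ref{lemma:AppendixofTheorem2}, which requires $\partial_t \nabla\Phi_t$. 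For the middle piece I need both $\dot T_t$ and $\partial_t\nabla\Phi_t$. The identity from \citep{wang2022accelerated} used in the proof of Theorem~\ref{thm:MWGraD_convergencerate}, namely $\int\langle T_t(\mathbf{x})-\mathbf{x}, \dot T_t(\mathbf{x})\rangle d\rho_t + \int\langle \nabla\Phi_t, \nabla T_t(T_t(\mathbf{x})-\mathbf{x})\rangle d\rho_t = 0$, is what lets me eliminate the awkward $\dot T_t$ terms. After collecting everything, the $\partial_t\nabla\Phi_t$ terms are handled by the second equation of the A-MWGraD flow (\ref{eqn:A-MWGraDFlow}): $\dot\Phi_t = -\alpha_t\Phi_t - \frac12\lVert\nabla\Phi_t\rVert^2 - \texttt{proj}_{\mathcal{C}(\rho_t),\rho_t}[0]$, so $\partial_t\nabla\Phi_t = -\alpha_t\nabla\Phi_t - \frac12\nabla\lVert\nabla\Phi_t\rVert^2 - \nabla\,\texttt{proj}_{\mathcal{C}(\rho_t),\rho_t}[0]$.

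\textbf{Assembling the bound.} Once all terms are collected, the derivative $\dot{\mathcal{E}}_{k,\lambda}(\rho_t,q)$ should organize into: a term $2t(F_k(\rho_t)-F_k(q))$ (kept as is on the right-hand side); terms involving $\int\langle\nabla\Phi_t, \nabla\delta_\rho F_k[\rho_t] + \text{(projection)}\rangle d\rho_t$ that I bound using the projection-optimality inequality, exactly as inequality (\ref{lemma6-inequality}) in Lemma~\ref{lemma:AppendixofTheorem2} and (\ref{eqn:appendixA2}) in Lemma~\ref{lemma:nonincreasing2} — this is where the $-\alpha_t\int\lVert\nabla\Phi_t\rVert^2 d\rho_t$ and related negative quadratic terms appear, and where I also pick up a term of the form $-2t\langle\nabla\Phi_t, \mathbf{x}-T_t(\mathbf{x})\rangle$ integrated against $\rho_t$; then geodesic convexity of $F_l$ for the index $l$ achieving the minimum, in the form $F_l(\rho_t) - F_l(q) \leq \int\langle\nabla\delta_\rho F_l[\rho_t], \mathbf{x}-T_t(\mathbf{x})\rangle d\rho_t$, converts the cross term $-2t\int\langle\nabla\Phi_t, \mathbf{x}-T_t(\mathbf{x})\rangle d\rho_t$ into $-2t\min_{l}(F_l(\rho_t)-F_l(q))$ plus leftover; finally the $\lambda$ must be chosen so the residual coefficient on $t\int\lVert\nabla\Phi_t\rVert^2 d\rho_t$ comes out to $(3-\alpha)$, forcing $\lambda = 2(\alpha-3)$, and since $\alpha\geq 3$ this term is nonpositive.

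\textbf{Main obstacle.} The delicate part is the bookkeeping in differentiating the cross term $2t\int\langle\mathbf{x}-T_t(\mathbf{x}),\nabla\Phi_t\rangle d\rho_t$, because it mixes the time-derivative of the optimal transport map $T_t$ with $\partial_t\nabla\Phi_t$ and with the Jacobian factor $\nabla T_t - \mathbf{I}_d$ that arises when pushing the time-derivative through $\rho_t$ (the same structure as (\ref{eqn:AppendixB3})); getting the Wang et al.\ identity to cancel the right pieces and tracking the exact coefficient that $\lambda$ must absorb is the step most prone to error, and is the reason the specific value $\lambda = 2(\alpha-3)$ is what makes the inequality close.
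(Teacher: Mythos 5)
Your proposal follows essentially the same route as the paper: differentiate $\mathcal{E}_{k,\lambda}$ term by term using the continuity equation and integration by parts, eliminate the $\dot T_t$ terms via the identities of \citet{wang2022accelerated}, substitute $\partial_t\nabla\Phi_t$ from the flow equation together with the projection-optimality inequality of Lemma~\ref{lemma:AppendixofTheorem2}, and close with geodesic convexity and the choice $\lambda=2(\alpha-3)$. One small correction: geodesic convexity must be applied to every $F_l$ and combined with the weights $w^{(t)}_l$ (since the cross term involves $\sum_l w^{(t)}_l\nabla\delta_\rho F_l[\rho_t]$), after which the weighted sum $\sum_l w^{(t)}_l(F_l(\rho_t)-F_l(q))$ is lower-bounded by the minimum — not applied only to the minimizing index as you wrote.
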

\begin{proof}

    Differentiating $\mathcal{E}_{k,\lambda}(t)$ with respect to $t$ gives
    \begin{align*}
    \begin{split}
        \dot{\mathcal{E}}_{k,\lambda}(\rho_t, q) &= 2t (F_k(\rho_t) - F_k (q)) -t^2 \int \delta_\rho F_k(\rho_t)\nabla \cdot (\rho_t \nabla \Phi_t)d\mathbf{x}\\
        &+ 4\int \langle \partial_t T_t(\mathbf{x}), T_t(\mathbf{x})-\mathbf{x} \rangle d\rho_t(\mathbf{x}) - 2 \int \lVert \mathbf{x}-T_t(\mathbf{x})  \rVert^2_2\nabla\cdot (\rho_t\nabla \Phi_t)d\mathbf{x} \\
        &+ t\int \langle \nabla\Phi_t, \nabla\Phi_t + t\partial_t \nabla \Phi_t \rangle d\rho_t -\frac{1}{2} t^2 \int \lVert \nabla \Phi_t \rVert^2_2 \nabla\cdot (\rho_t\nabla \Phi_t)d\mathbf{x}\\
        & -2 \int \langle \partial_t T_t, t\nabla \Phi_t \rangle d\rho_t + 2\int \langle \mathbf{x}-T_t(\mathbf{x}), \nabla\Phi_t(\mathbf{x}) + t\partial_t\nabla\Phi_t(\mathbf{x}) \rangle d\rho_t(\mathbf{x}) \\
        &- 2t \int \langle \mathbf{x}-T_t(\mathbf{x}), \nabla\Phi_t(\mathbf{x}) \rangle \nabla \cdot (p_t \nabla\Phi_t)d\mathbf{x}\\
        &+ \lambda \int \langle \partial_t T_t(\mathbf{x}), T_t(\mathbf{x})-\mathbf{x} \rangle d\rho_t(\mathbf{x})-\lambda \int \lVert \mathbf{x}-T_t(\mathbf{x}) \rVert^2_2 \nabla \cdot (\rho_t\nabla \Phi_t)d\mathbf{x}.
    \end{split}
    \end{align*}
    Here, we used the identity $\dot{\rho}_t = -\nabla\cdot (\rho_t\nabla \Phi_t)$.

    By integration by parts, we obtain
    \begin{align*}
    \begin{split}
        \dot{\mathcal{E}}_{k,\lambda}(\rho_t, q) &= 2t (F_k(\rho_t) - F_k (q)) +t^2 \int \langle \nabla\Phi_t, \nabla \delta_\rho F_k(\rho_t)\rangle d\rho_t\\
        &+ 4\int \langle \partial_t T_t(\mathbf{x}), T_t(\mathbf{x})-\mathbf{x} \rangle d\rho_t(\mathbf{x}) + 4 \int \langle \nabla\Phi_t(\mathbf{x}), \left[\textbf{I}_d-\nabla T_t(\mathbf{x}) \right](\mathbf{x}-T_t(\mathbf{x})) \rangle  d\rho_t(\mathbf{x}) \\
        &+ t \int \lVert \nabla \Phi_t \rVert^2_2 d\rho_t + t^2 \int \langle \nabla \Phi_t, \partial_t \nabla\Phi_t\rangle  d\rho_t +t^2 \int \langle \nabla \Phi_t, \frac{1}{2}\nabla \lVert\nabla\Phi_t \rVert^2_2 \rangle  d\rho_t\\
        &-2t \int \langle\nabla\Phi_t, \partial_t T_t \rangle  d\rho_t + 2\int \langle \mathbf{x}-T_t(\mathbf{x}), \nabla\Phi_t(\mathbf{x}) + t\partial_t \nabla\Phi_t(\mathbf{x}) \rangle d\rho_t(\mathbf{x})\\
        &+2t \int \langle \nabla\Phi_t(\mathbf{x}), \left[\textbf{I}_d-\nabla T_t (\mathbf{x})\right]\nabla\Phi_t(\mathbf{x}) \rangle d\rho_t (\mathbf{x}) +2t \int \langle \nabla\Phi_t(\mathbf{x}), \left[\nabla^2\Phi_t (\mathbf{x})\right](\mathbf{x}-T_t(\mathbf{x})) \rangle d\rho_t(\mathbf{x})\\
        & + \lambda \int \langle \partial_t T_t(\mathbf{x}), T_t(\mathbf{x})-\mathbf{x} \rangle \rho_t(\mathbf{x}) + \lambda \int \langle\nabla\Phi_t(\mathbf{x}), \left[\textbf{I}_d-\nabla T_t (\mathbf{x})\right](\mathbf{x}-T_t(\mathbf{x})) \rangle d\rho_t(\mathbf{x}).
    \end{split}
    \end{align*}
    Rearranging the terms above gives
    \begin{align}
    \label{Lemma7-inequality}
    \begin{split}
        \dot{\mathcal{E}}_{k,\lambda}(\rho_t, q) &= 2t (F_k(\rho_t) - F_k (q)) +t^2 \int \langle \nabla\Phi_t, \nabla \delta_\rho F_k(\rho_t)+\frac{1}{2}\nabla \lVert\nabla\Phi_t \rVert^2_2 +\partial_t \nabla\Phi_t\rangle d\rho_t\\
        &+ t\int \langle T_t(\mathbf{x})-\mathbf{x}, -\frac{6+\lambda}{t}\nabla\Phi_t-2\partial_t\nabla\Phi_t-\nabla\lVert \nabla\Phi_t\rVert^2_2 \rangle d\rho_t(\mathbf{x})\\
        &+ 3t \int \lVert\nabla\Phi_t \rVert^2_2 d\rho_t-2t \int \langle \nabla\Phi_t, \nabla T_t \nabla\Phi_t\rangle d\rho_t-2t \int \langle\nabla\Phi_t, \partial T_t \rangle d\rho_t \\
       % &+2t\int \langle \left[\nabla^2\Phi_t \right]\nabla\Phi_t, \mathbf{x}-T_t(\mathbf{x})\rangle p_t d\mathbf{x}\\
        &+ (4+\lambda)\left(  \int \langle \nabla\Phi_t(\mathbf{x}), \left[\nabla T_t(\mathbf{x}) \right](T_t (\mathbf{x})- \mathbf{x}) \rangle d\rho_t(\mathbf{x}) + \int \langle \partial T_t(\mathbf{x}), T_t(\mathbf{x})-\mathbf{x}\rangle  d\rho_t(\mathbf{x})\right).
    \end{split}
    \end{align}
    Next, we have the following  useful equality and inequality in \citep{wang2022accelerated}:
    \begin{align}
    \label{inequalityfromWang1}
        \int \langle \partial_t T_t, \nabla\Phi_t \rangle  d\rho_t(\mathbf{x}) + \int \langle \nabla \Phi_t, \nabla T_t \nabla\Phi_t \rangle d\rho_t(\mathbf{x}) \geq 0,
    \end{align}
    \begin{align}
    \label{inequalityfromWang2}
        \int \langle \partial_t T_t(\mathbf{x}), T_t(\mathbf{x})-\mathbf{x} \rangle d\rho_t(\mathbf{x}) + \int \langle \nabla \Phi_t(\mathbf{x}), \nabla T_t(\mathbf{x})(T_t(\mathbf{x})-\mathbf{x}) \rangle d\rho_t(\mathbf{x}) =0.
    \end{align}
    By (\ref{inequalityfromWang1}) and (\ref{inequalityfromWang2}), we can simplify (\ref{Lemma7-inequality}) as follows
    \begin{align}
    \label{Theorem3-inequality1}
    \begin{split}
       \dot{\mathcal{E}}_{k,\lambda}(\rho_t, q) &= 2t (F_k(\rho_t) - F_k (q)) +t^2 \int \langle \nabla\Phi_t, \nabla \delta_\rho F_k(\rho_t)+\nabla \lVert\nabla\Phi_t \rVert^2_2 +\partial_t \nabla\Phi_t\rangle  d\rho_t(\mathbf{x})\\
        &+ t\int \langle T_t(\mathbf{x})-\mathbf{x}, -\frac{6+\lambda}{t}\nabla\Phi_t-2\partial_t\nabla\Phi_t-\nabla\lVert \nabla\Phi_t\rVert^2_2 \rangle  d\rho_t(\mathbf{x})\\
        &+ 3t \int \lVert\nabla\Phi_t \rVert^2_2 d\rho_t(\mathbf{x}).
    \end{split}
    \end{align}
    By definition of the A-MWGraD flow (\ref{eqn:A-MWGraDFlow}), we have
    \begin{equation}
    \label{inequality10}
        -\partial_t \nabla \Phi_t - \frac{\alpha}{t}\nabla\Phi_t -\frac{1}{2}\nabla \lVert\nabla\Phi_t\rVert^2_2=\sum_{k=1}^{K}w_{t,k}\nabla \delta_\rho F_k(\rho_t).
    \end{equation}
    By using (\ref{lemma6-inequality}) of Lemma \ref{lemma:AppendixofTheorem2},  $\lambda=2(\alpha - 3)$, and (\ref{inequality10}), we can upper bound (\ref{Theorem3-inequality1}) as follows
    \begin{align*}
        \dot{\mathcal{E}}_{k,\lambda}(\rho_t, q) &\leq 2t (F_k(\rho_t) - F_k (q)) -\alpha t \int \lVert\nabla\Phi_t \rVert^2_2 d\rho_t\\
        &+ 2t\int \langle T_t(\mathbf{x})-\mathbf{x}, \sum_{k=1}^{K}w_{t,k}\nabla \delta_\rho F_k(\rho_t)(\mathbf{x}) \rangle d\rho_t(\mathbf{x})\\
        &+ 3t \int \lVert\nabla\Phi_t \rVert^2_2 d\rho_t\\
        &\leq 2t (F_k(\rho_t) - F_k (q)) - 2t \min_{l\in[K]}\left(F_l(\rho_t) - F_l (q) \right) + t(3-\alpha)\int \lVert\nabla\Phi_t \rVert^2_2 d\rho_t,
    \end{align*}
    which conclude the proof of Lemma \ref{lemma2theorem3}.
\end{proof}

Consider $F_{k}$ is $\beta$-strongly geodesically convex for $k\in[K]$. Motivated by \cite{taghvaei2019accelerated,wang2022accelerated}, we define the following function
\begin{align*}
        \mathcal{E}_{k}(\rho_t, q) = e^{\sqrt{\beta}t} (F_{k}(\rho_t)-F_k(q))&+\frac{e^{\sqrt{\beta}t}}{2}\int \lVert \sqrt{\beta}(\mathbf{x}-T_t(\mathbf{x})) +  \nabla \Phi_t(\mathbf{x}) \rVert^2_2 d\rho_t(\mathbf{x}),
\end{align*}
and
\begin{align*}
        \mathcal{E}(\rho_t, q)=\min_{k\in[K]}\mathcal{E}_{k}(\rho_t, q).
\end{align*}
We investigate the non-increasing of $\mathcal{E}(\rho_t, q)$ through the following lemma.
\begin{lemma}
\label{lemma3theorem3}
    Suppose $F_{k}$ is $\beta$-strongly geodesically convex for $k\in[K]$.  Let $\alpha_t=2\sqrt{\beta}$. Then it follows that
    \begin{equation}
    \label{lemma:strongconvexcase}
        e^{-\sqrt{\beta}t}\dot{\mathcal{E}}_{k}(\rho_t, q) \leq \sqrt{\beta}\mathcal{G}_k(\rho_t, q) - \sqrt{\beta} \min_{l\in[K]}\mathcal{G}_l(\rho_t, q) - \frac{\sqrt{\beta}}{2} \int \lVert\nabla\Phi_t \rVert^2_2 d\rho_t,
    \end{equation}
    where $\mathcal{G}_k(\rho_t, q)=-\int \langle T_t(\mathbf{x})-\mathbf{x}, \nabla \delta_\rho F_k(\rho_t)(\mathbf{x}) \rangle d\rho_t(\mathbf{x})$.
\end{lemma}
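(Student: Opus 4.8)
The plan is to adapt, essentially verbatim, the strategy behind the proof of Lemma~\ref{lemma2theorem3} to the exponential Lyapunov functional. Write $\mathcal{E}_{k}(\rho_t,q)=e^{\sqrt{\beta}t}\,B_{k}(t)$ with $B_{k}(t)=\bigl(F_{k}(\rho_t)-F_{k}(q)\bigr)+\tfrac12\int\lVert A_t(\mathbf{x})\rVert^2 d\rho_t(\mathbf{x})$ and $A_t(\mathbf{x})=\sqrt{\beta}\,\bigl(\mathbf{x}-T_t(\mathbf{x})\bigr)+\nabla\Phi_t(\mathbf{x})$, so that $e^{-\sqrt{\beta}t}\dot{\mathcal{E}}_{k}(\rho_t,q)=\sqrt{\beta}\,B_{k}(t)+\dot{B}_{k}(t)$. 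The first step is to compute $\dot B_k$: using the continuity equation $\dot\rho_t=-\nabla\cdot(\rho_t\nabla\Phi_t)$ together with integration by parts, one gets $\dot F_k(\rho_t)=\int\langle\nabla\Phi_t,\nabla\delta_\rho F_k[\rho_t]\rangle d\rho_t$ and $\tfrac12\tfrac{d}{dt}\int\lVert A_t\rVert^2 d\rho_t=\int\bigl\langle A_t,\ \partial_t A_t+(\nabla A_t)\nabla\Phi_t\bigr\rangle d\rho_t$, where $\nabla A_t=\sqrt{\beta}(\textbf{I}_d-\nabla T_t)+\nabla^2\Phi_t$ and $\nabla^2\Phi_t\,\nabla\Phi_t=\tfrac12\nabla\lVert\nabla\Phi_t\rVert^2$; hence $\partial_t A_t+(\nabla A_t)\nabla\Phi_t=\sqrt{\beta}\bigl(\nabla\Phi_t-\partial_t T_t-\nabla T_t\,\nabla\Phi_t\bigr)+\partial_t\nabla\Phi_t+\tfrac12\nabla\lVert\nabla\Phi_t\rVert^2$.

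Next I would substitute $A_t=\sqrt{\beta}(\mathbf{x}-T_t)+\nabla\Phi_t$ and split the integrals by their order in $\sqrt{\beta}$. The terms carrying $\partial_t T_t$ and $\nabla T_t$ are handled exactly as in Lemma~\ref{lemma2theorem3}: the $\beta$-order pair $\int\langle\mathbf{x}-T_t,\partial_t T_t\rangle d\rho_t$ and $\int\langle\mathbf{x}-T_t,\nabla T_t\,\nabla\Phi_t\rangle d\rho_t$ cancels by identity (\ref{inequalityfromWang2}) together with the symmetry of $\nabla T_t$, while the $\sqrt{\beta}$-order pair $\int\langle\nabla\Phi_t,\partial_t T_t\rangle d\rho_t+\int\langle\nabla\Phi_t,\nabla T_t\,\nabla\Phi_t\rangle d\rho_t\ge 0$ enters with a favorable sign and is discarded using (\ref{inequalityfromWang1}). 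What remains are two blocks: (i) $\int\bigl\langle\nabla\Phi_t,\ \nabla\delta_\rho F_k[\rho_t]+\partial_t\nabla\Phi_t+\tfrac12\nabla\lVert\nabla\Phi_t\rVert^2\bigr\rangle d\rho_t$, which is precisely the left-hand side of (\ref{lemma6-inequality}) and is therefore $\le -2\sqrt{\beta}\int\lVert\nabla\Phi_t\rVert^2 d\rho_t$ by Lemma~\ref{lemma:AppendixofTheorem2} with $\alpha_t=2\sqrt{\beta}$; and (ii) $\sqrt{\beta}\int\bigl\langle\mathbf{x}-T_t,\ \partial_t\nabla\Phi_t+\tfrac12\nabla\lVert\nabla\Phi_t\rVert^2\bigr\rangle d\rho_t$, which I rewrite using the $\Phi$-equation of the A-MWGraD flow (\ref{eqn:A-MWGraDFlow}) with $\alpha_t=2\sqrt{\beta}$, namely $\partial_t\nabla\Phi_t+\tfrac12\nabla\lVert\nabla\Phi_t\rVert^2=-2\sqrt{\beta}\,\nabla\Phi_t-\sum_{j=1}^{K}w^{(t)}_j\nabla\delta_\rho F_j[\rho_t]$, turning it into $-2\beta\int\langle\mathbf{x}-T_t,\nabla\Phi_t\rangle d\rho_t-\sqrt{\beta}\sum_{j=1}^{K}w^{(t)}_j\mathcal{G}_j(\rho_t,q)$, since by definition $\mathcal{G}_j(\rho_t,q)=\int\langle\mathbf{x}-T_t(\mathbf{x}),\nabla\delta_\rho F_j[\rho_t](\mathbf{x})\rangle d\rho_t(\mathbf{x})$.

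Collecting everything and expanding $\lVert A_t\rVert^2=\beta\lVert\mathbf{x}-T_t\rVert^2+2\sqrt{\beta}\langle\mathbf{x}-T_t,\nabla\Phi_t\rangle+\lVert\nabla\Phi_t\rVert^2$ inside the $\sqrt{\beta}B_k$ contribution, the cross terms $\beta\int\langle\mathbf{x}-T_t,\nabla\Phi_t\rangle d\rho_t$ cancel, and one reaches the milestone $e^{-\sqrt{\beta}t}\dot{\mathcal{E}}_k(\rho_t,q)\le\sqrt{\beta}\bigl(F_k(\rho_t)-F_k(q)\bigr)+\tfrac{\beta^{3/2}}{2}\mathcal{W}^{2}_{2}(\rho_t,q)-\tfrac{\sqrt{\beta}}{2}\int\lVert\nabla\Phi_t\rVert^2 d\rho_t-\sqrt{\beta}\sum_{j=1}^{K}w^{(t)}_j\mathcal{G}_j(\rho_t,q)$. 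Finally, $\beta$-strong geodesic convexity of $F_k$ gives $F_k(\rho_t)-F_k(q)\le\mathcal{G}_k(\rho_t,q)-\tfrac{\beta}{2}\mathcal{W}^{2}_{2}(\rho_t,q)$, which cancels the $\tfrac{\beta^{3/2}}{2}\mathcal{W}^{2}_{2}$ term exactly, and $\sum_{j=1}^{K}w^{(t)}_j\mathcal{G}_j(\rho_t,q)\ge\min_{l\in[K]}\mathcal{G}_l(\rho_t,q)$ because $\mathbf{w}^{(t)}\in\mathcal{W}$; combining these yields (\ref{lemma:strongconvexcase}).

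As in Lemma~\ref{lemma2theorem3}, the main obstacle is the term-by-term bookkeeping in the differentiation step: one must track the many contributions of the form $\langle\cdot,\partial_t T_t\rangle$, $\langle\cdot,\nabla T_t\,\cdot\rangle$, $\langle\cdot,\partial_t\nabla\Phi_t\rangle$ and $\langle\cdot,\nabla^2\Phi_t\,\cdot\rangle$, and verify that after applying the transport-map identities (\ref{inequalityfromWang1}), (\ref{inequalityfromWang2}), the energy inequality (\ref{lemma6-inequality}) and the flow equation, the quadratic-in-$(\mathbf{x}-T_t)$ contribution equals precisely $+\tfrac{\beta^{3/2}}{2}\mathcal{W}^{2}_{2}(\rho_t,q)$ so that strong convexity annihilates it. It is exactly the choice of momentum coefficient $\sqrt{\beta}$ in $A_t$, matching the damping $\alpha_t=2\sqrt{\beta}$, that makes these cancellations close.
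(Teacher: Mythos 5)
Your proposal is correct and follows essentially the same route as the paper's proof: differentiate the Lyapunov functional using the continuity equation, discard/cancel the transport-map terms via (\ref{inequalityfromWang1})--(\ref{inequalityfromWang2}) (using symmetry of $\nabla T_t$), bound the $\nabla\Phi_t$-block by (\ref{lemma6-inequality}) with $\alpha_t=2\sqrt{\beta}$, substitute the flow equation into the $(\mathbf{x}-T_t)$-block, and finish with $\beta$-strong geodesic convexity and $\sum_j w^{(t)}_j\mathcal{G}_j\ge\min_l\mathcal{G}_l$. Your factorization $\mathcal{E}_k=e^{\sqrt{\beta}t}B_k$ with the compact vector $A_t=\sqrt{\beta}(\mathbf{x}-T_t)+\nabla\Phi_t$ is only a cleaner bookkeeping of the same computation, and your coefficient tracking (the cross terms cancelling and the $\|\nabla\Phi_t\|^2$ coefficient summing to $-\tfrac{\sqrt{\beta}}{2}$) checks out.
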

\begin{proof}

Differentiating $\mathcal{E}_k(\rho_t, q)$ with respect to $t$ gives
\begin{align}
\label{eqn:strongconvex_mainineq}
    \begin{split}
        e^{-\sqrt{\beta}t} & \dot{\mathcal{E}}_{k}(\rho_t, q) = \sqrt{\beta} (F_k(\rho_t) - F_k (q)) -  \int \delta_\rho F_k(\rho_t)\nabla \cdot (\rho_t \nabla \Phi_t)d\mathbf{x}\\
        %&+\frac{\sqrt{\beta}}{2}\int \lVert \sqrt{\beta}(\mathbf{x}-T_t(\mathbf{x})) + \nabla \Phi_t(\mathbf{x})\rVert^2d\rho_t(\mathbf{x})\\ 
        &+ \frac{\sqrt{\beta^3}}{2}\int \lVert \mathbf{x}-T_t(\mathbf{x}) \rVert^2_2 d\rho_t(\mathbf{x}) + \frac{\sqrt{\beta}}{2} \int \lVert\nabla\Phi_t \rVert^2_2 d\rho_t +\beta \int \langle \mathbf{x}-T_t(\mathbf{x}), \nabla \Phi_t(\textbf{x}) \rangle d\rho_t(\mathbf{x})\\
        &+ \beta \int \langle T_t(\mathbf{x})-\mathbf{x}, \partial_t T_t(\mathbf{x}) \rangle d\rho_t(\mathbf{x}) - \frac{\beta}{2}\int \lVert \mathbf{x}-T_t(\mathbf{x}) \rVert^2_2 \nabla \cdot (\rho_t \nabla \Phi_t)d\mathbf{x}\\
        &+ \int \langle \nabla \Phi_t(\mathbf{x}), \partial_t \nabla\Phi_t \rangle d\rho_t(\mathbf{x}) -\frac{1}{2}\int \lVert \nabla \Phi_t(\mathbf{x}) \rVert^2_2 \nabla \cdot (\rho_t \nabla \Phi_t)d\mathbf{x}\\
        &- \sqrt{\beta}\int \langle \partial_t T_t (\mathbf{x}), \nabla \Phi_t (\mathbf{x}) \rangle d\rho_t(\mathbf{x}) -\sqrt{\beta}\int \langle T_t(\mathbf{x})-\mathbf{x}, \partial_t \nabla\Phi_t(\mathbf{x}) \rangle d\rho_t(\mathbf{x})\\
        &+ \sqrt{\beta} \int \langle T_t(\mathbf{x})-\mathbf{x}, \nabla \Phi_t(\mathbf{x}) \rangle \nabla\cdot (\rho_t \nabla \Phi_t)d\mathbf{x} 
    \end{split}
    \end{align}
    By $\beta$-strongly geodesic convexity of $F_k$ for $k\in[K]$, we obtain
    \begin{align}
    \label{eqn:strongconvex_ineq_3}
        \frac{\sqrt{\beta^3}}{2}\int \lVert \mathbf{x}-T_t(\mathbf{x}) \rVert^2_2 d\rho_t(\mathbf{x}) &+ \sqrt{\beta} (F_k(\rho_t)-F_k(q))
        \leq -\sqrt{\beta} \int \langle T_t(\mathbf{x})-\mathbf{x}, \nabla \delta_\rho F_k(\rho_t)(\mathbf{x}) \rangle d\rho_t(\mathbf{x})
    \end{align}
    Using the identity $\dot{\rho}_t=-\nabla\cdot (\rho_t\nabla \Phi_t)$ gives
    \begin{align*}
    \begin{split}
        &-  \int \delta_\rho F_k(\rho_t)\nabla \cdot (\rho_t \nabla \Phi_t)d\mathbf{x} + \int \langle \nabla\Phi_t(\mathbf{x}), \partial_t \nabla \Phi_t(\mathbf{x}) \rangle d\rho_t(\mathbf{x})\\
        &- \frac{1}{2}\int \lVert \nabla\Phi_t\rVert^2_2 \nabla \cdot (\rho_t\nabla\Phi_t)d\mathbf{x}\\
        &= \int \langle \Phi_t(\mathbf{x}), \nabla \delta_\rho F_k(\rho_t)(\mathbf{x}) + \partial_t \nabla\Phi_t(\mathbf{x}) +\frac{1}{2}\nabla\lVert\nabla\Phi_t(\mathbf{x}) \rVert^2_2 \rangle d\rho_t(\mathbf{x})\\
        &\leq -\alpha_t \int \lVert \nabla \Phi_t(\mathbf{x}) \rVert^2_2 d\rho_t(\mathbf{x})= -2\sqrt{\beta} \int \lVert \nabla \Phi_t(\mathbf{x}) \rVert^2_2 d\rho_t(\mathbf{x}),
    \end{split}
    \end{align*}
    where we used (\ref{lemma6-inequality}) of the proof of Lemma \ref{lemma:AppendixofTheorem2}.
    We also have 
    \begin{align}
    \label{eqn:strongconvex_ineq_1}
        \begin{split}
            &- \frac{\beta}{2}\int \lVert \mathbf{x}-T_t(\mathbf{x}) \rVert^2_2 \nabla \cdot (\rho_t \nabla \Phi_t)d\mathbf{x}=\beta \int \langle \nabla \Phi_t(\mathbf{x}), \left[ \nabla T_t(\mathbf{x}) - \textbf{I}_d \right](T_t(\mathbf{x})-\mathbf{x}) \rangle d\rho_t(\mathbf{x})\\
            &=\beta \int \langle \nabla \Phi_t(\mathbf{x}), \left[ \nabla T_t(\mathbf{x})\right](T_t(\mathbf{x})-\mathbf{x}) \rangle d\rho_t(\mathbf{x}) - \beta \int \langle \nabla \Phi_t(\mathbf{x}), T_t(\mathbf{x})-\mathbf{x} \rangle d\rho_t(\mathbf{x}).
        \end{split}
    \end{align}
    \begin{align}
    \label{eqn:strongconvex_ineq_2}
        \begin{split}
            & \sqrt{\beta} \int \langle T_t(\mathbf{x})-\mathbf{x}, \nabla \Phi_t(\mathbf{x}) \rangle \nabla\cdot (\rho_t \nabla \Phi_t)d\mathbf{x} = -\sqrt{\beta}\int \langle \nabla\Phi_t(\mathbf{x}), [\nabla T_t(\mathbf{x})-\mathbf{I}]\nabla\Phi_t(\mathbf{x}) \rangle d\rho_t(\mathbf{x})\\
            & -  \sqrt{\beta}\int \langle \nabla\Phi_t(\mathbf{x}),  \nabla^2 \Phi_t(\mathbf{x})(T_t(\mathbf{x})-\mathbf{x})\rangle d\rho_t(\mathbf{x})\\
            &= -\sqrt{\beta}\int \langle T_t(\mathbf{x})-\mathbf{x}, \frac{1}{2}\nabla\lVert\nabla\Phi_t(\mathbf{x}) \rVert^2_2 \rangle d\rho_t(\mathbf{x}) - \sqrt{\beta}\int \langle \nabla \Phi_t(\mathbf{x}), \nabla T_t(\mathbf{x})\nabla\Phi_t(\mathbf{x}) \rangle d\rho_t(\mathbf{x})\\
            &+\sqrt{\beta}\int \lVert \nabla\Phi_t(\mathbf{x})\rVert^2_2 d\rho_t(\mathbf{x})
        \end{split}
    \end{align}
    By (\ref{inequalityfromWang1}), (\ref{inequalityfromWang2}), (\ref{eqn:strongconvex_ineq_3}), (\ref{eqn:strongconvex_ineq_1}) and (\ref{eqn:strongconvex_ineq_2}), we can simplify (\ref{eqn:strongconvex_mainineq}) as follows
    \begin{align}
        \begin{split}
             &e^{-\sqrt{\beta}t}  \dot{\mathcal{E}}_{k}(\rho_t, q) \leq -\sqrt{\beta} \int \langle T_t(\mathbf{x})-\mathbf{x}, \nabla \delta_\rho F_k(\rho_t) \rangle d\rho_t(\mathbf{x})-\frac{\sqrt{\beta}}{2}\int \lVert \nabla \Phi_t(\mathbf{x}) \rVert^2_2 d\rho_t(\mathbf{x})\\
             &+\underbrace{\beta\int \langle \partial_t T_t(\mathbf{x}), T_t(\mathbf{x})-\mathbf{x} \rangle d\rho_t(\mathbf{x}) + \beta\int \langle \nabla \Phi_t(\mathbf{x}), [\nabla T_t(\mathbf{x})](T_t(\mathbf{x})-\mathbf{x}) \rangle d\rho_t(\mathbf{x})}_{=0}\\
             & -  \underbrace{\sqrt{\beta}\left(\int \langle \partial_t T_t, \nabla\Phi_t \rangle  d\rho_t(\mathbf{x}) + \int \langle \nabla \Phi_t, \nabla T_t \nabla\Phi_t \rangle d\rho_t(\mathbf{x})\right)}_{\geq 0}\\
             &-\sqrt{\beta}\int \langle T_t(\mathbf{x})-\mathbf{x}, \frac{1}{2}\nabla \lVert\nabla \Phi_t(\mathbf{x})\rVert^2_2 + 2\sqrt{\beta}\nabla\Phi_t(\mathbf{x})+\partial_t\nabla \Phi_t(\mathbf{x})  \rangle d\rho_t(\mathbf{x})\\
             &= -\sqrt{\beta} \int \langle T_t(\mathbf{x})-\mathbf{x}, \nabla \delta_\rho F_k(\rho_t) \rangle d\rho_t(\mathbf{x})  +\sqrt{\beta} \int \langle T_t(\mathbf{x})-\mathbf{x}, \sum_{l=1}^{K}w_{t,l}\nabla \delta_\rho F_l(\rho_t) \rangle d\rho_t(\mathbf{x})-\frac{\sqrt{\beta}}{2}\int \lVert\nabla\Phi_t(\mathbf{x}) \rVert^2_2 d\rho_t(\mathbf{x})\\
             &= \sqrt{\beta}\mathcal{G}_k(\rho_t, q) - \sqrt{\beta}\sum_{l=1}^{K}w_{t,l}\mathcal{G}_l(\rho_t, q)-\frac{\sqrt{\beta}}{2}\int \lVert\nabla\Phi_t(\mathbf{x}) \rVert^2_2 d\rho_t(\mathbf{x})\\
             & \leq \sqrt{\beta}\mathcal{G}_k(\rho_t, q) - \sqrt{\beta}\min_{l\in[K]}\mathcal{G}_l(\rho_t, q)-\frac{\sqrt{\beta}}{2}\int \lVert\nabla\Phi_t(\mathbf{x}) \rVert^2_2 d\rho_t(\mathbf{x})
        \end{split}
    \end{align}
    which concludes the proof of Lemma \ref{lemma3theorem3}.
    
\end{proof}

Now we are ready to present the proof of Theorem \ref{thm:A-MWGraD_convergencerate} using Lemma \ref{lemma2theorem3} and Lemma \ref{lemma3theorem3}.
\begin{proof} We first focus on proving (\ref{A-MWGrad:ineq1}) for the geodesically convex case.
From Lemma \ref{lemma:deriv_h}, we can see that for $t\geq 0$, then there exists $k$ such that $\mathcal{E}_{\lambda}(\rho_t, q)=\mathcal{E}_{k,\lambda}(\rho_t, q)$ and $\dot{\mathcal{E}}_{\lambda}(t)=\dot{\mathcal{E}}_{k,\lambda}(t)$. Thus, by Lemma \ref{lemma2theorem3}, we obtain
    \begin{equation*}
        \dot{\mathcal{E}}_{\lambda}(\rho_t, q)=\dot{\mathcal{E}}_{k,\lambda}(\rho_t, q)\leq 2t (F_{k}(\rho_t)-F_k(q))-2t (F_{k}(\rho_t)-F_k(q)) + t(3-\alpha)\int \lVert\nabla\Phi_t \rVert^2_2 d\rho_t (\mathbf{x}).
    \end{equation*}
    Choosing $\alpha \geq 3$ gives: $\dot{\mathcal{E}}_{\lambda}(\rho_t, q)\leq 0$, which confirms its non-increasing. Thus, we obtain
    \begin{align*}
        t^2 \min_{k\in[K]}\left(F_k(\rho_t)-F_k(q) \right) \leq \mathcal{E}_{\lambda}(\rho_t, q)\leq \mathcal{E}_{\lambda}(\rho_0, q) &=(\alpha - 1)\int \lVert \mathbf{x}-T_t(\mathbf{x}) \rVert^2_2 d\rho_t(\mathbf{x})\\
        &=(\alpha - 1)\mathcal{W}_2^{2}(\rho_0, q).
    \end{align*}
    We follow similar steps in proof of Theorem \ref{thm:MWGraD_convergencerate}. By Assumption \ref{assumption1}, we obtain
    \begin{equation*}
        \sup_{\textbf{F}^{*}\in \textbf{F}(P^{*}_w\cap\Omega_{\textbf{F}}(\textbf{F}(p_{0})))}\inf_{q\in \textbf{F}^{-1}(\left\{ \textbf{F}^{*}\right\})}\min_{k\in[K]}\left(F_k(p_t)-F_k(q) \right)\leq \frac{(\alpha - 1)R}{t^2}.
    \end{equation*}
    We get
     \begin{equation*}
        \sup_{\textbf{F}^{*}\in \textbf{F}(P^{*}_w\cap\Omega_{\textbf{F}}(\textbf{F}(\rho_{0})))}\min_{k\in[K]}\left\{ F_{k}(\rho_{t})- F^{*}_{k} \right\}\leq \frac{(\alpha - 1)R}{t^2},
    \end{equation*}
    which gives
    \begin{equation*}
        \sup_{q\in P^{*}_w\cap\Omega_{\textbf{F}}(\textbf{F}(\rho_{0}))}\min_{k\in[K]}\left\{ F_{k}(\rho_{t})- F_{k}(q) \right\}\leq \frac{(\alpha - 1)R}{t^2}.
    \end{equation*}
    %Moreover, from Assumption 1 that for any $q\in \Omega_{\textbf{F}}(\textbf{F}(p_{0}))$, there exists $p^{*}\in P^{*}$ such that $\textbf{F}(p^{*})\leq \textbf{F}(q)$, and from Lemma 2 that $F_{k}(p_{t})\leq F_{k}(p_0)$, it follows that
    %\begin{align}
    %\begin{split}
    %    \sup_{q\in P^{*}\cap\Omega_{\textbf{F}}(\textbf{F}(p_{0}))}\min_{k\in[K]}\left\{ F_{k}(p_{t})- F_{k}(q) \right\}&=\sup_{q\in \Omega_{\textbf{F}}(\textbf{F}(p_{0}))}\min_{k\in[K]}\left\{ F_{k}(p_{t})- F_{k}(q) \right\}\\
    %    &=\sup_{q\in \mathcal{P}(\mathcal{X})}\min_{k\in[K]}\left\{ F_{k}(p_{t})- F_{k}(q) \right\}.
    %\end{split}
    %\end{align}
    %Combining (1) and (2) concludes the proof.

    Furthermore, as shown in Corollary \ref{corollary}, $F_k(\rho_t)\leq F_k(\rho_0)$ for all $k$, which ensures that $\rho_t\in \Omega_{\textbf{F}}(\textbf{F}(\rho_0))$. Applying Lemma \ref{lemma:overwholespace}, we conclude the proof of (\ref{A-MWGrad:ineq1}).

    Next we focus on proving (\ref{A-MWGrad:ineq2}) for the $\beta$-strongly geodesically convex case.  Similarly, from Lemma \ref{lemma:deriv_h}, we have that for $t\geq 0$, there exists $k$ such that $\mathcal{E}(\rho_t, q)=\mathcal{E}_k(\rho_t, q)$ and $\dot{\mathcal{E}}(\rho_t, q)=\dot{\mathcal{E}}_k(\rho_t, q)$. Thus, by Lemma \ref{lemma3theorem3}, we obtain
    \begin{align*}
       e^{-\sqrt{\beta}t}\dot{\mathcal{E}}(\rho_t, q)= e^{-\sqrt{\beta}t}\min_{l\in[K]}\dot{\mathcal{E}}_l(\rho_t, q) \leq -\frac{\sqrt{\beta}}{2}\int \lVert\nabla\Phi_t(\mathbf{x}) \rVert^2_2 d\rho_t(\mathbf{x})\leq 0.
    \end{align*}
    This gives: $\dot{\mathcal{E}}(\rho_t, q)\leq 0$, which confirms its non-increasing. Thus, we obtain
    \begin{align*}
        \min_{k\in[K]}\left\{F_k(\rho_t)-F_k(q) \right\}\leq e^{-\sqrt{\beta}t}\left[\min_{l\in[K]}\left\{F_l(\rho_0)-F_l(q) \right\} + \frac{\beta}{2}\mathcal{W}^2_2(\rho_0, q)\right].
    \end{align*}
    Here we remark that for any $q\in P^*_w\cap\Omega_{\textbf{F}}(\textbf{F}(\rho_0))$, we get $
    \min_{l\in[K]}\left\{F_l(\rho_0)-F_l(q) \right\} \leq \min_{l\in[K]} F_l(\rho_0)$ and $\mathcal{W}^2_2(\rho_0, q)\leq R$. Hence,
    \begin{align*}
         \sup_{\textbf{F}^{*}\in \textbf{F}(P^{*}_w\cap\Omega_{\textbf{F}}(\textbf{F}(p_{0})))}\inf_{q\in \textbf{F}^{-1}(\left\{ \textbf{F}^{*}\right\})}\min_{k\in[K]}\left(F_k(\rho_t)-F_k(q) \right)\leq 
         e^{-\sqrt{\beta}t}\left[\min_{l\in[K]}F_l(\rho_0)  + \frac{\beta R}{2}\right].
    \end{align*}
    Finally, applying Lemma \ref{lemma:overwholespace}, we concludes the proof of (\ref{A-MWGrad:ineq2}).

\end{proof}

\section{Detailed Proofs of Theorem \ref{theorem:mwgrad-gaussian} and Theorem \ref{theorem:a-mwgrad-gaussian}}
\label{appendix:Gaussianfamily}
We first collect several identities that will be used throughout the proofs. 

Suppose $\rho=\mathcal{N}(0, \Sigma)$ and $\pi_k=\mathcal{N}(0, \Sigma_*^{k})$ for $k\in[K]$, 

\begin{align}
    \delta_\rho F_k(\rho)(\mathbf{x}) =\frac{1}{2}\log \frac{\det \Sigma_*^{k}}{\det \Sigma} + \frac{1}{2}\mathbf{x}^\top \left[ \left(\Sigma_*^{k} \right)^{-1}-\Sigma^{-1}\right]\mathbf{x}+1,
\end{align}

\begin{equation}
    \label{supporteqn:7}
    \nabla\delta_\rho F_k(\rho)(\mathbf{x}) = \left[ \left(\Sigma_*^{k} \right)^{-1}-\Sigma^{-1}\right]\mathbf{x},
\end{equation}

\begin{equation}
    \label{supporteqn:6}
    \nabla_{\Sigma} F_k^{\dagger}(\Sigma)=\frac{1}{2}\left[ \left(\Sigma_*^{k} \right)^{-1}-\Sigma^{-1}\right].
\end{equation}

We next state two auxiliary lemmas, that will be used to establish the positive-definiteness of the covariance matrix, i.e. $\Sigma_t \succ 0$ for all $t\geq 0$.

\begin{lemma}
\label{lemma:sameeigenvalues}
    Suppose $\mathbf{A}, \mathbf{B}\in \mathbb{R}^{d\times d}$ are positive definite matrices, i.e., $\mathbf{A}\succ 0, \mathbf{B}\succ 0$. Then $\mathbf{A}\left( \mathbf{B}\right)^{-1}$ shares eigenvalues with  $\mathbf{A}^{\frac{1}{2}}\left( \mathbf{B}\right)^{-1}\mathbf{A}^{\frac{1}{2}}$.
\end{lemma}
\begin{proof}
Suppose $\lambda$ is an eigenvalue of $\mathbf{A}^{\frac{1}{2}}\left( \mathbf{B}\right)^{-1}\mathbf{A}^{\frac{1}{2}}$ with eigenvector $\mathbf{v}\neq 0$, i.e.,
\begin{equation*}
    \mathbf{A}^{\frac{1}{2}}\left( \mathbf{B}\right)^{-1}\mathbf{A}^{\frac{1}{2}}\mathbf{v}=\lambda \mathbf{v}.
\end{equation*}
Multiplying on the left by $ \mathbf{A}^{\frac{1}{2}}$ yields
\begin{equation*}
    \mathbf{A}\left( \mathbf{B}\right)^{-1}\mathbf{A}^{\frac{1}{2}}\mathbf{v}=\lambda\left(  \mathbf{A}^{\frac{1}{2}}\mathbf{v}\right).
\end{equation*}
As $\mathbf{A}\succ 0$, its square root is invertible, so $\mathbf{A}^{\frac{1}{2}}\mathbf{v}\neq 0$. Hence $\lambda$ is an eigenvalue of $\mathbf{A}\left( \mathbf{B}\right)^{-1}$.
\end{proof}

\begin{lemma} [Eigenvalue Continuity]
\label{eigenvaluecontinuity}
Let $\mathbf{A}_t\in \mathbb{R}^{d\times d}$ be a family of symmetric matrices whose entries $\left[\mathbf{A}_t \right]_{ij}$ are continuous functions of $t$. Then $t \mapsto \lambda_{\min}(\mathbf{A}_t)$ is continuous, where $\lambda_{\min}(\mathbf{A})$ denotes the smallest eigenvalue of $\mathbf{A}$.
\end{lemma}
\begin{proof}
    Fix any $t_0$. We show $|\lambda_{\min}(\mathbf{A}_t)-\lambda_{\min}(\mathbf{A}_{t_0})|\rightarrow0$ as $t\rightarrow t_0$.
    Since $\mathbf{A}_t$ is symmetric,
    \begin{equation*}
        \lambda_{\min}(\mathbf{A}_t)=\min_{\lVert \mathbf{v}\rVert=1}\mathbf{v}^\top \mathbf{A}_t \mathbf{v}.
    \end{equation*}
    Let $\mathbf{v}^*_t=\argmin_{\lVert \mathbf{v}\rVert=1} \mathbf{v}^\top \mathbf{A}_t \mathbf{v}$ and $\mathbf{v}^*_0=\argmin_{\lVert \mathbf{v}\rVert=1} \mathbf{v}^\top \mathbf{A}_{t_0} \mathbf{v}$. By optimality of each minimizer,

    \begin{equation}
        \mathbf{v}_t^{*\top} \mathbf{A}_t \mathbf{v}_t^{*} \leq \mathbf{v}_0^{*\top} \mathbf{A}_t \mathbf{v}_0^{*}, \mathbf{v}_0^{*\top} \mathbf{A}_{t_0} \mathbf{v}_0^{*} \leq \mathbf{v}_t^{*\top} \mathbf{A}_{t_0} \mathbf{v}_t^{*}.
    \end{equation}
    Subtracting these inequalities yields
    \begin{align*}
        \lambda_{\min}(\mathbf{A}_t)-\lambda_{\min}(\mathbf{A}_{t_0}) &\leq \mathbf{v}_0^{*\top} \left(\mathbf{A}_{t}- \mathbf{A}_{t_0}\right) \mathbf{v}_0^{*},\\
         \lambda_{\min}(\mathbf{A}_{t_0})-\lambda_{\min}(\mathbf{A}_{t}) &\leq \mathbf{v}_t^{*\top} \left( \mathbf{A}_{t_0}-\mathbf{A}_{t}\right) \mathbf{v}_t^{*}.\\
    \end{align*}
    For any unit vector $\mathbf{v}$, by Cauchy--Schwarz inequality, we obtain 
    \begin{equation*}
        |\mathbf{v}^{\top} \left(\mathbf{A}_{t}-\mathbf{A}_{t_0}\right) \mathbf{v}|\leq \lVert (\mathbf{A}_t-\mathbf{A}_{0})\mathbf{v} \rVert\lVert \mathbf{v}\rVert \leq \lVert \mathbf{A}_t - \mathbf{A}_{t_0}\rVert_{\operatorname{op}}\lVert\mathbf{v} \rVert^2=\lVert \mathbf{A}_t - \mathbf{A}_{t_0}\rVert_{\operatorname{op}},
    \end{equation*}
    where $\lVert\mathbf{A}\rVert_{\operatorname{op}}$ is the operator norm of matrix $\mathbf{A}$.
    Therefore,
    \begin{equation*}
        | \lambda_{\min}(\mathbf{A}_t)-\lambda_{\min}(\mathbf{A}_{t_0})|\leq \lVert \mathbf{A}_t - \mathbf{A}_{t_0}\rVert_{\operatorname{op}}.
    \end{equation*}
    Since $\lVert \mathbf{A}_t - \mathbf{A}_{t_0}\rVert_{\operatorname{op}}\rightarrow0$ as $t\rightarrow 0$, it follows that $| \lambda_{\min}(\mathbf{A}_t)-\lambda_{\min}(\mathbf{A}_{t_0})|\rightarrow 0$.
   This establishes the continuity of $\lambda_{\min}(\mathbf{A}_t)$
\end{proof}

Now we present the proofs of Theorem \ref{theorem:mwgrad-gaussian} and Theorem \ref{theorem:a-mwgrad-gaussian}.

\begin{proof} (of Theorem \ref{theorem:mwgrad-gaussian})
    Assuming $\Sigma_t\succ0$, we establish the equivalence between the MWGraD flow (\ref{eqn:MWGraDFlow}) and the covariance dynamics (\ref{eqn:mwgrad_gaussian}). We record two standard matrix calculus identities that will be used repeatedly below \citep{wang2022accelerated}.
    \begin{equation}
    \label{supporteqn:1}
            \frac{\partial}{\partial t}\det (\Sigma_t)=\det(\Sigma_t)\tr(\Sigma_t^{-1}\dot{\Sigma}_t),
    \end{equation}
    \begin{equation}
    \label{supporteqn:2}
            \frac{\partial}{\partial t}\Sigma_t^{-1}=- \Sigma_t^{-1}\dot{\Sigma}_t\Sigma_t^{-1}.
    \end{equation}
    Combining with $\dot{\Sigma_t}=2 (S_t \Sigma_t+\Sigma_t S_t)$ yields
     \begin{equation}
      \label{supporteqn:3}
           \tr(\Sigma_t^{-1}\dot{\Sigma}_t)= 2 \tr(S_t + \Sigma^{-1}_t S_t \Sigma_t)= 4 \tr(S_t),
    \end{equation}

    \begin{equation}
     \label{supporteqn:4}
           \tr(\mathbf{x}^\top\Sigma_t^{-1}\dot{\Sigma}_t\Sigma^{-1}_t \mathbf{x})= 2 \tr(\mathbf{x}^\top\Sigma_t^{-1}S_t \mathbf{x} + \mathbf{x}^\top S_t\Sigma_t^{-1} \mathbf{x})= 4 \tr(S_t \Sigma_t^{-1}\mathbf{x}\mathbf{x}^\top).
    \end{equation}

\noindent
\textbf{Continuity equation}: $\dot{\rho}_t+\nabla \cdot (\rho_t\nabla\Phi_t)=0$.\\

Differentiating $\rho_t$ using (\ref{supporteqn:1})-(\ref{supporteqn:4}) yields
\begin{align}
\label{eqn:step1}
\begin{split}
    \dot{\rho}_t &=\frac{\partial}{\partial t}\left(\frac{1}{\sqrt{\det \Sigma_t}} \right)\sqrt{\det \Sigma_t}\rho_t+\frac{1}{2}\tr(\mathbf{x}\Sigma_t^{-1}\dot{\Sigma}_t \Sigma_t^{-1}\mathbf{x})\rho_t\\
    &= -\frac{1}{2}\tr(\Sigma^{-1}_t\dot{\Sigma}_t)\rho_t + 2 \tr(S_t \Sigma_t^{-1}\mathbf{x}\mathbf{x}^\top)\rho_t\\
    &= -2 \tr(S_t)\rho_t + 2 \tr(S_t \Sigma^{-1}_t\mathbf{x}\mathbf{x}^\top)\\
    &= -2 \tr(S_t (I-\Sigma^{-1}_t \mathbf{x}\mathbf{x}^\top))\rho_t.
\end{split}
\end{align}
Note $\nabla\Phi_t(\mathbf{x})=2 S_t \mathbf{x}$. Expanding the divergence yields
\begin{align}
\label{eqn:step2}
\begin{split}
    -\nabla\cdot (\rho_t \nabla\Phi_t(\mathbf{x}))&= -2 \sum_{i=1}^{d}\partial_i (\rho_t(\mathbf{x})(S_t\mathbf{x})_i)= -2 \sum_{i=1}^{d}\left[(\partial_i\rho_t)(S_t\mathbf{x})_i+\rho_t(\mathbf{x})(S_t)_{ii} \right]\\
    &= -2 \sum_{i=1}^{d}\left[-(\Sigma^{-1}_t\mathbf{x})_i(S_t \mathbf{x})_i + (S_t)_{ii}\right]\rho_t(\mathbf{x})\\
    &= -2 \rho_t(\mathbf{x})\left[-\mathbf{x}^\top \Sigma^{-1}_t S_t\mathbf{x}+\tr(S_t)\right]\\
    &= -2 \tr(S_t(I-\Sigma^{-1}_t\mathbf{x}\mathbf{x}^\top))\rho_t(\mathbf{x}).
\end{split}
\end{align}
Comparing \eqref{eqn:step1} and \eqref{eqn:step2} shows that
\begin{equation*}
    \dot{\rho}_t =- \nabla\cdot(\rho_t\nabla\Phi_t),
\end{equation*}
thereby establishing the continuity equation.

\noindent
\textbf{Projection equation}: $\Phi_{t} + \operatorname{proj}_{\mathcal{C}(\rho_{t}),\rho_t}[0]=0$.

By definition, we obtain

\begin{equation*}
    \operatorname{proj}_{\mathcal{C}(\rho_t), \rho_t}[0]=\argmin_{h\in \mathcal{C}(\rho_t)}\int \lVert \nabla h\rVert^2_2 d\rho_t,
\end{equation*}
where $\mathcal{C}(\rho_t)=\operatorname{conv}\left( \left\{ \delta_{\rho_t}F_{k}(\rho_t) :k=1,2,\dots,K \right\} \right)$.

Using \eqref{supporteqn:7} and \eqref{supporteqn:6}, any function $h\in \mathcal{C}(\rho_t)$ satisfies
\begin{equation*}
    \nabla h(\mathbf{x})=\sum_{k=1}^{K}w_k \left[ \left(\Sigma_*^{k} \right)^{-1}-\Sigma^{-1}_t\right]\mathbf{x}=2 \sum_{k=1}^{K}w_k \nabla F_k^\dagger(\Sigma_t)\mathbf{x},
\end{equation*}
for some $\mathbf{w}\in \mathcal{W}$.

Substituting into the projection integral and using $E_{\rho_t}\left[\mathbf{x}^\top \mathbf{A}\mathbf{x} \right]=\tr(\mathbf{A}\Sigma_t)$ yields
\begin{equation*}
    \int \lVert \nabla h\rVert^2d\rho_t=4 \left\lVert \sum_{k=1}^{K}w_k \nabla F_k^\dagger(\Sigma_t)\right\rVert^2_{\Sigma_t}.
\end{equation*}
Since $4>0$ does not affect the minimizer, i.e.,

\begin{equation*}
    \mathbf{w}_t = \argmin_{\mathbf{w}\in\mathcal{W}}\int \lVert \nabla h\rVert^2 d\rho_t=\argmin_{\mathbf{w}\in\mathcal{W}}\left\lVert \sum_{k=1}^{K}w_k \nabla_{\Sigma_t}F_k^{\dagger}(\Sigma_t)\right\rVert^2_{\Sigma_t}.
\end{equation*}

As $\mathbf{w}_t$ is the optimal solution of the projection problem, the corresponding projected function is
\begin{equation*}
    h^{*}(\mathbf{x})=\sum_{k=1}^{K}w_{t,k}\delta_{\rho_t} F_k(\rho_t)(\mathbf{x})=-\mathbf{x}^\top S_t\mathbf{x}+C(t),
\end{equation*}
where $C(t)$ is a constant with respect to $\mathbf{x}$. Therefore,
\begin{equation*}
    \Phi_t(\mathbf{x})= -\operatorname{proj}_{\mathcal{C}(\rho_t), \rho_t}[0],
\end{equation*}
which establishes the projection equation.\\

It remains to show that $\Sigma_t$ is positive definite for all $t\geq 0$.

\noindent
\textbf{Positive definiteness property}: $\Sigma_t\succ0$, for all $t\geq 0$.

Define the maximal time of positive definiteness:
$T^* = \sup \left\{ T>0: \Sigma_t \succ 0 \text{ for all }t\in[0, T)\right\}$.
Let $t\in [0, T^{*})$. As shown above, the MWGraD flow (\ref{eqn:MWGraDFlow}) and the covariance dynamics (\ref{eqn:mwgrad_gaussian}) are equivalent since $\Sigma_t\succ 0$.

In the proof of Lemma \ref{lemma:nonincreasing2}, we show that the quantity $E_k(\rho_t, q)=F_k(\rho_t)-F_k(q)$ is non-increasing in $t$ for every $k\in[K]$ and any distribution $q$. Replacing $q$ by $\pi_k$, we obtain $E_k(\rho_t, \pi_k)=F_k(\rho_t)$. Then, it follows that 
\begin{equation*}
    F_k^\dagger(\Sigma_t)=F_k(\rho_t)\leq F_k(\rho_0)=F_k^{\dagger}(\Sigma_0).
\end{equation*}
We can rewrite $F_k^\dagger(\Sigma_t)$ as follows

\begin{equation*}
    F_k^\dagger(\Sigma_t)=\frac{1}{2}\left[\tr(\Sigma_t (\Sigma_*^{k})^{-1}) -\log \det \left(\Sigma_t (\Sigma_*^{k})^{-1} \right)-d\right]=\frac{1}{2}\sum_{i=1}^{d}\ell(\mu_i),
\end{equation*}
where $\mu_1,\mu_2,\dots,\mu_d$ are the eigenvalues of $\Sigma_t (\Sigma_*^{k})^{-1}$ 
and $\ell(\mu)=\mu-\log\mu-1\geq 0$. 
Let $\mu_{\min}=\min_{i\in[d]}\mu_i$, we get
\begin{equation*}
    0\leq \frac{1}{2}\ell(\mu_{\min})\leq \frac{1}{2}\sum_{i=1}^{d}\ell(\mu_i)\leq F_k^\dagger(\Sigma_0).
\end{equation*}
Since $\ell(\mu)\rightarrow +\infty$ as $\mu\rightarrow 0$, the bound
\begin{equation*}
    \ell(\mu_{\min})\leq 2 F_k^\dagger(\Sigma_0)
\end{equation*}
implies the existence of a constant $c_1>0$ such that $\mu_{\min}\geq c_1$.

By Lemma \ref{lemma:sameeigenvalues}, $\Sigma_t (\Sigma_*^{k})^{-1}$ shares its eigenvalues with the symmetric matrix $\Sigma_t^{1/2} (\Sigma_*^{k})^{-1}\Sigma_t^{1/2}$. Using $(\Sigma_*^{k})^{-1}\preceq \lVert (\Sigma_*^{k})^{-1}\rVert_{\operatorname{op}} \textbf{I}_d$, we have
\begin{equation*}
    \mu_{\min}=\min_{\lVert \mathbf{v}\rVert=1}\mathbf{v}^\top \Sigma_t^{1/2} (\Sigma_*^{k})^{-1}\Sigma_t^{1/2} \mathbf{v}\leq \lVert (\Sigma_*^{k})^{-1}\rVert_{\operatorname{op}}\min_{\lVert \mathbf{v}\rVert=1}\mathbf{v}^\top \Sigma_t \mathbf{v}=\lVert (\Sigma_*^{k})^{-1}\rVert_{\operatorname{op}} \lambda_{\min}(\Sigma_t).
\end{equation*}
Therefore,

\begin{equation*}
    \lambda_{\min}(\Sigma_t)\geq \frac{\mu_{\min}}{\lVert (\Sigma_*^{k})^{-1}\rVert_{\operatorname{op}}}\geq \frac{c_1}{\lVert (\Sigma_*^{k})^{-1}\rVert_{\operatorname{op}}} =: c>0.
\end{equation*}
Suppose $T^*< \infty$. By continuity of eigenvalues (Lemma \ref{eigenvaluecontinuity}),
\begin{equation*}
     \lambda_{\min}(\Sigma_{T^*})=\lim_{t\rightarrow T^*}\lambda_{\min}(\Sigma_t)\geq c>0. 
\end{equation*}
So $\Sigma_{T^*}\succ 0$, which contradicts the maximality of $T^*$. Therefore $T^*=\infty$, hence $\Sigma_t\succ 0$ for all $t\geq 0$.

\end{proof}

\begin{proof} (of Theorem \ref{theorem:a-mwgrad-gaussian})
    Assuming $\Sigma_t\succ 0$, we establish  the equivalence between the A-MWGraD flow (\ref{eqn:A-MWGraDFlow}) and the covariance dynamics (\ref{eqn:a-mwgrad_gaussian}).\\
\noindent
\textbf{Continuity Equation}: $\dot{\rho}_t+\nabla\cdot (\rho_t\nabla\Phi_t)=0$.\\
Note that $\dot{\Sigma_t}=2 (S_t \Sigma_t+\Sigma_t S_t)$ and $\nabla \Phi_t(\mathbf{x})=2 S_t\mathbf{x}$. Repeating the derivation in \eqref{eqn:step1}--\eqref{eqn:step2}, we obtain $\dot{\rho}_t=-\nabla\cdot (\rho_t\nabla\Phi_t)$.

\noindent
\textbf{Hamilton-Jacobi Equation}: $\dot{\Phi}_t + \alpha_t\Phi_{t} +\frac{1}{2}\lVert\nabla\Phi_t \rVert^2_2 +\operatorname{proj}_{\mathcal{C}(\rho_{t}),\rho_t}[0]=0$.\\

Since $\dot{\Phi}_t = \mathbf{x}^\top \dot{S}_t\mathbf{x}+\dot{C}_t$, and using 
$\dot{S}_t +\alpha_t S_t + 2 S_t^2 =- \sum_{k=1}^{K}w_{t,k}\nabla_{\Sigma_t}F_k^{\dagger}(\Sigma_t)$, $\lVert \nabla \Phi_t(\mathbf{x})\rVert^2_2=\lVert 2 S_t \mathbf{x}\rVert^2_2=4 \mathbf{x}^\top S_t^2\mathbf{x}$:
\begin{align*}
    \dot{\Phi}_t + \alpha_t \Phi_t + \frac{1}{2}\lVert \nabla \Phi_t \rVert^2_2&=\mathbf{x}^\top \dot{S}_t \mathbf{x} + \alpha_t \mathbf{x}^\top S_t \mathbf{x}+\dot{C}_t+\alpha_t C(t)+2 \mathbf{x}^\top S_t^2 \mathbf{x}\\
    &= \mathbf{x}^\top \left[\dot{S}_t +\alpha_t S_t + 2 S_t^2 \right]\mathbf{x}+\dot{C}_t+\alpha_t C(t)\\
    &= \mathbf{x}^\top \left[-\sum_{k=1}^{K}w_{t,k}\nabla_{\Sigma_t}F_k^{\dagger}(\Sigma_t) \right]\mathbf{x} -1+ \frac{1}{2}\sum_{k=1}^{K}w_{t,k}\log \det \left( \Sigma_t (\Sigma_*^k)^{-1}\right)+\alpha_t C(t)\\
    &= \sum_{k=1}^{K}w_{t,k}\left[-\mathbf{x}^\top \frac{1}{2}\left( \left(\Sigma_*^{k} \right)^{-1}-\Sigma^{-1}\right)\mathbf{x}-\frac{1}{2}\log \frac{\det \Sigma_*^{k}}{\det \Sigma_t}-1\right]+\alpha_t C(t)\\
    &= -\sum_{k=1}^{K}w_{t,k}\delta_{\rho_t}F_k(\rho_t) + \alpha_t C(t),
\end{align*}
which means that $-\left(\dot{\Phi}_t + \alpha_t \Phi_t + \frac{1}{2}\lVert \nabla \Phi_t \rVert^2_2\right)\in \mathcal{C}(\rho_t)$. Furthermore, as shown in the proof of Theorem \ref{theorem:mwgrad-gaussian},
    \begin{equation*}
    \mathbf{w}_t =\argmin_{\mathbf{w}\in\mathcal{W}}\left\lVert \sum_{k=1}^{K}w_k \nabla_{\Sigma_t}F_k^{\dagger}(\Sigma_t)\right\rVert^2_{\Sigma_t}=\argmin_{\mathbf{w}\in\mathcal{W}}\int \left\lVert \nabla \delta_{\rho_t}F_k(\rho_t)\right\rVert^2_2 d\rho_t,
\end{equation*}
which means that the second equation holds.

\noindent
\textbf{Positive definiteness property}: $\Sigma_t\succ0$, for all $t\geq 0$.\\

Similarly, we define $T^* = \sup \left\{ T>0: \Sigma_t \succ 0 \text{ for all }t\in[0, T)\right\}$, and prove that $T^*=\infty$. Suppose $t\in [0, T^*)$.
As shown above, the A-MWGraD flow (\ref{eqn:A-MWGraDFlow}) is equivalent to the covariance dynamics (\ref{eqn:a-mwgrad_gaussian}) whenever  $\Sigma_t\succ 0$. 
Furthermore, The corollary \ref{corollary} shows that $F_k(\rho_t)\leq F_k(\rho_0)$ for $k\in[K]$, implying that $F_k^\dagger(\Sigma_t)$ is upper bounded by a constant

\begin{equation*}
    F_k^\dagger(\Sigma_t)= F_k(\rho_t) \leq W_k(\rho_t)\leq W_k(\rho_0).
\end{equation*}
The remainder of the argument is identical to that used in the proof of Theorem \ref{theorem:mwgrad-gaussian}. In particular, $\lambda_{\min}(\Sigma_t)> 0$ on $[0,T^*)$, which implies  $T^*=\infty$. Therefore, $\Sigma_t\succ 0$ for all $t\geq 0$.

\end{proof}

\section{Detailed Derivation of (\ref{eqn:particle})}
\label{Appendix:derv}
Starting from (\ref{eqn:A-MWGraDFlow}), we have
\begin{equation*}
    \dot{\rho_t} + \nabla \cdot (\rho_t \nabla \Phi_t)=0,
\end{equation*}
which is the continuity equation of $\rho_t$. At the particle level, each particle $\mathbf{x}_t$ evolves according to
\begin{equation*}
    \dot{\mathbf{x}_t}=\nabla \Phi_t(\mathbf{x}_t).
\end{equation*}
Let define the velocity $\mathbf{v}_t=\nabla \Phi_t(\mathbf{x}_t)$, its time derivative is
\begin{align*}
    \dot{\mathbf{v}_t}&=\partial_t\nabla \Phi_t(\mathbf{x}_t) + [\nabla^2 \Phi_t (\mathbf{x}_t)]\nabla \Phi_t(\mathbf{x}_t)\\
    &= \left(-\alpha_t \nabla \Phi_t(\mathbf{x}_t)- [\nabla^2 \Phi_t (\mathbf{x}_t)]\nabla \Phi_t(\mathbf{x}_t)-\sum_{k=1}^{K}w_{t,k} \nabla\delta_\rho F_k(\rho_t)(\mathbf{x}_t)\right) + [\nabla^2 \Phi_t (\mathbf{x}_t)]\nabla \Phi_t(\mathbf{x}_t)\\
    &=-\alpha_t  \mathbf{v}_t-\sum_{k=1}^{K}w_{t,k} \nabla\delta_\rho F_k(\rho_t)(\mathbf{x}_t).
\end{align*}
This gives the particle-level evolution equation (\ref{eqn:particle}) as desired.
%\newpage

\section{Algorithms: MWGraD and A-MWGraD}
\label{appendix:algorithm}
\begin{algorithm}[H]
\label{alg:mwgrad}
\SetAlgoLined
\KwIn{Functionals $\left\{F_{k}\right\}_{k=1}^{K}$, number of particles $m$, number of iterations $T$, step sizes $\eta>0$.}
\KwOut{a set of $m$ particles $\left\{\textbf{x}^{(T)}_{i} \right\}_{i=1}^{m}$.}
Sample $m$ initial particles $\left\{\textbf{x}^{(0)}_{i} \right\}_{i=1}^{m}$ from $\mathcal{N}(0, \textbf{I}_d)$.\\
$n\leftarrow 0$\\
\While{$n< T$}{
     Estimate $\bar{\Delta}^{(n)}_k(\mathbf{x}^{(n)}_i)$ by (\ref{approximate-svgd}) or (\ref{approximate-blob}), for $i\in[m]$, $k\in [K]$

     %Compute $\Tilde{\textbf{v}}^{(t)}(\textbf{x}^{(t)}_{i})\leftarrow\sum_{k=1}^{K}w^{(t)}_{k}\Tilde{\textbf{v}}^{(t)}_{k}(\textbf{x}^{(t)}_{i})$, for $i\in [m]$
     Compute $\mathbf{w}_{n}$ by solving (\ref{eqn:solveapproximatew})
     
     Update $\textbf{x}^{(n+1)}_{i}\leftarrow \textbf{x}^{(n)}_{i} - \eta \sum_{k=1}^{K}w_{n,k}\bar{\Delta}^{(n)}_k(\mathbf{x}^{(n)}_i)$, for $i\in[m]$
     
    % Update $\textbf{w}^{(t)}$ by (\ref{eqn:updatew})
    $n\leftarrow n+1$
    }
 \caption{Multi-objective Wasserstein Gradient Descent (\textbf{MWGraD}) \citep{10.5555/3762387.3762523}}
\end{algorithm}

\begin{algorithm}[H]
\label{alg:amwgrad}
\SetAlgoLined
\KwIn{Functionals $\left\{F_{k}\right\}_{k=1}^{K}$, number of particles $m$, number of iterations $T$, step sizes $\eta>0$.}
\KwOut{a set of $m$ particles $\left\{\mathbf{x}^{(T)}_{i} \right\}_{i=1}^{m}$.}
%Sample $m$ initial particles $\left\{\mathbf{x}^{(0)}_{i} \right\}_{i=1}^{m}$ from $\mathcal{N}(0, \mathbf{I}_{d})$, initialize velocities\\
Sample $m$ initial particles $\mathbf{x}^{(0)}_i$ from $\mathcal{N}(0, \mathbf{I}_{d})$ and set initial velocities as $\mathbf{v}^{(0)}_i=0$, for $i\in[m]$.

$n\leftarrow 0$\\
\While{$n< T$}{
     Estimate $\bar{\Delta}^{(n)}_k(\mathbf{x}^{(n)}_i)$ by (\ref{approximate-svgd}) or (\ref{approximate-blob}), for $i\in[m]$, $k\in [K]$.

     %Compute $\Tilde{\textbf{v}}^{(t)}(\textbf{x}^{(t)}_{i})\leftarrow\sum_{k=1}^{K}w^{(t)}_{k}\Tilde{\textbf{v}}^{(t)}_{k}(\textbf{x}^{(t)}_{i})$, for $i\in [m]$
     Compute $\mathbf{w}_{n}$ by solving (\ref{eqn:solveapproximatew}).

     Set $\alpha_n =\begin{cases}
      \frac{1-\sqrt{\beta\eta}}{1+\sqrt{\beta\eta}}, & \text{if } F_{k}   \text{ are }\beta\text{-strongly geodesically convex, for all } k\in[K] \\
      \frac{n-1}{n+2}, & \text{if  } F_{k}  \text{ are  geodesically convex or }\beta \text{ is unknown, for all }k\in[K].
    \end{cases}
     $
     
     Update $\mathbf{x}^{(n+1)}_i\leftarrow \mathbf{x}^{(n)}_i+\sqrt{\eta} \mathbf{v}^{(n)}_i$.
     
     Update $\textbf{v}^{(n+1)}_{i}\leftarrow \alpha_n\textbf{v}^{(n)}_{i} - \sqrt{\eta} \sum_{k=1}^{K}w_{n,k}\bar{\Delta}^{(n)}_k(\mathbf{x}^{(n)}_i)$, for $i\in[m]$.
     
    % Update $\textbf{w}^{(t)}$ by (\ref{eqn:updatew})
    $n\leftarrow n+1$.
    }
 \caption{Accelerated Multi-objective Wasserstein Gradient Descent (\textbf{A-MWGraD}) (\textbf{This work})}
\end{algorithm}

\label{Appendix:addedresults}
\begin{figure*}
\centering
\begin{subfigure}{0.49\textwidth}
\label{fig:first}
    \includegraphics[width=\textwidth]{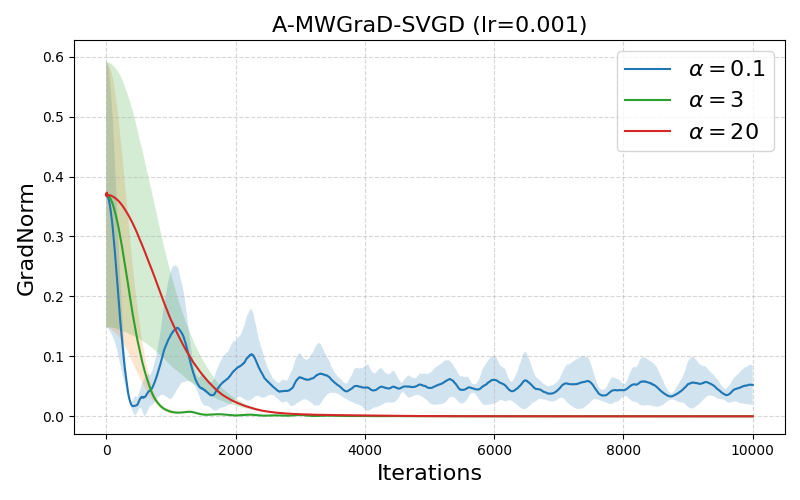}
    \caption{}
\end{subfigure}
%\hfill
\begin{subfigure}{0.49\textwidth}
 \label{fig:second}
    \includegraphics[width=\textwidth]{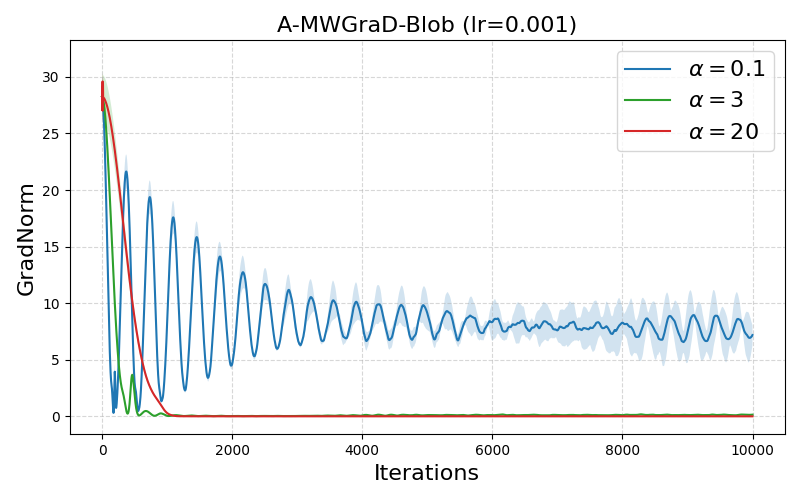}
    \caption{}
\end{subfigure}
\caption{Convergence comparison: (a) A-MWGraD-SVGD, (b) A-MWGraD-Blob in terms of momentum parameter $\alpha$. The plot show mean and standard deviation (averaged over 5 trials) of $\operatorname{GradNorm}$ over 10000 iterations with three different momentum parameters $\alpha$ of $0.1, 3, 20$. The step size $\eta$ is fixed as 0.001.}
\label{fig:momentumparameter}
\end{figure*}

\section{Additional Experimental Results}
We conduct ablation studies to evaluate the sensitivity of A-MWGraD to the momentum parameter, kernel bandwidth, particle count, and the cost of combining multiple Wasserstein gradients.

\subsection{Momentum parameter $\alpha$}
First, we investigate the effect of the momentum parameter (or damping coefficient) $\alpha$ (see Theorem \ref{thm:A-MWGraD_convergencerate}) on the convergence behavior and stability of the proposed accelerated algorithms. 

We consider the multi-target sampling problem involving four target distributions $\pi_k=\mathcal{N}(\mu_k,\Sigma_k)$ ($k\in[4]$), where each target is Gaussian. Consequently, the objective functions $F_k(\rho)=KL(\rho,\pi_k)$ are geodesically convex. The parameters of the target distributions are given by
\begin{align*}
    \mu_1&=[2.83, -2.74], \Sigma_1=[[4.19, -3.44],[-3.44, 4.70]], \mu_2=[-2.83, 2.89], \Sigma_2=[[4.19, -3.03],[-3.03, 3.87]],\\
\mu_3&=[-2.68, -2.92], \Sigma_3=[[5.07, 3.33],[3.33, 3.72]], \mu_4=[2.74, 2.89], \Sigma_4=[[4.70, 3.26],[3.26, 3.87]].
\end{align*}
The distribution $\rho$ is represented by 50 particles initialized from a standard Gaussian distribution. We update particles using A-MWGraD-SVGD and A-MWGraD-Blob. For both methods, the momentum parameter is set as $\alpha_n=(n+2-\alpha)/(n+2)$, which corresponds to the continuous-time damping coefficient $\alpha_t=\alpha/t$ in Theorem \ref{thm:A-MWGraD_convergencerate}. We measure the squared norm of the combined Wasserstein gradients \eqref{m-squared-W2-norm}, whose vanishing characterizes convergence to a weakly Pareto-optimal solution. 

We consider $\alpha\in\left\{ 0.1, 0.5,  3.0, 10, 20\right\}$ and plot the evolution of this quantity over 10,000 iterations in Figure \ref{fig:momentumparameter}. For clarity, we show the representative cases $\alpha=0.1$, $\alpha=3$, and $\alpha=20$. The results reveal a dependence on the momentum parameter. For small values of $\alpha$ (e.g., $\alpha=0.1$), we observe large oscillations near the optimum, where the squared norm is close to zero, and repeatedly overshoot the solution. In contrast, when $\alpha$ is large (e.g., $\alpha=20$), A-MWGraD behaves similarly to the (unaccelerated) MWGraD, and the acceleration effect largely disappears. Among the tested values, $\alpha=3$ provides the fastest convergence. 

These observations are consistent with our continuous-time analysis. In the proof of Theorem \ref{thm:A-MWGraD_convergencerate} (see Appendix \ref{Appendix:prooftheorem2}), we establish that $
        \dot{\mathcal{E}}_{k,\lambda}(\rho_t, q) \leq  t(3-\alpha)\int \lVert\nabla\Phi_t \rVert^2_2 d\rho_t.
        $
Therefore, to guarantee  $\dot{\mathcal{E}}_{k,\lambda}(\rho_t, q) \leq0$, it is sufficient to require $\alpha\geq 3$. This theoretical result provides explains the following interpretation of the empirical behavior:
\begin{itemize}
    \item $\alpha<3$: $\mathcal{E}_{k,\lambda}$ may increase over time, resulting in oscillations and overshooting.
    \item $\alpha=3$: the bound becomes tight, yielding the strongest acceleration predicted by the theory.
    \item $\alpha>3$: stronger damping weakens the momentum effect, so the trajectory behaves closer to the MWGraD flow.
\end{itemize}
We further repeat the same study on the toy examples considered in Section \ref{sec:exps}, where the targets are Gaussian mixtures. Similar qualitative behavior is observed, despite the fact that the corresponding objective functions are generally not geodesically convex and therefore fall outside the scope of our theoretical analysis. Based on these observations, we set $\alpha=3$ in the remaining experiments.%Understanding the acceleration mechanism in such non-convex distributional optimization problems remains an interesting direction for future work.

\subsection{Kernel bandwidth}

We evaluate the RBF kernel with bandwidth $\sigma \in\left\{ 0.1, 1.0, 10.0, 100.0\right\}$ on Multi-MNIST and Multi-Fashion, reporting the average testing accuracies across two tasks.

\begin{table}[h]
\centering
\begin{minipage}{0.48\textwidth}
    \centering
    \caption{Multi-MNIST: Effect of kernel bandwidth $\sigma$}
    \label{tab:kernel_mnist}
    \begin{tabular}{lcccc}
    \toprule
    $\sigma$ & \textbf{0.1} & \textbf{1} & \textbf{10} & \textbf{100} \\
    \midrule
    A-MWGraD-SVGD & 92.8 & 94.3 & 93.9 & 93.3 \\
    A-MWGraD-Blob  & 92.5 & 94.2 & 94.3 & 93.2 \\
    \bottomrule
    \end{tabular}
\end{minipage}
\hfill
\begin{minipage}{0.48\textwidth}
    \centering
    \caption{Multi-Fashion: Effect of kernel bandwidth $\sigma$}
    \label{tab:kernel_fashion}
    \begin{tabular}{lcccc}
    \toprule
    $\sigma$ & \textbf{0.1} & \textbf{1} & \textbf{10} & \textbf{100} \\
    \midrule
    A-MWGraD-SVGD & 85.7 & 86.3 & 86.1 & 85.2 \\
    A-MWGraD-Blob  & 84.9 & 86.4 & 86.6 & 85.1 \\
    \bottomrule
    \end{tabular}
\end{minipage}
\end{table}

As shown in Tables~\ref{tab:kernel_mnist} and~\ref{tab:kernel_fashion}, performance is stable for $\sigma = 1$ and $10$, while very small or large bandwidths ($\sigma = 0.1$ or $100$) slightly degrade accuracy.

\subsection{Particle count}

We vary the number of particles $K \in\left\{ 2, 5, 10, 20\right\}$.

\begin{table}[h]
\centering
\begin{minipage}{0.48\textwidth}
    \centering
    \caption{Multi-MNIST: Effect of particle count $K$}
    \label{tab:particle_mnist}
    \begin{tabular}{lcccc}
    \toprule
    $K$ & \textbf{2} & \textbf{5} & \textbf{10} & \textbf{20} \\
    \midrule
    A-MWGraD-SVGD & 92.8 & 94.3 & 94.1 & 94.5 \\
    A-MWGraD-Blob  & 92.2 & 94.2 & 94.6 & 94.1 \\
    \bottomrule
    \end{tabular}
\end{minipage}
\hfill
\begin{minipage}{0.48\textwidth}
    \centering
    \caption{Multi-Fashion: Effect of particle count $K$}
    \label{tab:particle_fashion}
    \begin{tabular}{lcccc}
    \toprule
    $K$ & \textbf{2} & \textbf{5} & \textbf{10} & \textbf{20} \\
    \midrule
    A-MWGraD-SVGD & 84.6 & 86.3 & 86.2 & 86.1 \\
    A-MWGraD-Blob  & 85.1 & 86.4 & 86.2 & 86.6 \\
    \bottomrule
    \end{tabular}
\end{minipage}
\end{table}

When $K=2$, the performance decreases. Increasing beyond $5$ provides limited improvement while significantly increasing training cost, so $K=5$ offers a good trade-off.

\subsection{Number of objectives}
Both MWGraD and A-MWGraD require solving for $\mathbf{w}$ to combine multiple Wasserstein gradients at each iteration. We evaluate the computational overhead for solving $\mathbf{w}$. We conduct an ablation with increasing numbers of objectives $K \in \{2, 4, 10, 20\}$. We measure the ratio between the runtime required to solve for $\mathbf{w}$ and the total runtime of one iteration. The results, averaged over 5 trials, are shown in Table~\ref{tab:runtime_ratio}.

\begin{table}[h]
\centering
\caption{Ratio of $w$-solving runtime to total iteration runtime, averaged over 5 trials.}
\label{tab:runtime_ratio}
\begin{tabular}{lcccc}
\toprule
$K$ & \textbf{2} & \textbf{4} & \textbf{10} & \textbf{20} \\
\midrule
MWGraD-SVGD   & 0.1238 & 0.4379 & 0.7664 & 0.7894 \\
A-MWGraD-SVGD & 0.0728 & 0.3636 & 0.6907 & 0.6842 \\
MWGraD-Blob   & 0.1168 & 0.3812 & 0.4551 & 0.6812 \\
A-MWGraD-Blob & 0.0798 & 0.3703 & 0.4457 & 0.6724 \\
\bottomrule
\end{tabular}
\end{table}

We observe that this ratio increases as the number of objectives grows. This indicates that when $K$ is large, the cost of solving for $\mathbf{w}$ can become a dominant component of each iteration, and more efficient strategies for computing $\mathbf{w}$ is desirable. In fact, the original MWGraD \citep{10.5555/3762387.3762523} proposes updating $\mathbf{w}$ iteratively rather than solving the optimal one at each iteration, which can reduce the computational cost when $K$ is large. Considering such strategies is one future work when we have problems with large $K$ objectives.

\begin{figure}
\centering
\includegraphics[width=0.9\textwidth]{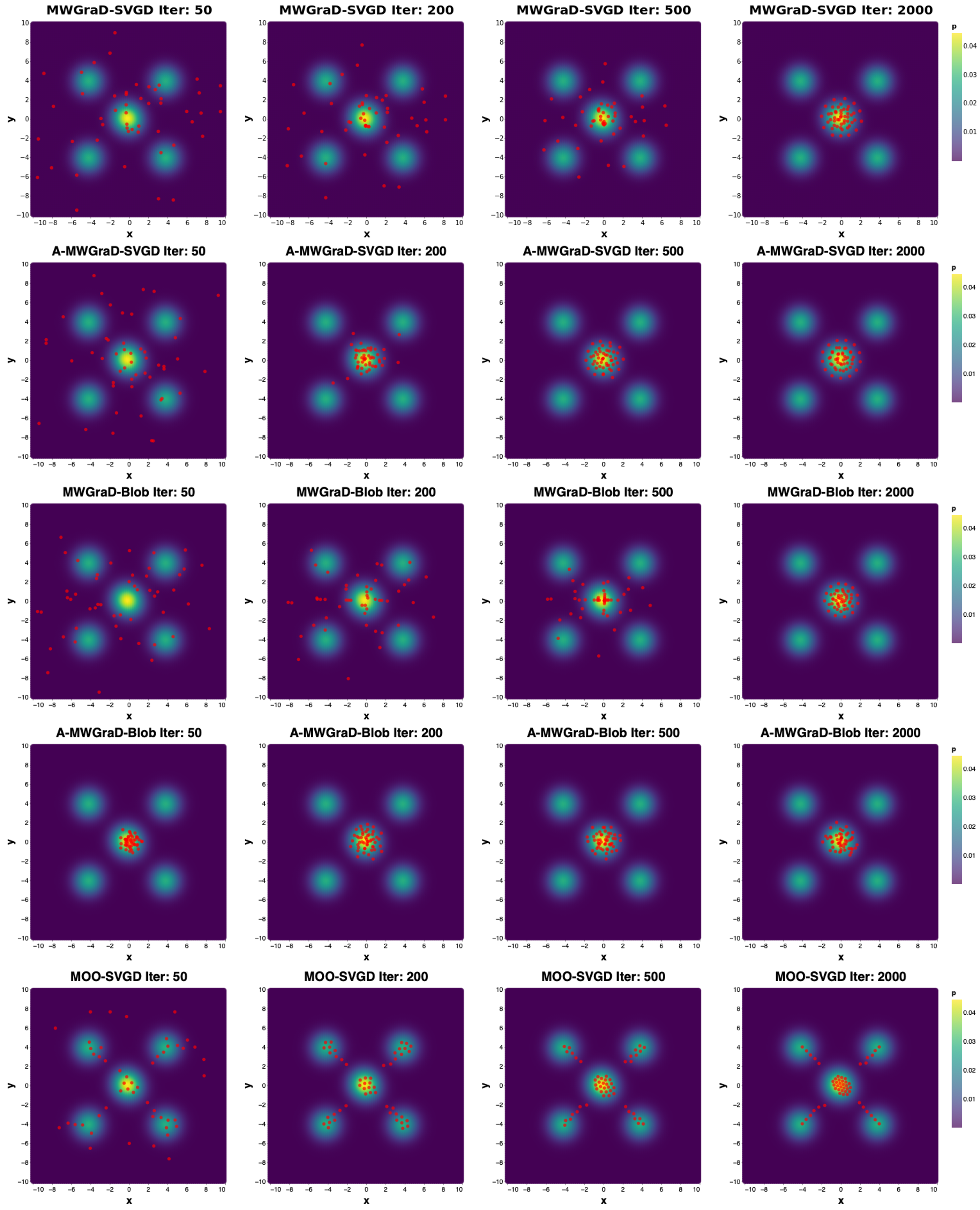}
\caption{Sampling from multiple target distributions, where each target is a mixture of two Gaussians. These targets have a joint high-density region around the origin. Initially, 50 particles are sampled from the standard distribution, and then updated by MWGraD variants, A-MWGraD variants, and MOO-SVGD. While MOO-SVGD tends to scatter particles across all the modes (see the last row), MWGraD and A-MWGraD variants tend to move particles towards the joint high-density region. Furthermore, A-MWGraD variants converge faster than MWGraD variants.}
\label{fig:vis}
\end{figure}

\end{document}